\documentclass[11pt]{article}
\usepackage[T1]{fontenc}
\pdfoutput=1

\usepackage{arydshln}
\usepackage{multicol}
\usepackage{booktabs}
\usepackage{jcd}
\usepackage{conformal_style}
\usepackage{xspace}
\usepackage{overpic}
\usepackage{tikz}
\usepackage{authblk}
\usepackage{subcaption}
\usepackage[font=small, labelfont=bf, margin=1.5em]{caption}
\usemedgeometry
\bluehyperref
\usepackage{natbib}
\usepackage{pgf}
\usepackage{pgfplots}
\usepackage{ifthen}
\usetikzlibrary{shapes.geometric, arrows}

\pgfplotsset{compat=1.11}
\usepgfplotslibrary{fillbetween}
\usetikzlibrary{intersections}
\usetikzlibrary{arrows,automata}
\usetikzlibrary{positioning}

\tikzstyle{startstop} = [rectangle, rounded corners, minimum width=3cm, minimum height=1cm,text centered, draw=black, fill=white!30]
\tikzstyle{arrow} = [thick,->,>=stealth]

\renewenvironment{abstract}{%
    \vspace{0.25em}
    \begin{center}%
        {\bfseries \abstractname \vspace{-0.5em}}%
    \end{center}%
    \quotation \small
}{%
    \endquotation
}

\makeatletter
\renewcommand{\keywords}[1]{%
    \begin{quote}
    \small \textbf{Keywords:} 
    \def\and{\unskip, } 
    #1
    \end{quote}
}
\makeatother

\title{Support Tokens, Stability Margins, and \\ a New Foundation for Robust LLMs \vspace{1em}}

\author[1]{Deepak Agarwal}
\author[1]{Dhyey Dharmendrakumar Mavani}
\author[1]{Suyash Gupta}
\author[1]{Karthik Sethuraman}
\author[1]{Tejas Dharamsi}

\affil[1]{LinkedIn, Mountain View, CA \\ \protect\linebreak \texttt{\{dagarwal, dmavani, suyagupta, ksethuraman, tdharamsi\}@linkedin.com}}

\date{\vspace{-3em}} 

\begin{document}

\newcommand{\paperVersion}{arxiv}  

\maketitle

\begin{abstract}
  Self-attention is usually described as a flexible, content-adaptive way to mix a token with information from its past. We reinterpret causal self-attention transformers, the backbone of modern foundation models, within a probabilistic framework, much as classical PCA is extended to probabilistic PCA. This reformulation reveals a key structural consequence of the underlying change of variables: a barrier constraint emerges on the parameters of self-attention. The resulting geometry exposes a degeneracy boundary where the attention-induced mapping becomes locally ill-conditioned, yielding a \emph{stability-margin} interpretation analogous to the margin in support vector machines. This, in turn, naturally gives rise to the concept of \emph{support tokens}.

We further show that causal transformers define a consistent stochastic process over infinite token sequences, providing a rigorous probabilistic foundation for sequence modeling. Building on this view, we derive a Bayesian MAP training objective that requires only a minimal modification to standard LLM training: adding a smooth log-barrier penalty to the usual cross-entropy loss. Empirically, the resulting training objective improves robustness to input perturbations and sharpens the margin geometry of the learned representations without sacrificing out-of-sample accuracy.

\end{abstract}

\keywords{Large Language Models \and Robustness \and Causal Attention \and Large Margins \and Stochastic Processes \and Bayesian prior \and MAP \and Jacobian \and Probabilistic Models}


\section{Introduction}
\label{sec:introduction}

Transformers are the default architecture for sequence modeling across language, vision, and many other domains \citep{VaswaniShPaUsJoGoKaPo17,BrownMaRyEtAl20,DosovitskiyBeKoWeZhUnDe21}. The heart of a transformer model is causal self-attention, which is often described as a content-adaptive weighted average: each token mixes information from its past according to similarity scores. Moving beyond this colloquial description, we ask a more formal question:

\begin{quote}
\emph{Does causal self-attention admit an explicit probabilistic interpretation, and if so, what does that interpretation imply about the geometry and inductive bias of the model?}
\end{quote}

In this paper, we give a concrete answer. We show that causal self-attention can be interpreted as a probabilistic model on embeddings (treated as latent variables). A key consequence of this probabilistic formulation is a barrier constraint on the parameters of self-attention. 
This constraint induces a geometry with important ramifications: it reveals boundaries in the latent space where the attention-induced mapping becomes locally ill-conditioned. 
Using these boundaries, we introduce the \emph{margin to degeneracy} --- a novel measure of how far each sequence position is from an unstable boundary.

Crucially, our theory reveals that this geometry is dictated by the sign of the effective attention coupling. Under negative coupling, this boundary disappears, and the model promotes sequence dispersion. However, under positive coupling, a hard barrier emerges that explicitly penalizes high attention-weighted variance. Because this positive-coupling regime naturally yields a large-margin interpretation, it forms the primary focus of our framework. 

In this barrier regime, the stability of a sequence is not uniform; it is governed by its most vulnerable positions. Tokens whose contexts drive them closest to the degeneracy boundary act as bottlenecks. Because they dominate the barrier penalty in the log-likelihood, these specific tokens dictate the geometric conditioning of the entire sequence. We term these \emph{support tokens}, drawing a direct analogy to support vectors in large-margin classifiers \citep{CortesVa95}: just as support vectors are the critical points that define a decision boundary, support tokens are the critical sequence positions that define the stability margin (Figure~\ref{fig:margin_schematic}).

\begin{figure}[H]
    \centering
    \includegraphics[width=0.85\linewidth]{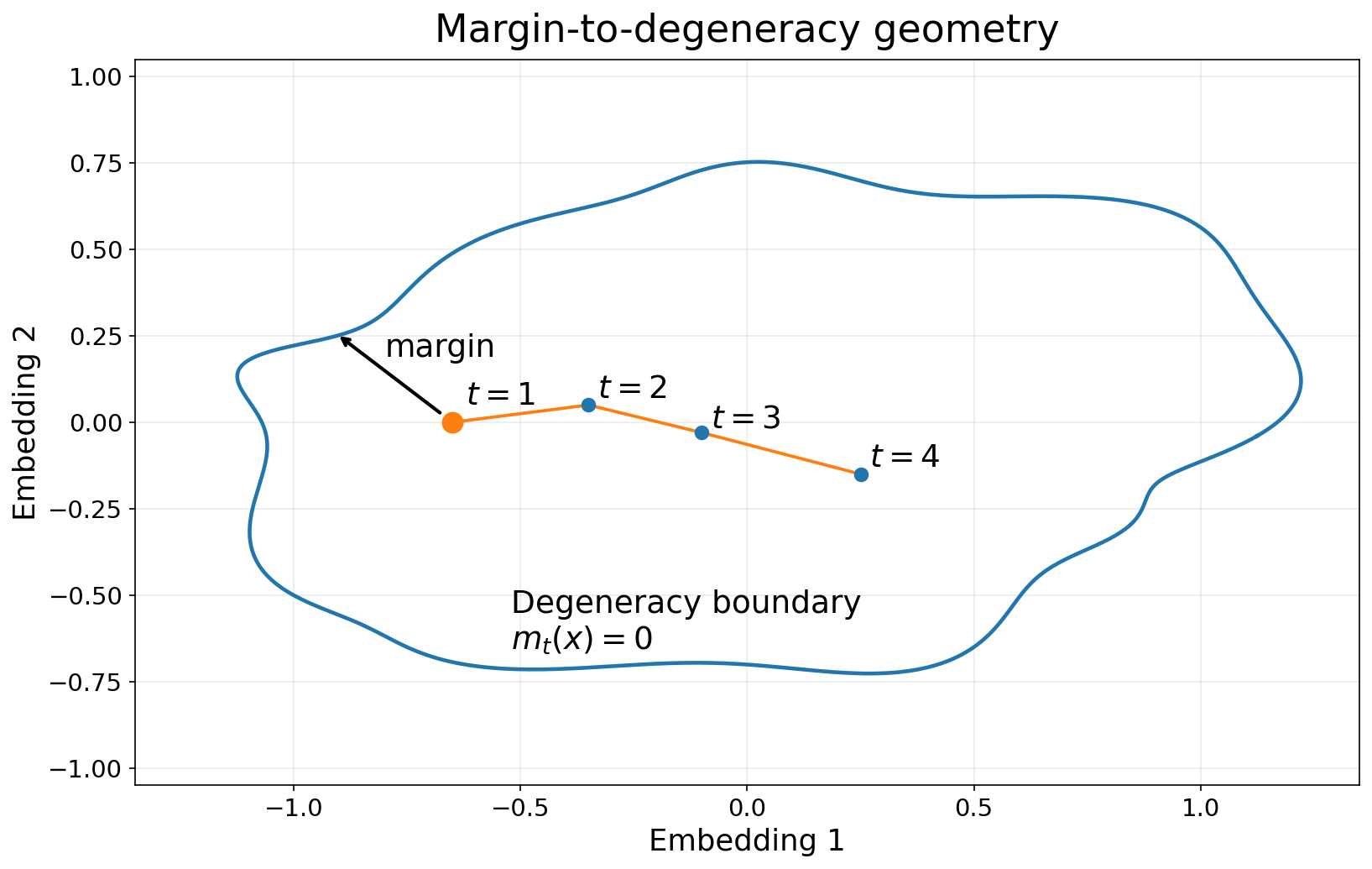}
    \caption{\textbf{Margin-to-degeneracy geometry (SVM analogy).}
The curve denotes the degeneracy boundary: configurations where the attention-induced latent-noise mapping becomes locally ill-conditioned, so small perturbations produce unstable changes.
A sequence traces a trajectory of token representations in embedding space across positions (dots).
Schematically, a token's margin is represented by its distance to this boundary (arrow).
In this example, \textbf{token 1 is the sequence's support token} because it lies closest to the boundary, so it controls the sequence-level stability margin and carries the largest barrier pressure.
This mirrors the SVM picture, where support vectors are the points closest to the decision boundary and therefore determine the margin. A more precise formulation is given in Section~\ref{sec:single_layer_margin}.
}
    \label{fig:margin_schematic}
\end{figure}

Practically, our probabilistic formulation produces an immediate optimization objective. We work in a Bayesian framework \citep{Bishop06,Murphy12} where the usual transformer model of tokens conditioned on embeddings is treated as a log-likelihood. We further compose this model with a log-prior based on a probabilistic causal transformer we propose in this paper. Formulating a log-posterior for this model and performing maximum a posteriori (MAP) estimation yields a familiar data-fit term alongside a novel barrier term that enforces geometric stability. Our approach connects causal self-attention to classical constrained estimation \citep{BoydVa04}: maximizing the posterior is equivalent to optimizing a squared-error objective (at noise scale $\sigma$) under implicit margin constraints that keep the attention mapping well-behaved.

Beyond the single-layer self-attention mechanism, we also clarify how depth composes. In particular, we show that deep transformers form a hierarchical probability model across layers, and that the induced token distributions are consistent across sequence lengths, defining a well-posed stochastic process over token sequences.

\paragraph{Contributions.}

\begin{enumerate}
    \item \textbf{A probabilistic interpretation of causal self-attention.}
    We formalize a causal self-attention layer as a conditional probability model over latent embeddings, which in turn induces a joint probability law over token sequences with an exact likelihood.

    \item \textbf{Margin to degeneracy and a stability-inducing log-barrier.}
    We show that the induced likelihood contains an additional term that defines a margin to a critical degeneracy boundary and yields a smooth barrier against locally unstable attention geometries (Theorem~\ref{thm:jacobian_dispersion}).

    \item \textbf{An optimization view.}
    We show that our probabilistic log-likelihood is analytically equivalent to a standard squared-error objective combined with a margin-based barrier term. Thus, our framework does not replace standard transformers, but rather reinterprets their existing dynamics through a geometric lens.

    \item \textbf{A model-implied training penalty and empirical validation.}
    Using MAP estimation, we translate this theoretical barrier into a practical, plug-and-play training penalty that requires no architectural modifications. We train standard language models with this augmented objective and systematically assess the effect of different margin penalties. Our empirical evaluations demonstrate that explicitly enforcing this geometric margin concretely improves the robustness of the trained models.

    \item \textbf{Depth as a hierarchy of conditional priors.}
    We characterize how the probabilistic interpretation composes across depth and identify conditions under which the nontrivial volume correction localizes to the first layer under standard transformer conditioning. This explains why a one-layer probabilistic causal prior is already a natural practical choice in a Bayesian setting.
\end{enumerate}

\paragraph{Organization.}
Section~\ref{sec:graphical_model} introduces our latent-noise view of causal self-attention as a structural (graphical) model over embeddings.
Section~\ref{sec:single_layer_margin} develops the single-layer theory and derives the margin/log-barrier term induced by token-dependent attention.
Section~\ref{sec:training_objective} bridges this probabilistic perspective to optimization, deriving a squared-error training objective with a stability constraint.
Section~\ref{sec:stochastic_process} establishes that the induced family of token distributions is consistent across sequence lengths (i.e., defines a well-posed stochastic process), which is an important technical ingredient for learning from datasets of variable-length sequences.
Section~\ref{sec:deep_hierarchy} extends the interpretation to depth, showing how layers compose into a hierarchy of conditional priors and how the correction terms behave under standard conditioning.
Section~\ref{sec:experiments} trains models with the resulting objective and evaluates them empirically.
Section~\ref{sec:related_work} surveys related work, and Section~\ref{sec:discussion} discusses implications and future directions.
All proofs are deferred to the Appendix.

\section{A Latent-Noise View of Causal Self-Attention}
\label{sec:graphical_model}

This section introduces the probabilistic formulation for causal self-attention that forms the core of this paper. The goal is not to modify the transformer architecture, but to make explicit the distribution over the model's \emph{embeddings} (hidden states) that a causal self-attention layer implicitly defines when viewed as a generative mechanism.

Fix a sequence length $L$ and an embedding dimension $d$. We write
\[
x_{1:L} = (x_1,\dots,x_L), \qquad x_t \in \mathbb{R}^d,
\]
for the sequence of continuous embeddings produced by a transformer \citep{VaswaniShPaUsJoGoKaPo17,BrownMaRyEtAl20}. When we need to refer to depth, we write $x_t^{(\ell)}$ for the layer-$\ell$ embedding, with $x_t^{(0)}$ denoting the input token embedding. We assume, without loss of generality, that the embeddings are detrended, as the analysis applies equally to residuals. For clarity, we omit the layer index $(\ell)$ in sections focusing on the single-layer setting.

The transformer is ultimately used to define a probability model over discrete tokens via cross-entropy loss, but its computation proceeds through these continuous embeddings. Our focus in this paper is: \emph{what probabilistic structure does causal self-attention induce over the embedding sequence $x_{1:L}$?}

We adopt a probabilistic perspective: we treat embeddings as \emph{random variables} rather than fixed activations. Concretely, we posit that embeddings are generated sequentially from latent noise,
\[
\varepsilon_{1:L} = (\varepsilon_1,\dots,\varepsilon_L), \qquad \varepsilon_t \in \mathbb{R}^d,
\]
through a causal transformation. This is the same conceptual step that turns deterministic estimators into probabilistic models (e.g., probabilistic PCA \citep{TippingBi99}; see also standard latent-variable modeling treatments \citep{Bishop06,Murphy12}).

Throughout the main text, we use an isotropic Gaussian base noise:
\begin{equation}
\varepsilon_t \sim \mathcal{N}(0,\sigma^2 I_d),
\label{eq:base_noise}
\end{equation}
where $\sigma>0$ controls the noise scale. In the limit $\sigma\to 0$, the usual deterministic view is recovered; at finite $\sigma$, the probabilistic formulation reveals additional geometric structure.

\subsection{Causal self-attention as a probabilistic model over embeddings}
Consider a single self-attention layer. For clarity, we start with a single head and write the usual query/key/value projections:
\[
q_t = W_Q x_t, 
\qquad 
k_s = W_K x_s, 
\qquad 
v_s = W_V x_s.
\]
Causal self-attention forms a context summary $\mu_t$ by attending to the past. In the \emph{strict causal} form \citep{VaswaniShPaUsJoGoKaPo17,BrownMaRyEtAl20},
\begin{equation}
\alpha_{ts}(x)
\;=\;
\frac{\exp\!\left(q_t^\top k_s\right)}{\sum_{r<t}\exp\!\left(q_t^\top k_r\right)},
\qquad s<t,
\qquad
\mu_t(x)
\;=\;
\sum_{s<t} \alpha_{ts}(x)\, v_s .
\label{eq:attn_summary_strict}
\end{equation}
The standard deterministic view treats $\mu_t$ as an update computed from the current embedding. Our probabilistic view instead treats $\mu_t$ as the \emph{mean} of a probability model for $x_t$ given its context, with additive latent noise:
\begin{equation}
x_t \;=\; \mu_t(x) \;+\; \varepsilon_t,
\qquad
\varepsilon_t \sim \mathcal{N}(0,\sigma^2 I_d).
\label{eq:latent_noise_model}
\end{equation}
The key modeling point is that $\mu_t(x)$ is both \emph{causal} (it uses only earlier positions) and \emph{token-dependent}: the weights $\alpha_{ts}(x)$ depend on the current token through the query $q_t=W_Qx_t$.

It is convenient to write \eqref{eq:latent_noise_model} in residual form:
\begin{equation}
\varepsilon_t \;=\; e_t(x) \;\triangleq\; x_t - \mu_t(x).
\label{eq:residual_map_graphical}
\end{equation}
This defines a transformation between latent variables $\varepsilon_{1:L}$ and embeddings $x_{1:L}$, and it is this transformation that we analyze throughout the paper.

While standard implementations often include the current position in the context ($s \le t$) \citep{RadfordWuChEtAl19,BrownMaRyEtAl20}, we use the strict-causal setup \eqref{eq:attn_summary_strict} here purely for expository clarity. Our core results and the induced stability term apply equally when $s = t$; the straightforward modifications for this case are detailed in Appendix~\ref{app:self_in_context}.

\subsection{The crucial log-Jacobian}
Because $\varepsilon_{1:L}$ has a tractable base density \eqref{eq:base_noise}, the transformation \eqref{eq:residual_map_graphical} induces an explicit density over embeddings whenever it is locally invertible. Following standard normalizing flow treatments \citep{PapamakariosNaReMoLa21}, the change-of-variables formula gives:
\begin{equation}
\log p(x_{1:L})
\;=\;
\log p(\varepsilon_{1:L})
\;+\;
\log \left|\det J_{x \mapsto \varepsilon}(x_{1:L})\right|.
\label{eq:cov_graphical}
\end{equation}
The first term is the familiar ``noise energy'' (a prediction-error term under the Gaussian choice). The second term is the crucial one: it accounts for local volume change under the attention-induced transformation.

Crucially, this Jacobian term does not vanish because attention is explicitly \emph{token-dependent}. Even though $\mu_t$ summarizes the past, the weights used to summarize it depend on the current token through $q_t=W_Qx_t$. As a result, the residual map $e_t(x)=x_t-\mu_t(x)$ is not simply ``$x_t$ minus a past-only average''; it is a token-dependent reparameterization that generically induces a nontrivial volume factor in the exact likelihood.

In the next section, we compute this volume factor explicitly and show that it defines a \emph{margin to degeneracy}: it quantifies how far the attention-induced map is from a critical surface where it becomes locally singular. This yields a smooth barrier-like term in the log-likelihood that shapes attention geometry in a principled way.

\section{Single-Layer Geometry: Margin and the Context-Dispersion Penalty}
\label{sec:single_layer_margin}

This section makes the probabilistic interpretation concrete for the simplest nontrivial setup: a \emph{single} causal self-attention layer viewed as a latent-noise generator of embeddings.
The key takeaway is that token-dependent attention induces an \emph{additional} term for the log-probability density that diverges to $-\infty$ near a degeneracy boundary (where it becomes locally singular), thus behaving like a smooth \emph{log-barrier} \citep{BoydVa04} that disfavors unstable configurations.
We derive this effect in the scalar case ($d=1$), where the geometry is transparent, and then state the general $d>1$ result, where scalar variance becomes an attention-weighted covariance matrix.
All proofs are deferred to Appendix~\ref{app:proofs_single_layer}.

We work with the latent-noise formulation from Section~\ref{sec:graphical_model}.
A single strict-causal self-attention layer defines a context summary $\mu_t(x)$ from the past, and we study the induced residual map
\begin{equation}
\varepsilon_t \;=\; e_t(x) \;\triangleq\; x_t - \mu_t(x),
\label{eq:residual_map_margin}
\end{equation}
under a tractable base density on $\varepsilon_{1:L}$ (here $\varepsilon_t \sim \mathcal{N}(0,\sigma^2 I_d)$).
The crucial point is that $\mu_t(x)$ depends on $x_t$ through the query, so $x\mapsto \varepsilon$ is a token-dependent transformation; its local sensitivity depends on the configuration of the attended context.

\subsection{Simple case: the scalar  (\texorpdfstring{$d=1$}{d=1})}
\label{subsec:scalar_case}

We begin with $d=1$, so $x_t,v_s\in\mathbb{R}$.
Let $\alpha_{ts}$ denote the strict-causal attention weights, and define the attention-weighted mean and variance of the attended values:
\begin{equation}
\bar v_t \;\triangleq\; \sum_{s<t}\alpha_{ts} v_s,
\qquad
\mathrm{Var}_t \;\triangleq\; \sum_{s<t}\alpha_{ts}(v_s-\bar v_t)^2.
\label{eq:scalar_var_def}
\end{equation}
Because $\alpha_{ts}$ depends on the current token through $q_t=W_Q x_t$, the residual map $e_t(x)=x_t-\mu_t(x)$ has a nontrivial diagonal derivative.

\begin{proposition}[Scalar diagonal derivative]
\label{prop:scalar_diag}
Assume $d=1$ and bilinear logits $q_t^\top k_s$ with $q_t=W_Q x_t$ and $k_s=W_K x_s$. Then for each $t$,
\begin{equation}
\frac{\partial e_t}{\partial x_t}
\;=\;
1 - a\,\mathrm{Var}_t,
\qquad
a \triangleq W_K W_Q .
\label{eq:scalar_diag_deriv}
\end{equation}
\end{proposition}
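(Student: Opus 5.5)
We differentiate the residual map $e_t(x)=x_t-\mu_t(x)$ directly with respect to $x_t$. In the strict-causal form \eqref{eq:attn_summary_strict}, position $t$ enters $\mu_t$ only through the query $q_t=W_Qx_t$. The values $v_s=W_Vx_s$ and keys $k_s=W_Kx_s$ with $s<t$ do not depend on $x_t$. Hence
\[
\frac{\partial e_t}{\partial x_t}=1-\sum_{s<t}\frac{\partial \alpha_{ts}}{\partial x_t}\,v_s .
\]
The whole computation therefore reduces to the derivative of a softmax with respect to a common scalar multiplying all its logits.

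In the scalar case, the logits are $\ell_{ts}=q_tk_s=W_Qx_t\,W_Kx_s$, so $\partial\ell_{ts}/\partial x_t=W_QW_Kx_s=a\,x_s$, using $a=W_KW_Q$ (scalars commute). We then use the standard softmax identity
\[
\frac{\partial\alpha_{ts}}{\partial x_t}=\alpha_{ts}\Bigl(\frac{\partial\ell_{ts}}{\partial x_t}-\sum_{r<t}\alpha_{tr}\frac{\partial\ell_{tr}}{\partial x_t}\Bigr)=a\,\alpha_{ts}\,(x_s-\bar x_t),
\]
where $\bar x_t=\sum_{r<t}\alpha_{tr}x_r$. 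Substituting gives
\[
\sum_s \frac{\partial\alpha_{ts}}{\partial x_t}\,v_s=a\sum_s\alpha_{ts}(x_s-\bar x_t)\,v_s=a\,\mathrm{Cov}_t(x,v).
\]
This is an attention-weighted covariance between the key-side input and the value.

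The main obstacle is converting this covariance into $\mathrm{Var}_t$ of the values as defined in \eqref{eq:scalar_var_def}. Since $v_s=W_Vx_s$, we have $\mathrm{Cov}_t(x,v)=W_V^{-1}\mathrm{Var}_t$ when $W_V\neq0$. The stated formula, with no $W_V$ factor, is therefore exact only under a convention that absorbs $W_V$. I would first adopt $W_V=1$, i.e.\ take the value to be the embedding itself, since the scalar model is already in detrended, residual-style form. Equivalently, one can rescale $W_K$ so that the key is measured in value units, i.e.\ write $k_s=(W_K/W_V)v_s$ and redefine $a$ as the effective coupling. I would state this convention explicitly in the proof. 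Under it, $\mathrm{Cov}_t(x,v)=\mathrm{Var}_t$ and \eqref{eq:scalar_diag_deriv} follows.

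As a final check, I would record the sign behaviour that the later discussion relies on. If $a\le0$, the derivative is at least $1$ everywhere. If $a>0$, it vanishes exactly on the degeneracy surface $\mathrm{Var}_t=1/a$.
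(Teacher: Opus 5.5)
Your proposal is correct and follows essentially the same route as the paper's proof: the softmax Jacobian gives $\partial\alpha_{ts}/\partial x_t = a\,\alpha_{ts}(x_s-\bar x_t)$, and the covariance identity then turns $\sum_{s<t}\alpha_{ts}\,v_s(x_s-\bar x_t)$ into $\mathrm{Var}_t$. The paper handles the $W_V$ issue you flagged by simply ``identifying $x_s$ with $v_s$'' in that step, which implicitly takes $W_V=1$; your explicit statement of that convention, or equivalently of the rescaled effective coupling, is more careful than the original.
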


Equation~\eqref{eq:scalar_diag_deriv} already captures the central phenomenon: the local sensitivity of the map $x_t\mapsto \varepsilon_t$ is governed by the attention-weighted dispersion of the attended context.
Importantly, the coefficient $a$ can have either sign, so the way dispersion affects sensitivity depends on the effective query-key parameterization; what is universal is that the map becomes unstable when the derivative approaches zero.

\subsection{Margin to degeneracy and ``support tokens''}
\label{subsec:margin_support_tokens}

Proposition~\ref{prop:scalar_diag} motivates defining a position-wise \emph{margin to degeneracy}:
\begin{equation}
m_t(x) \;\triangleq\; 1 - a\,\mathrm{Var}_t(x).
\label{eq:margin_def}
\end{equation}
The attention-induced map becomes locally singular at position $t$ precisely when $m_t(x)=0$, i.e., when $\partial e_t/\partial x_t = 0$.
This defines a \emph{degeneracy boundary} (a critical surface) in embedding space.

To summarize stability across the whole sequence, we define the \emph{sequence margin}
\begin{equation}
m(x) \;\triangleq\; \min_{t\in\{1,\dots,L\}} m_t(x).
\label{eq:global_margin}
\end{equation}
Indices attaining (or nearly attaining) this minimum are the bottlenecks for stability.

\begin{definition}[Support tokens]
\label{def:support_tokens}
Any index $t^\star \in \arg\min_t m_t(x)$ is called a \emph{support token}. These are the positions whose attention geometry lies closest to the degeneracy boundary and therefore most strongly constrains the global margin.
\end{definition}

This is directly reminiscent of large-margin classifiers, where a small subset of points (support vectors) controls the margin \citep{CortesVa95}. Here, support tokens are the positions that dominate the stability pressure induced by the probability distribution.

\subsection{Decomposition of log-density and the log-barrier term}
\label{subsec:scalar_likelihood}

Assume we are in a locally invertible regime where $m_t(x)\neq 0$ for all $t$ (and in particular, for the simplest ``stable'' case, $m_t(x)>0$).
Under the Gaussian noise model $\varepsilon_t \sim \mathcal{N}(0,\sigma^2)$, the change-of-variables formula \citep{PapamakariosNaReMoLa21} yields the induced log-density over embeddings:
\begin{equation}
\log p(x_{1:L})
\;=\;
-\frac{1}{2\sigma^2}\sum_{t=1}^L e_t(x)^2
\;+\;
\sum_{t=1}^L \log\left| \frac{\partial e_t}{\partial x_t}\right|
\;-\;
L\log\sigma
\;+\; \mathrm{const}.
\label{eq:scalar_loglik_decomp}
\end{equation}
Using Proposition~\ref{prop:scalar_diag}, the additional term becomes
\begin{equation}
\sum_{t=1}^L \log\left| 1 - a\,\mathrm{Var}_t(x)\right|
\;=\;
\sum_{t=1}^L \log|m_t(x)|.
\label{eq:scalar_logdet_margin}
\end{equation}
This is the \emph{missing} term compared to a pure prediction-error objective.
Its role is a smooth log-barrier: as any margin approaches the degeneracy boundary $m_t(x)\to 0$, we have $\log|m_t(x)|\to -\infty$, sharply down-weighting such configurations.
Thus, the exact log-probability does not merely fit residuals; it also prefers attention geometries that keep the token-dependent map away from local singularities.

\subsection{Theory illustration: positive vs.\ negative coupling in the scalar case}
\label{subsec:theory_illustration_scalar}

The scalar formula \eqref{eq:scalar_diag_deriv} already shows that the sign of the effective coupling
$a=W_KW_Q$ determines the qualitative behavior of the additional likelihood term.
When $a>0$, the margin
\[
m_t(x)=1-a\,\mathrm{Var}_t(x)
\]
decreases with attention-weighted variance and can approach zero, producing a true
log-barrier against degeneracy.
When $a<0$, the same factor becomes $1+|a|\,\mathrm{Var}_t(x)$, which stays strictly positive:
the singular boundary disappears, and the change-of-variables term instead amplifies configurations
with larger dispersion rather than excluding them.

\begin{figure}[H]
\centering
\includegraphics[width=\linewidth]{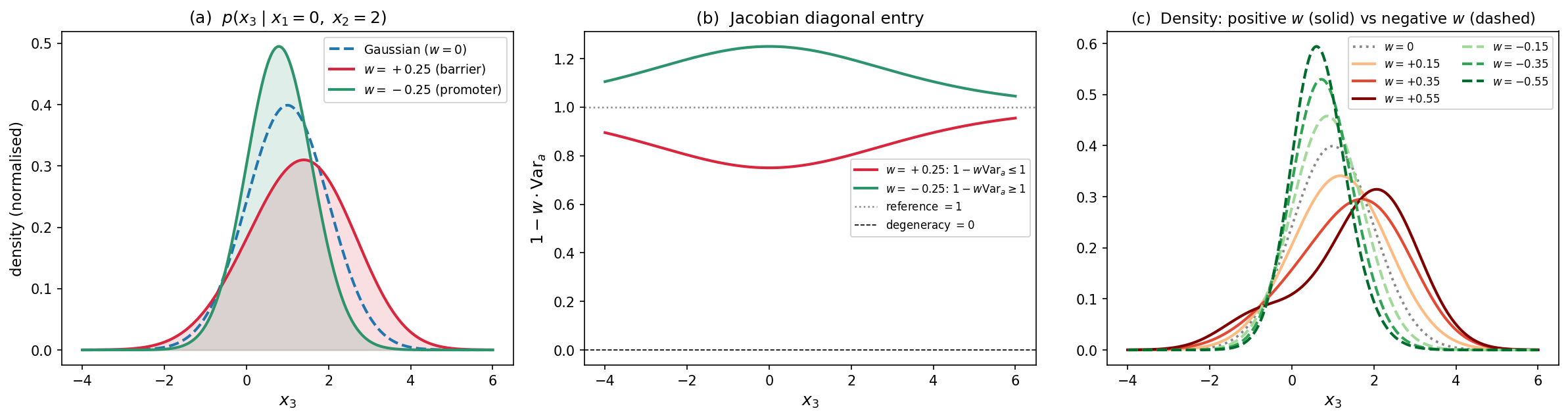}
\caption{\textbf{Prior density under positive and negative coupling ($d{=}1$, $n{=}3$).}
(a)~Conditional density of $x_3$ given $x_1{=}0$, $x_2{=}2$. Positive coupling ($a{=}+0.25$, red) broadens the baseline Gaussian distribution, while negative coupling ($a{=}-0.25$, green) sharpens it.
(b)~The diagonal change-of-variables factor $1-a\,\mathrm{Var}_t$ (Eq.~\ref{eq:scalar_diag_deriv}) evaluated across $x_3$.
(c)~Density profiles for varying coupling strengths $a \in \{-0.55,-0.35,0,+0.35,+0.55\}$.
}
\label{fig:theory_density}
\end{figure}

\begin{figure}[H]
\centering
\includegraphics[width=\linewidth]{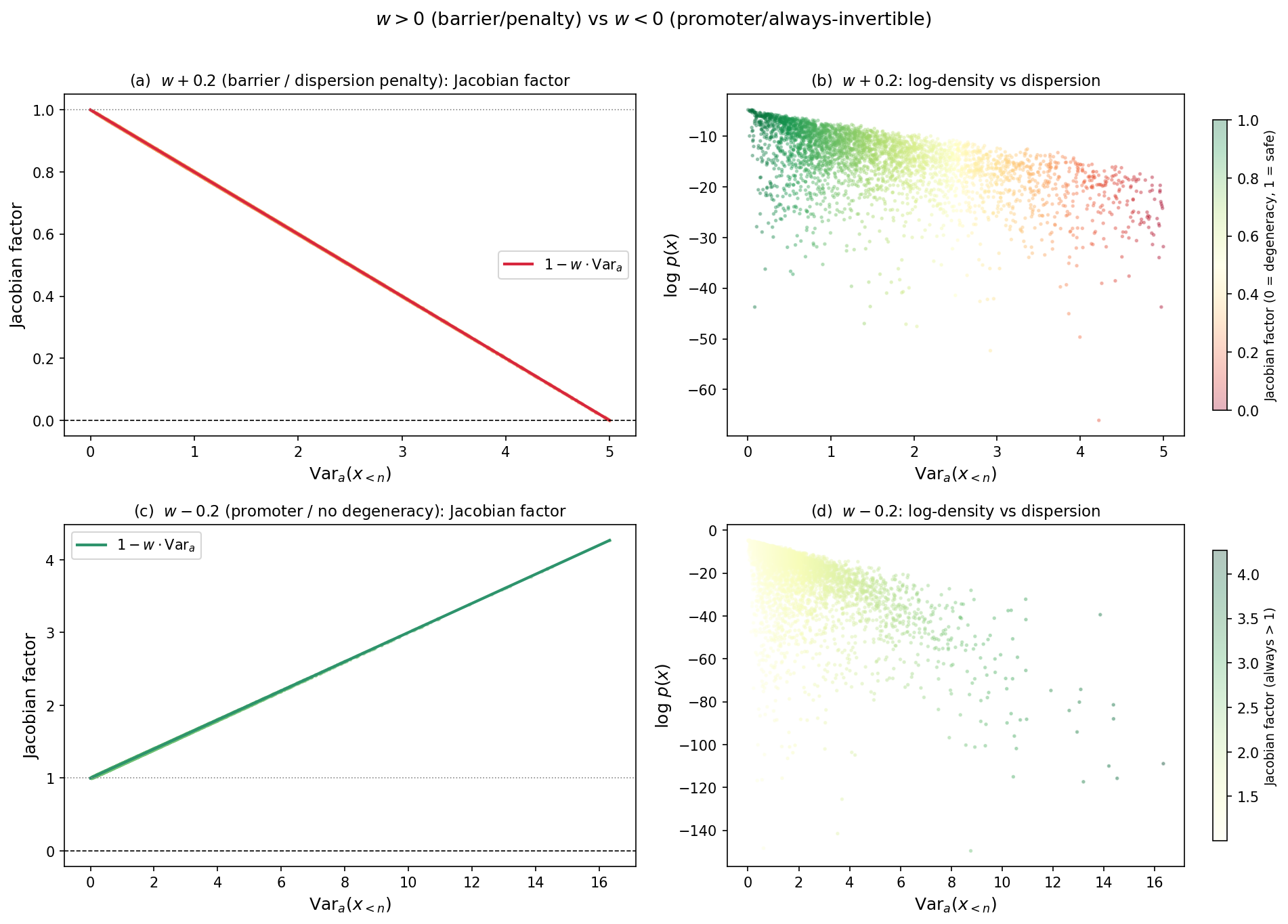}
\caption{\textbf{Population-level behavior under positive vs.\ negative coupling ($d{=}1$, $n{=}5$).}
Evaluation of 4000 random sequences from $\mathcal{N}(0,4)$ under positive coupling ($a{=}+0.2$, top row) and negative coupling ($a{=}-0.2$, bottom row).
(a, c)~The diagonal change-of-variables factor as a function of the attention-weighted variance $\mathrm{Var}_t$. Positive coupling introduces a hard degeneracy boundary at zero, whereas negative coupling remains strictly positive.
(b, d)~The resulting log-density. Under positive coupling, configurations near the degeneracy boundary are excluded (red points). Under negative coupling, the log-barrier is removed and larger dispersion is permitted.
}
\label{fig:theory_locality}
\end{figure}

Figures \ref{fig:theory_density} and \ref{fig:theory_locality} illustrate these two regimes in the simplest scalar setting. We fix $x_1=0$, $x_2=2$, sweep $x_3\in[-4,6]$, and compare the induced conditional density under positive, zero, and negative coupling. We also evaluate 4000 random length-5 sequences to visualize the population-level relationship between $\mathrm{Var}_t$, the diagonal factor, and the log-density.

Figure~\ref{fig:theory_density} shows the local effect on the conditional density.
For positive coupling ($a=+0.25$), the change-of-variables factor is less than one whenever
$\mathrm{Var}_t>0$, so it subtracts from the log-density and flattens the peak:
the maximum density drops from approximately $0.38$ (Gaussian reference) to $0.32$.
For negative coupling ($a=-0.25$), the factor is greater than one, so it adds to the log-density
and sharpens the peak to approximately $0.52$.
As expected from \eqref{eq:scalar_diag_deriv}, the diagonal factor follows the exact linear form
$1-a\,\mathrm{Var}_t$, and increasing $|a|$ makes the deviation from the Gaussian more pronounced.

Figure~\ref{fig:theory_locality} shows the corresponding population-level behavior.
For positive coupling ($a=+0.2$), the factor decreases linearly toward zero and
613 out of 4000 sequences (15.3\%) are excluded by the degeneracy condition; among the remaining
3387 valid sequences, log-density decreases as $\mathrm{Var}_t$ increases.
For negative coupling ($a=-0.2$), all 4000 sequences remain valid, the diagonal factor increases as
$1+|a|\,\mathrm{Var}_t$, and the observed range of $\mathrm{Var}_t$ extends to approximately $16$
(compared with approximately $5$ in the positive-coupling case).
The log-density still decreases overall because the Gaussian residual term dominates in the tails,
but the change-of-variables term now partially offsets that decay instead of imposing a hard barrier.

These illustrations make the sign dependence explicit:
\emph{positive coupling yields a genuine margin-to-degeneracy and a log-barrier, while negative coupling
removes the degeneracy boundary and turns the same term into a dispersion-promoting correction.}

\subsection{General case \texorpdfstring{$d>1$}{d>1}: covariance and a matrix margin}
\label{subsec:matrix_case}

We now state the corresponding result in $d>1$.
Define the attention-weighted mean and covariance of the attended values:
\begin{equation}
\bar v_t \;\triangleq\; \sum_{s<t}\alpha_{ts}(x)\,v_s,
\qquad
\Sigma_t \;\triangleq\; \sum_{s<t}\alpha_{ts}(x)\,(v_s-\bar v_t)(v_s-\bar v_t)^\top \succeq 0.
\label{eq:cov_def_general}
\end{equation}

\begin{theorem}[Diagonal Jacobian block and conditioning]
\label{thm:jacobian_dispersion}
For the residual map $e_t(x)=x_t-\mu_t(x)$ induced by strict-causal self-attention (Section~\ref{sec:graphical_model}), the diagonal Jacobian block satisfies
\begin{equation}
\frac{\partial e_t}{\partial x_t}
\;=\;
I_d \;-\; \Sigma_t A,
\label{eq:matrix_diag_block}
\end{equation}
where $A$ is an explicit matrix determined by the query/key parameterization. In the basic bilinear logit model with $q_t=W_Q x_t$ and $k_s=W_K x_s$, one has
\begin{equation}
A \;=\; W_K^\top W_Q.
\label{eq:A_basic}
\end{equation}
\end{theorem}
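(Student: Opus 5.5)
The plan is a direct differentiation of $\mu_t(x)=\sum_{s<t}\alpha_{ts}(x)v_s$ with respect to $x_t$, generalizing the computation behind Proposition~\ref{prop:scalar_diag}. Under strict causality the values $v_s=W_Vx_s$ and keys $k_s=W_Kx_s$ for $s<t$ do not depend on $x_t$. So the only dependence of $\mu_t$ on $x_t$ is through the weights $\alpha_{ts}$, via the query $q_t=W_Qx_t$. Hence $\partial e_t/\partial x_t = I_d - \partial\mu_t/\partial x_t$, and everything reduces to computing the Jacobian of a softmax-weighted average with respect to its logits.

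First I will write the logits as $z_{ts}=q_t^\top k_s = x_t^\top W_Q^\top W_K x_s$. Their gradient is $\nabla_{x_t} z_{ts} = W_Q^\top W_K x_s = W_Q^\top k_s$. Next I will use the standard softmax derivative $\partial\alpha_{ts}/\partial z_{tr}=\alpha_{ts}(\delta_{sr}-\alpha_{tr})$, which gives
\[
\nabla_{x_t}\alpha_{ts} \;=\; \alpha_{ts}\Bigl(\nabla_{x_t} z_{ts}-\sum_{r<t}\alpha_{tr}\nabla_{x_t} z_{tr}\Bigr).
\]
Then
\[
\frac{\partial \mu_t}{\partial x_t}=\sum_{s<t} v_s\,(\nabla_{x_t}\alpha_{ts})^\top
=\sum_{s<t}\alpha_{ts}\,v_s\,(k_s-\bar k_t)^\top W_Q,
\qquad \bar k_t=\sum_{r<t}\alpha_{tr}k_r .
\]
Since $\sum_s \alpha_{ts}(k_s-\bar k_t)=0$, I can replace $v_s$ by $v_s-\bar v_t$. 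This turns the sum into an attention-weighted cross-covariance $\mathrm{Cov}_t(v,k)\,W_Q$.

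The remaining step, which I expect to be the main obstacle, is rewriting this cross-covariance of values and keys as $\Sigma_t A$, where $\Sigma_t$ is the value covariance of \eqref{eq:cov_def_general}. Because $k_s-\bar k_t=W_K(x_s-\bar x_t)$ and $v_s-\bar v_t=W_V(x_s-\bar x_t)$, the sum equals $W_V\,C_t\,W_K^\top W_Q$, with $C_t$ the attention-weighted covariance of the $x_s$. Matching this to $\Sigma_t W_K^\top W_Q=W_VC_tW_V^\top W_K^\top W_Q$ requires fixing the parameterization convention. There are two natural choices. One is to work in the paper's simplified setting, where values coincide with the embeddings: $W_V=I$, consistent with the scalar case, where $\mathrm{Var}_t$ of values yielded $a=W_KW_Q$. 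The other is to absorb $W_V$ into $A$, which is exactly what the clause ``$A$ is an explicit matrix determined by the query/key parameterization'' allows, giving $A=W_V^{-\top}W_K^\top W_Q$ when $W_V$ is invertible.

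I will adopt $v_s=x_s$ for the basic bilinear model to obtain \eqref{eq:A_basic}, and remark on the general form. As a sanity check, the result should reduce to Proposition~\ref{prop:scalar_diag} when $d=1$.
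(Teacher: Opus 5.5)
Your proposal is correct and follows the paper's proof essentially step for step: you differentiate $\mu_t$ only through the query, apply the softmax Jacobian to get $\sum_{s<t}\alpha_{ts}v_s(x_s-\bar x_t)^\top A$ with $A=W_K^\top W_Q$, center $v_s$, and specialize to $v_s=x_s$ to obtain $\Sigma_t A$. Your caveat about $W_V$ is sharper than the paper's closing remark, which claims the formula persists for $v_s=W_Vx_s$ with $\Sigma_t$ the value covariance; as you note, the derivative is actually $W_V C_t W_K^\top W_Q$, so for invertible $W_V$ one must instead take $A=W_V^{-\top}W_K^\top W_Q$.
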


This identifies a natural matrix-valued notion of ``distance to degeneracy'': the map becomes locally singular at $t$ precisely when $\det(I_d-\Sigma_tA)=0$.
Equivalently, stability can be expressed via spectral conditions on $\Sigma_tA$ (formalized in Appendix~\ref{app:spectral_condition}).

\begin{corollary}[Log-likelihood and the context-geometry term]
\label{cor:logdet_penalty}
Assume we are in a locally invertible regime where $\det(I_d-\Sigma_tA)\neq 0$ for all $t$.
Under $\varepsilon_t \sim \mathcal{N}(0,\sigma^2 I_d)$, the induced log-density over embeddings decomposes as
\begin{equation}
\log p(x_{1:L})
\;=\;
-\frac{1}{2\sigma^2}\sum_{t=1}^L \|e_t(x)\|_2^2
\;+\;
\sum_{t=1}^L \log\left|\det\!\left(I_d-\Sigma_t A\right)\right|
\;-\;
Ld\log\sigma
\;+\; \mathrm{const}.
\label{eq:loglik_matrix_decomp}
\end{equation}
The second term is the additional geometry/conditioning term induced by token-dependent attention: it diverges to $-\infty$ as $\det(I_d-\Sigma_tA)\to 0$, acting as a smooth barrier against locally singular attention-induced transformations.
\end{corollary}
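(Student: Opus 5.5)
The plan is to apply the change-of-variables formula \eqref{eq:cov_graphical} to the residual map $x_{1:L}\mapsto \varepsilon_{1:L}$ defined by \eqref{eq:residual_map_graphical}, and to show that its full Jacobian is block lower-triangular. Once that is established, the log-determinant splits into a sum over diagonal blocks, and Theorem~\ref{thm:jacobian_dispersion} identifies each block.

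\textbf{Step 1 (triangular structure).} Under strict causality, $\mu_t(x)$ depends only on $x_t$ (through $q_t$) and on $x_s$ with $s<t$ (through $k_s,v_s$). Hence $\partial e_t/\partial x_u = 0$ for $u>t$. Ordering the variables by position, the $Ld\times Ld$ Jacobian $J_{x\mapsto\varepsilon}$ is therefore block lower-triangular with $d\times d$ diagonal blocks $\partial e_t/\partial x_t$. For $t=1$ the context is empty, and we adopt the convention $\mu_1\equiv 0$, so the block is $I_d$, consistent with $\Sigma_1=0$.

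\textbf{Step 2 (determinant and invertibility).} The determinant of a block-triangular matrix is the product of the determinants of its diagonal blocks. Theorem~\ref{thm:jacobian_dispersion} gives each block as $I_d-\Sigma_tA$, so
\[
\log\left|\det J_{x\mapsto\varepsilon}\right| = \sum_{t=1}^L \log\left|\det(I_d-\Sigma_tA)\right|.
\]
The hypothesis $\det(I_d-\Sigma_tA)\neq0$ for all $t$ makes $J$ nonsingular, so the map is a local diffeomorphism and the density formula applies locally. For a global statement, I would make the map injective by sequential inversion. Given $\varepsilon_{1:t}$ and an already determined $x_{1:t-1}$, $x_t$ solves $x_t-\mu_t(x_{1:t-1},x_t)=\varepsilon_t$. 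Within the invertible regime I would either restrict to the branch containing the configuration, or, if several preimages exist, interpret $p$ as the density on that local chart.

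\textbf{Step 3 (Gaussian term).} The $\varepsilon_t$ are i.i.d.\ $\mathcal N(0,\sigma^2I_d)$, so
\[
\log p(\varepsilon_{1:L}) = -\frac{1}{2\sigma^2}\sum_{t=1}^L\|\varepsilon_t\|^2 - Ld\log\sigma - \tfrac{Ld}{2}\log(2\pi).
\]
Substituting $\varepsilon_t=e_t(x)$ and combining with Step 2 yields \eqref{eq:loglik_matrix_decomp}. The barrier claim follows because $\log|\cdot|\to-\infty$ as its argument tends to $0$, while the other terms stay bounded on compact sets, since $e_t$ is continuous.

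The main obstacle is the global invertibility caveat in Step 2. The algebra is routine, but the token-dependent map $x_t\mapsto x_t-\mu_t$ need not be globally injective. I would state the result as a local change of variables on the region where every margin is nonzero, or note that $\det(I_d-\Sigma_tA)>0$ throughout a connected region implies monotone, injective behavior along the sequential solve.
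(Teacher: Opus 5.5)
Your proposal is correct and follows the paper's proof essentially step for step: the change-of-variables formula, the block lower-triangular factorization of the Jacobian (Lemma~\ref{lem:block_lower_triangular}), Theorem~\ref{thm:jacobian_dispersion} for the diagonal blocks, and splitting $-\frac{Ld}{2}\log(2\pi\sigma^2)$ to isolate $-Ld\log\sigma$; like you, the paper only invokes local invertibility and never addresses global injectivity. One caution on your side remark: for $d>1$, a positive Jacobian determinant on a connected region does not by itself imply injectivity (consider $(u,w)\mapsto(e^u\cos w,e^u\sin w)$); for a global statement, use instead that $x_t\mapsto x_t-\mu_t(x_{<t},x_t)$ is a bounded perturbation of the identity, because $\mu_t$ lies in the convex hull of $\{v_s\}_{s<t}$, so it is proper, and if $\det(I_d-\Sigma_tA)\neq 0$ for all $x_t$, Hadamard's global inverse function theorem makes it a global diffeomorphism.
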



\section{An Optimization View: Squared Error Under a Stability Margin}
\label{sec:training_objective}

The previous sections define an explicit probabilistic model over embedding sequences:
latent noise $\varepsilon_{1:L}$ is mapped to embeddings $x_{1:L}$ through the
token-dependent causal attention transformation, yielding an exact log-probability via the change-of-variables formula
\citep{PapamakariosNaReMoLa21}.
This section takes an \emph{optimization lens} on that density.
The key result is simple: \emph{maximizing the embedding prior likelihood is equivalent to a
squared-error objective (at noise scale $\sigma$) together with a stability term that acts as a
log-barrier against approaching the degeneracy boundary} \citep{BoydVa04}.
We present the scalar case first, where the constraint has an explicit ``margin'' form,
and then state the general $d$-dimensional counterpart.

Recall the latent-noise generative rule
\begin{equation}
x_t = \mu_t(x) + \varepsilon_t,
\qquad
\varepsilon_t \sim \mathcal{N}(0,\sigma^2 I_d),
\label{eq:latent_noise_sigma_recall}
\end{equation}
with residual map $e_t(x)=x_t-\mu_t(x)$ and the induced embedding density
(obtained by change-of-variables \citep{PapamakariosNaReMoLa21}):
\begin{equation}
\log p_\sigma(x_{1:L})
=
-\frac{1}{2\sigma^2}\sum_{t=1}^L \|e_t(x)\|_2^2
+
\sum_{t=1}^L \log\left|\det\!\left(\frac{\partial e_t}{\partial x_t}\right)\right|
-\;Ld\log\sigma
+\mathrm{const}.
\label{eq:logpx_general_sigma}
\end{equation}
Here the first term is a familiar squared-error fit to the attention prediction $\mu_t(x)$,
and the second term is the stability/conditioning correction induced by token-dependent attention
(Section~\ref{sec:single_layer_margin}).

Maximizing $\log p_\sigma(x_{1:L})$ is equivalent (up to additive constants and a scaling)
to minimizing the negative log-prior:
\begin{equation}
\mathcal{L}_{\mathrm{prior},\sigma}(x)
\;\triangleq\;
\frac{1}{2\sigma^2}\sum_{t=1}^L \|x_t-\mu_t(x)\|_2^2
\;-\;
\sum_{t=1}^L \log\left|\det\!\left(\frac{\partial e_t}{\partial x_t}\right)\right|.
\label{eq:prior_objective_def}
\end{equation}
We interpret \eqref{eq:prior_objective_def} as \emph{squared error under a stability margin}:
the first term encourages embeddings to be close to their attention-predicted contexts,
while the second discourages configurations where the attention-induced mapping becomes ill-conditioned.

\subsection{The scalar objective (\texorpdfstring{$d=1$}{d=1})}

In the scalar case ($d=1$), Section~\ref{sec:single_layer_margin} showed that
\begin{equation}
\frac{\partial e_t}{\partial x_t}
=
m_t(x)
\;\triangleq\;
1-a\,\mathrm{Var}_t(x),
\label{eq:mt_recall}
\end{equation}
where $\mathrm{Var}_t$ is the attention-weighted variance of the attended values and
$a=W_KW_Q$ is the scalar analogue of $W_K^\top W_Q$.

Substituting \eqref{eq:mt_recall} into \eqref{eq:prior_objective_def} yields a particularly
transparent objective:
\begin{equation}
\mathcal{L}_{\mathrm{prior},\sigma}(x)
=
\frac{1}{2\sigma^2}\sum_{t=1}^L \big(x_t-\mu_t(x)\big)^2
\;-\;
\sum_{t=1}^L \log|m_t(x)|
\quad (+\;\text{const}).
\label{eq:scalar_prior_objective}
\end{equation}

\paragraph{Barrier interpretation.}
The set $\{x : m_t(x)=0\}$ is the \emph{degeneracy boundary} at position $t$:
the residual map loses local invertibility there.
Accordingly, $-\log|m_t(x)|$ acts as a smooth barrier that grows without bound as $m_t(x)\to 0$
\citep{BoydVa04}.
In optimization terms, likelihood maximization discourages sequences whose attention geometry pushes
the map toward singularity.

\paragraph{An explicit constrained form (``margin constraint'').}
Equation \eqref{eq:scalar_prior_objective} can be viewed as a softened constrained problem:
for any target margin level $\delta>0$, consider
\begin{equation}
\min_x \;\;\frac{1}{2\sigma^2}\sum_{t=1}^L \big(x_t-\mu_t(x)\big)^2
\qquad\text{s.t.}\qquad
m_t(x)\ge \delta\;\;\;\forall t.
\label{eq:scalar_constrained_problem}
\end{equation}
The log-barrier objective \eqref{eq:scalar_prior_objective} is the classical interior-point relaxation
of \eqref{eq:scalar_constrained_problem} \citep{BoydVa04}:
it replaces hard constraints $m_t(x)\ge \delta$ with a barrier that becomes large as $m_t(x)$ approaches $0$.
This makes the ``stability margin'' viewpoint precise: likelihood maximization trades off data fit
against maintaining a positive margin away from degeneracy.

\subsection{The general matrix objective (\texorpdfstring{$d>1$}{d>1})}

For $d>1$, Theorem~\ref{thm:jacobian_dispersion} gives the diagonal block
\begin{equation}
\frac{\partial e_t}{\partial x_t} = I_d-\Sigma_t A,
\qquad
A=W_K^\top W_Q,
\label{eq:block_recall}
\end{equation}
with $\Sigma_t$ the attention-weighted covariance of the attended values.
Substituting into \eqref{eq:prior_objective_def} yields
\begin{equation}
\mathcal{L}_{\mathrm{prior},\sigma}(x)
=
\frac{1}{2\sigma^2}\sum_{t=1}^L \|x_t-\mu_t(x)\|_2^2
\;-\;
\sum_{t=1}^L \log\left|\det\!\left(I_d-\Sigma_t A\right)\right|
\quad (+\;\text{const}).
\label{eq:matrix_prior_objective}
\end{equation}

\paragraph{Matrix margin.}
The degeneracy boundary is characterized by $\det(I_d-\Sigma_tA)=0$,
i.e., by configurations where $1$ becomes an eigenvalue of $\Sigma_tA$.
This suggests a natural ``margin to degeneracy'' at each position, such as
\begin{equation}
m_t(x)\;\triangleq\; \left|\det(I_d-\Sigma_tA)\right|
\qquad\text{or}\qquad
m_t(x)\;\triangleq\; 1-\rho(\Sigma_tA),
\label{eq:matrix_margin_def_options}
\end{equation}
where $\rho(\cdot)$ denotes the spectral radius.
Different choices are useful in different analyses; the log-likelihood itself selects the
log-determinant form in \eqref{eq:matrix_prior_objective}.

\subsection{Integration into the full token model}
\label{subsec:bridge_to_tokens}

So far we have focused on the embedding prior $p_\sigma(x_{1:L})$ induced by causal attention.
To model actual token sequences $y_{1:L}$, we will combine this prior with the standard transformer
token likelihood \citep{VaswaniShPaUsJoGoKaPo17,BrownMaRyEtAl20}:
\begin{equation}
p(y_{1:L},x_{1:L})
=
p(y_{1:L}\mid x_{1:L})\;p_\sigma(x_{1:L}).
\label{eq:joint_y_x_preview}
\end{equation}
Here $p(y\mid x)$ is exactly what is used in practice: given embeddings (hidden states),
the decoder head produces categorical distributions over tokens at each position.
The prior $p_\sigma(x)$ is the new object made explicit by our latent-noise interpretation. 

\paragraph{MAP training objective.}
In the full token model, we view the categorical decoder as a likelihood and the embedding distribution as a prior.
Given a token sequence $y_{1:L}$, we estimate parameters by \emph{maximum a posteriori} (MAP) estimation:
\begin{equation}
(\hat\theta,\hat\phi)
\;\in\;
\arg\max_{\theta,\phi}\;
\log P_\theta(y_{1:L}\mid x_{1:L})
\;+\;
\log p_{\phi,\sigma}(x_{1:L}),
\label{eq:map_training}
\end{equation}
which is equivalent to minimizing the negative log-posterior---precisely the standard cross-entropy loss augmented with our prior-induced stability term. The key practical insight is that LLMs can be fit using standard training machinery; the only modification is the addition of this log-prior regularizer, which is straightforward to incorporate and actively promotes robust model fitting.

In the next section (Section~\ref{sec:stochastic_process}), we formalize the joint model $p(y\mid x)\,p_\sigma(x)$. We show that marginalizing out the embeddings yields a well-defined sequence likelihood $p(y)$. Crucially, we establish that this distribution satisfies Kolmogorov consistency across sequence lengths \citep{Billingsley95}. This property ensures the model assigns coherent probabilities to variable-length datasets, firmly grounding our training objective in stochastic process theory.


\section{A Token-Level Stochastic Process Induced by the Embedding Prior}
\label{sec:stochastic_process}

So far, we have constructed an explicit embedding prior $p_\sigma(x_{1:n})$ induced by causal self-attention. To connect this to real transformer language modeling, we now combine this prior with the usual categorical decoder likelihood $P(y\mid x)$ \citep{VaswaniShPaUsJoGoKaPo17,BrownMaRyEtAl20} and study the induced token law $\{P(y_{1:n})\}_{n\ge 1}$.

Our main result is that, under strict causality, these finite-dimensional token distributions are consistent across sequence lengths and therefore define a bona fide stochastic process over infinite token sequences via standard extension theorems \citep{Billingsley95,Kolmogorov33}. This gives a rigorous probabilistic foundation for viewing a causal transformer as a single law over variable-length token sequences. As a note, all measure-theoretic proofs are deferred to Appendix~\ref{app:kolmogorov_proofs}.

\subsection{From the embedding prior to a token-level law}
\label{subsec:token_model_from_prior}

Let $V=\{v_1,\dots,v_K\}$ be a finite vocabulary, equipped with the discrete $\sigma$-algebra $2^V$. For each variable sequence length $n\ge 1$, the length-$n$ token space is $V^n$, and the infinite token space is the product $V^\infty=\prod_{t=1}^\infty V$, endowed with the product $\sigma$-algebra generated by cylinder sets \citep{Billingsley95}.

Given latent embeddings $x_{1:n}$, we use the standard transformer decoder likelihood:
\begin{equation}
P(y_{1:n}\mid x_{1:n})
\;=\;
\prod_{t=1}^n P(y_t \mid y_{<t}, x_{1:n}),
\label{eq:obs_factorization}
\end{equation}
where each factor is a categorical distribution over $V$ (typically a softmax computed from the final hidden state at position $t$). This is exactly the observation model used in practice.

We pair this with the causal attention embedding prior developed earlier:
\begin{equation}
x_t \;=\; \mu_t(x) + \varepsilon_t,
\qquad
\varepsilon_t \sim \mathcal N(0,\sigma^2 I_d),
\label{eq:state_equation_recall}
\end{equation}
with strict causal masking in $\mu_t$ (only $s<t$ contribute), and with the Jacobian non-degeneracy assumptions from Section~\ref{sec:single_layer_margin} so that $p_\sigma(x_{1:n})$ is well-defined.

Together, these define the joint model:
\begin{equation}
P(y_{1:n}, x_{1:n})
\;=\;
P(y_{1:n}\mid x_{1:n})\, p_\sigma(x_{1:n}),
\label{eq:joint_yx}
\end{equation}
and hence the marginal token likelihood:
\begin{equation}
P(y_{1:n})
\;=\;
\int_{\mathbb R^{dn}} P(y_{1:n}\mid x_{1:n})\, p_\sigma(x_{1:n}) \, dx_{1:n}.
\label{eq:marginal_tokens}
\end{equation}

Equation~\eqref{eq:marginal_tokens} is the key object in this section: it is the token-level law induced by the probabilistic transformer, obtained by integrating out the latent embeddings.

\subsection{Consistency across sequence lengths}
\label{subsec:consistency_across_lengths}

To learn from datasets containing sequences of different lengths, and to interpret the model as a single probabilistic object rather than a separate model for each $n$, the family $\{P(y_{1:n})\}_{n\ge 1}$ must be \emph{projectively consistent} \citep{Billingsley95}. In the present setting, strict causality is exactly what makes this possible.

\begin{theorem}[Kolmogorov consistency]
\label{thm:kolmogorov_consistency}
Assume strict causal masking and Jacobian non-degeneracy so that $p_\sigma(x_{1:n})$ exists for each $n$. Then for all $n\ge 2$ and all $y_{1:n-1}\in V^{n-1}$,
\begin{equation}
\sum_{y_n\in V} P(y_{1:n}) \;=\; P(y_{1:n-1}),
\label{eq:kolmogorov_consistency}
\end{equation}
where the right-hand side is the marginal likelihood obtained from the \emph{$(n-1)$-dimensional} model.
\end{theorem}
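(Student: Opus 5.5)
The plan is to show two separate marginalization identities and then chain them.
\begin{itemize}
\item[(i)] The embedding prior is projectively consistent: $\int p_\sigma(x_{1:n})\,dx_n = p_\sigma(x_{1:n-1})$.
\item[(ii)] The decoder likelihood marginalizes correctly over the last token: summing over $y_n$, and then integrating over $x_n$, leaves a quantity that depends only on the first $n-1$ coordinates.
\end{itemize}
Once both hold, Fubini/Tonelli (everything is nonnegative and the sum over $V$ is finite) lets us write $\sum_{y_n}P(y_{1:n}) = \int \big[\sum_{y_n} P(y_{1:n}\mid x_{1:n})\big] p_\sigma(x_{1:n})\,dx_{1:n}$. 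We then integrate out $x_n$ first to reach \eqref{eq:marginal_tokens} at length $n-1$.

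For (i), I would use the lower block-triangular structure of $J_{x\mapsto\varepsilon}$ under strict causality. By \eqref{eq:loglik_matrix_decomp}, the density factorizes as
\[
p_\sigma(x_{1:n})=\prod_{t=1}^n \mathcal N\big(e_t(x);0,\sigma^2 I_d\big)\,\big|\det(I_d-\Sigma_tA)\big|,
\]
and the $t$-th factor depends only on $x_{1:t}$. So $p_\sigma(x_{1:n}) = p_\sigma(x_{1:n-1})\,q(x_n\mid x_{1:n-1})$, where $q$ is the last factor. It then suffices to show that $q(\cdot\mid x_{1:n-1})$ integrates to one. This holds because $q$ is the pushforward density of $\varepsilon_n\sim\mathcal N(0,\sigma^2I_d)$ under the inverse of $x_n\mapsto e_n(x_{1:n})$, with $x_{1:n-1}$ held fixed. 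The change of variables $\varepsilon_n = x_n-\mu_n(x)$ is valid when this map is a global diffeomorphism of $\mathbb R^d$. Local invertibility comes from the non-degeneracy assumption $\det(I_d-\Sigma_nA)\neq0$.

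This global-bijectivity step is the main obstacle, since local invertibility alone does not guarantee that the integral equals one. I would argue as follows. Because $\mu_n$ is a convex combination of the finitely many past values $v_s$, it is bounded, so $x_n\mapsto x_n-\mu_n(x)$ is proper: $\|e_n\|\to\infty$ as $\|x_n\|\to\infty$. A proper local diffeomorphism $\mathbb R^d\to\mathbb R^d$ is a covering map, and since $\mathbb R^d$ is simply connected it is a global diffeomorphism (Hadamard's theorem). The standing Jacobian non-degeneracy assumption of the theorem is exactly what is needed here, interpreted as holding on all of $\mathbb R^d$ for each fixed past. If it holds only on a region, I would restrict the model to the region where $m_n>0$ and renormalize. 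I would flag this as the precise sense in which the hypothesis is used.

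For (ii), I would read \eqref{eq:obs_factorization} in the causal sense standard for decoders: $P(y_t\mid y_{<t},x_{1:n})$ depends only on $x_{1:t}$, because the final hidden state at position $t$ is computed under causal masking. Then $\sum_{y_n}P(y_n\mid y_{<n},x_{1:n})=1$, which gives $\sum_{y_n}P(y_{1:n}\mid x_{1:n})=P(y_{1:n-1}\mid x_{1:n-1})$, a quantity independent of $x_n$. Substituting this and applying (i) to integrate out $x_n$ yields $\int P(y_{1:n-1}\mid x_{1:n-1})\,p_\sigma(x_{1:n-1})\,dx_{1:n-1}=P(y_{1:n-1})$, which is \eqref{eq:kolmogorov_consistency}. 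Finally, I would remark that the Kolmogorov extension theorem on the product space $V^\infty$ then gives the process.
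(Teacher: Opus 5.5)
Your argument is correct and has the same skeleton as the paper's proof. You exchange the finite sum over $y_n$ with the integral. You sum out the last token via the causal, prefix-stable decoder factorization (the paper's Lemma~\ref{lem:sum_out_last_token}). You then integrate out $x_n$ via a sequential factorization of the prior (Lemma~\ref{lem:integrate_out_last_latent}).

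The real difference is in that last step. The paper takes the sequential factorization and, crucially, the normalization of the one-step conditional as a hypothesis, in Assumption~\ref{ass:causal_prior_projective} and \eqref{eq:appendix_prior_cond_normalized}. You derive the factorization from the block-triangular Jacobian. You then prove normalization from the attention structure: $\mu_n$ lies in the convex hull of the past values, so $x_n\mapsto e_n$ is proper, and Hadamard--Caccioppoli upgrades an everywhere-nonvanishing $\det(\partial e_n/\partial x_n)$ to a global diffeomorphism.

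The paper's route is shorter and covers any causal prior built from normalized conditionals. Yours actually checks that the attention-induced change-of-variables density satisfies that hypothesis. It also makes explicit that ``Jacobian non-degeneracy'' must hold on all of $\mathbb R^d$ for each fixed past.

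Your concern about local versus global invertibility is well founded. By the area formula, $\int \mathcal N(e_n;0,\sigma^2 I_d)\,|\det(\partial e_n/\partial x_n)|\,dx_n$ is the Gaussian average of the number of preimages. This count is at least one, because the map is properly homotopic to the identity and hence has degree one. It is strictly larger as soon as $\det(\partial e_n/\partial x_n)$ is negative somewhere, for example with $d=1$, $a>0$, whenever $a\,\mathrm{Var}_n>1$ for some $x_n$. So in the positive-coupling regime the paper emphasizes, normalization is a genuine restriction, not automatic.

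Your fallback of restricting and renormalizing does restore consistency. However, the normalizer depends on $x_{<n}$, so the resulting prior picks up a $-\log Z(x_{<n})$ term. It therefore no longer coincides with the density in Corollary~\ref{cor:logdet_penalty}.
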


The role of causality is essential: without it, consistency can fail.

\begin{proposition}[Non-causal attention breaks consistency]
\label{prop:noncausal_breaks_consistency}
If attention is non-causal (future positions may influence the attention normalization at earlier positions), then in general the family $\{P(y_{1:n})\}_{n\ge 1}$ induced by \eqref{eq:marginal_tokens} is not projectively consistent, and \eqref{eq:kolmogorov_consistency} need not hold.
\end{proposition}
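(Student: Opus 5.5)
Proposition~\ref{prop:noncausal_breaks_consistency} is a ``need not hold'' statement, so I would prove it with a single explicit counterexample. The aim is a small non-causal model in which $\sum_{y_n}P(y_{1:n})\neq P(y_{1:n-1})$. I take the smallest nontrivial case: $n=2$, $d=1$, a binary vocabulary $V=\{0,1\}$, and a decoder $P(y_t\mid x_{1:n})=\mathrm{Bern}(\mathrm{sigmoid}(x_t))$ that reads only the final embedding at position $t$. With this decoder, summing over $y_n$ reduces the left-hand side of \eqref{eq:kolmogorov_consistency} to $\mathbb{E}_{p^{(n)}}[P(y_{1:n-1}\mid x_{1:n-1})]$. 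That is an expectation under the $(x_1,\dots,x_{n-1})$-marginal of the $n$-dimensional embedding prior. The right-hand side is the same expectation under the $(n-1)$-dimensional prior. So it suffices to show two things: first, that under non-causal attention these two embedding laws differ; second, that the bounded test function $P(y_1\mid x_1)$ separates them.

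For the first step I use the structural description $x=\mu(x)+\varepsilon$ with full (bidirectional) attention, $\mu_t(x)=\sum_{s}\alpha_{ts}(x)v_s$. To keep the map explicitly invertible and the Jacobian non-degenerate, I exclude self-attention ($s\neq t$), and I take the degenerate-but-legal choice $W_Q=0$. Then $\alpha_{ts}$ is uniform over the attended positions, and the model is a linear Gaussian system. In the length-$1$ model, position~$1$ has no other positions to attend to, so $\mu_1=0$ and $x_1\sim\mathcal N(0,\sigma^2)$. In the length-$2$ model, $x_1=w x_2+\varepsilon_1$ and $x_2=w x_1+\varepsilon_2$ with $w=W_V$. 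Solving gives $x=(I-wP)^{-1}\varepsilon$, where $P$ is the swap matrix. This is invertible for $|w|\neq1$. It yields $\mathrm{Var}(x_1)=\sigma^2(1+w^2)/(1-w^2)^2$, which differs from $\sigma^2$ for any $w\neq0$, $|w|<1$. The future position has changed the law of the past. That is precisely the failure that strict causality rules out: under strict causality, the triangular Jacobian lets $x_n$ be integrated out first, because $\int p(x_n\mid x_{<n})\,dx_n=1$.

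For the second step I compare $\mathbb{E}[\mathrm{sigmoid}(x_1)]$ under $\mathcal N(0,\tau^2)$ for two different values of $\tau$. Both equal $1/2$ by symmetry, so this test function does not separate them. I therefore use $y_1=1$ together with a decoder that is not symmetric, $P(y_1=1\mid x_1)=\mathrm{sigmoid}(x_1)^2$, or equivalently add a bias so the probability is $\mathrm{sigmoid}(x_1+b)$. For the biased version, $\tau\mapsto\mathbb{E}[\mathrm{sigmoid}(\tau Z+b)]$ with $Z\sim\mathcal N(0,1)$ is strictly monotone in $\tau$ when $b\neq0$: by the symmetry of $Z$ together with the convexity/concavity of the sigmoid on either side of $0$, it strictly decreases toward $1/2$ as $\tau$ grows. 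Hence the two variances produce different values of $P(y_1=1)$, and \eqref{eq:kolmogorov_consistency} fails.

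The main obstacle is making sure the counterexample is actually admissible under the paper's standing assumptions. The non-causal map must be globally invertible, with a well-defined density, which the linear choice guarantees. The inequality must also be strict rather than accidentally cancelling, which is the reason for introducing the bias $b$. Finally, I would remark that the construction is generic: perturbing $W_Q$ away from $0$ keeps the discrepancy nonzero by continuity, which justifies the phrase ``in general''.
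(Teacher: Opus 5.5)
Your counterexample is correct, and it takes a genuinely different route from the paper.

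\textbf{The paper's route.} The paper discards the embeddings entirely. It posits a length-2 token law through a conditional $P_2(y_1\mid y_2)$ that depends on the future token. With $P_2(y_2=1)=0.9$ it obtains $\sum_{y_2}P_2(y_1=1,y_2)=0.82\neq 0.5=P_1(y_1=1)$. Its closing remark attributes the failure to the prefix-stability clause behind Lemma~\ref{lem:sum_out_last_token}, i.e.\ to the observation model.

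\textbf{Your route.} You put the non-causality in the embedding prior and keep the decoder causal and position-local. Lemma~\ref{lem:sum_out_last_token} therefore still holds. What fails is the factorization in Assumption~\ref{ass:causal_prior_projective}, and hence Lemma~\ref{lem:integrate_out_last_latent}: the future position changes the law of $x_1$ from $\mathcal{N}(0,\sigma^2)$ to variance $\sigma^2(1+w^2)/(1-w^2)^2$.

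Your checks go through:
\begin{itemize}
\item The residual map $x\mapsto(I-wP)x$ is a global linear bijection with $\det(I-wP)=1-w^2\neq 0$, so the change-of-variables density is properly normalized.
\item The map $\tau\mapsto\mathbb{E}[\mathrm{sigmoid}(\tau Z+b)]$ is strictly monotone for $b\neq 0$. The reason is that $\mathrm{sigmoid}'$ is even and strictly decreasing in $|u|$. The value moves toward $1/2$: it decreases for $b>0$ and increases for $b<0$.
\end{itemize}

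\textbf{What each buys.} The paper's version is shorter, but it connects to attention only by analogy, through its remark. Yours is an honest instance of \eqref{eq:marginal_tokens} with an actual attention-defined prior. The cost is a Gaussian variance computation and a monotonicity lemma.

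\textbf{One small point.} With self-exclusion at $n=2$, position~1 attends to a single position, so its attention weight is $1$ regardless of $W_Q$. The choice $W_Q=0$ and your closing continuity argument in $W_Q$ are therefore vacuous here. For larger $n$, a nonzero $W_Q$ makes the map nonlinear, so global invertibility would need rechecking. Nothing in your counterexample depends on this.
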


Taken together, Theorem~\ref{thm:kolmogorov_consistency} and Proposition~\ref{prop:noncausal_breaks_consistency} show that strict causality is not merely a computational convenience here; it is the structural property that makes the induced family of token laws mathematically coherent across sequence lengths.

\subsection{Infinite-sequence law and dataset-level likelihood}
\label{subsec:infinite_law_and_datasets}

Once projective consistency holds, standard extension theory yields a unique probability measure on the infinite product space.

\begin{theorem}[Transformer stochastic process]
\label{thm:kolmogorov_extension}
Under the assumptions of Theorem~\ref{thm:kolmogorov_consistency}, there exists a unique probability measure $P$ on $(V^\infty,\mathcal F)$ such that for every $n\ge 1$ and every $y_{1:n}\in V^n$,
\begin{equation}
P(Y_1=y_1,\dots,Y_n=y_n) \;=\; P(y_{1:n}),
\end{equation}
where $(Y_t)_{t\ge 1}$ are the coordinate projections on $V^\infty$.
\end{theorem}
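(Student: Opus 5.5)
We will derive this from the Kolmogorov extension theorem, using Theorem~\ref{thm:kolmogorov_consistency} as the main input. Because $V$ is finite, each $V^n$ carries the discrete $\sigma$-algebra and is a compact metrizable (hence Polish) space. The cylinder algebra on $V^\infty$ generates the product $\sigma$-algebra $\mathcal F$, so the extension theorem applies in its standard form \citep{Billingsley95,Kolmogorov33}.

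The first step is to confirm that each $P(\cdot)$ from \eqref{eq:marginal_tokens} is a probability mass function on $V^n$. Nonnegativity is immediate. For total mass, we sum over $y_{1:n}$ and exchange the finite sum with the integral. The decoder factors in \eqref{eq:obs_factorization} are categorical, so summing them sequentially from $y_n$ down to $y_1$ gives $\sum_{y_{1:n}}P(y_{1:n}\mid x_{1:n})=1$. What remains is $\int p_\sigma(x_{1:n})\,dx_{1:n}$, which we must show equals $1$.

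This normalization is the main obstacle. By the change of variables \eqref{eq:cov_graphical}, $p_\sigma$ integrates to $1$ exactly when the triangular map $x\mapsto\varepsilon$ is a bijection up to null sets. We use strict causality here. Since $\mu_t$ depends on $x_t$ only through the query, we solve $x_t-\mu_t(x_{<t},x_t)=\varepsilon_t$ for $x_t$ one position at a time. The non-degeneracy assumption on $\det(I_d-\Sigma_tA)$ gives local invertibility of each such equation. We interpret the hypothesis ``$p_\sigma$ exists for each $n$'' as including global invertibility of this sequential solve, or equivalently that $p_\sigma$ is the pushforward density of $\mathcal N(0,\sigma^2I)^{\otimes n}$. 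Under that reading the total mass is $1$.

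Next we define a set function on cylinders. For a cylinder $C=B\times V^\infty$ with $B\subseteq V^n$, set $P_0(C)=\sum_{y_{1:n}\in B}P(y_{1:n})$. This is well defined because a cylinder can be represented at any larger length $m>n$. Iterating \eqref{eq:kolmogorov_consistency} from $m$ down to $n$ shows that all these representations give the same value. Finite additivity on the cylinder algebra then follows from finite additivity on the common finite level.

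Finally we need countable additivity on the algebra, which is what allows Carathéodory's extension. This is automatic here: every cylinder is clopen in the compact product topology on $V^\infty$ (Tychonoff). So if cylinders $C_k$ decrease to $\emptyset$, compactness forces $C_k=\emptyset$ for some finite $k$, and continuity at $\emptyset$ holds. Carathéodory's extension then gives a probability measure $P$ on $\mathcal F$ with the stated finite-dimensional marginals. Uniqueness follows from the $\pi$–$\lambda$ theorem, because cylinders form a $\pi$-system generating $\mathcal F$.
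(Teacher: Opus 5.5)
Your proposal is correct and follows essentially the same route as the paper: define a set function on cylinders, use the consistency relation to make it well defined and finitely additive, establish $\sigma$-additivity on the cylinder algebra, extend by Carath\'eodory's theorem, and obtain uniqueness from the $\pi$-system of cylinders. Two of your steps are sharper than the paper's. Your compactness (Tychonoff) argument is the correct justification for $\sigma$-additivity, where the paper appeals only to countability of the cylinder algebra. Your explicit check that each $P(y_{1:n})$ has total mass one, via normalization of $p_\sigma$, is something the paper does not prove but builds into its assumption that the one-step prior conditionals integrate to one.
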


This immediately gives a principled interpretation of learning on datasets. If $\{y^{(i)}\}_{i=1}^N$ is a collection of variable-length sequences, these can be viewed as finite-length realizations from the same underlying process. One may therefore validly optimize the marginal objective:
\[
\sum_{i=1}^N \log P(y^{(i)}_{1:n_i}),
\]
or, if one wishes to retain latent embeddings explicitly, optimize the augmented objective:
\begin{equation}
\sum_{i=1}^N \log P(y^{(i)}_{1:n_i}, x^{(i)}_{1:n_i})
\;=\;
\sum_{i=1}^N \log P(y^{(i)}_{1:n_i}\mid x^{(i)}_{1:n_i})
\;+\;
\sum_{i=1}^N \log p_\sigma(x^{(i)}_{1:n_i}),
\label{eq:augmented_dataset_objective}
\end{equation}
where the embedding prior term contributes the model-implied stability and margin regularization. This completes the probabilistic basis for the MAP-style estimation in the transformer setting developed in the previous sections.


\section{Depth: \texorpdfstring{$L$}{L}-Layer Composition as Hierarchical Conditional Priors}
\label{sec:deep_hierarchy}

The previous sections analyzed a \emph{single} causal attention map as a latent-noise generator of
embeddings and showed that token-dependent attention induces an additional stability term in the
exact likelihood via the change-of-variables formula \citep{PapamakariosNaReMoLa21}.
We now explain how this probabilistic view composes across depth. This section may be skipped on a first reading without loss of continuity.

The key message is structural:
a deep causal transformer naturally defines a \emph{hierarchy} of conditional priors over layerwise
embeddings. Moreover, under the standard architectural convention --- that attention weights in layer
$\ell$ are computed from the previous-layer embeddings $x^{(\ell-1)}$ as in standard transformer blocks
\citep{VaswaniShPaUsJoGoKaPo17,XiongZoLiSo16} --- the nontrivial stability
correction induced by token-dependent mixing \emph{localizes} to the first such token-dependent map.
This makes it possible to capture the main geometric effect with an embedding-level ``attention prior''
module, while leaving the rest of a deep network unchanged.

\subsection{Setup: layer-wise embeddings and latent noise}
\label{subsec:deep_setup}

Let $L_{\mathrm{layers}}$ denote the number of transformer layers. Fixing a context length $T$ for the forward pass, let
\[
x^{(0)}_{1:T}\in(\mathbb{R}^d)^T
\]
be the input token embeddings (including positional encodings).
For each layer $\ell\in\{1,\dots,L_{\mathrm{layers}}\}$, we introduce latent noise
\[
\varepsilon^{(\ell)}_{1:T}=(\varepsilon^{(\ell)}_1,\dots,\varepsilon^{(\ell)}_T),\qquad
\varepsilon^{(\ell)}_t\in\mathbb{R}^d,\qquad
\varepsilon^{(\ell)}_t\sim\mathcal{N}(0,\sigma_\ell^2 I_d),
\]
and we view the layer output embeddings $x^{(\ell)}_{1:T}$ as generated by a causal layer map plus noise.
To keep the exposition focused on attention, we write a generic layer transformation as
\begin{equation}
x^{(\ell)}_t \;=\; g^{(\ell)}_t\!\big(x^{(\ell-1)}_{1:t}\big) \;+\; \varepsilon^{(\ell)}_t,
\qquad t=1,\dots,T,
\label{eq:deep_layer_gen}
\end{equation}
where causality means $g^{(\ell)}_t$ depends only on the prefix $x^{(\ell-1)}_{1:t}$ (via masking).
In a standard transformer block, $g^{(\ell)}$ comprises (multi-head) causal self-attention applied to
$x^{(\ell-1)}$, followed by an MLP and residual/normalization structure \citep{VaswaniShPaUsJoGoKaPo17,XiongZoLiSo16}; our results below only use the
fact that $g^{(\ell)}$ is \emph{computed from} $x^{(\ell-1)}$ and does not depend on the current-layer
noise $\varepsilon^{(\ell)}$.

Equation \eqref{eq:deep_layer_gen} induces a residual map for each layer,
\begin{equation}
\varepsilon^{(\ell)}_t
\;=\;
e^{(\ell)}_t\!\big(x^{(\ell)},x^{(\ell-1)}\big)
\;\triangleq\;
x^{(\ell)}_t - g^{(\ell)}_t\!\big(x^{(\ell-1)}_{1:t}\big).
\label{eq:deep_residual_map}
\end{equation}
This map is \emph{explicit} (forward-generative) as written: once $x^{(\ell-1)}$ is fixed,
$x^{(\ell)}$ is an additive-noise perturbation around $g^{(\ell)}(x^{(\ell-1)})$.

\subsection{Hierarchical conditional-prior factorization}
\label{subsec:deep_factorization}

Because \eqref{eq:deep_layer_gen} is causal in $t$, it defines a layer-wise conditional prior that factors
over positions:
\begin{equation}
p_{\sigma_\ell}\!\left(x^{(\ell)}_{1:T}\mid x^{(\ell-1)}_{1:T}\right)
\;=\;
\prod_{t=1}^T
\mathcal{N}\!\left(
x^{(\ell)}_t \,;\,
g^{(\ell)}_t(x^{(\ell-1)}_{1:t}),
\;\sigma_\ell^2 I_d
\right).
\label{eq:layer_conditional_prior}
\end{equation}
Stacking layers yields a \emph{hierarchical} latent-variable model over embeddings:
\begin{equation}
p\!\left(x^{(1)}_{1:T},\dots,x^{(L_{\mathrm{layers}})}_{1:T}\mid x^{(0)}_{1:T}\right)
\;=\;
\prod_{\ell=1}^{L_{\mathrm{layers}}}
p_{\sigma_\ell}\!\left(x^{(\ell)}_{1:T}\mid x^{(\ell-1)}_{1:T}\right).
\label{eq:deep_hierarchical_prior}
\end{equation}
Equation \eqref{eq:deep_hierarchical_prior} is the formal sense in which depth composes as a hierarchy
of conditional priors: each layer defines a stochastic transition on embeddings, driven by latent noise,
and the deep model is their composition.

\subsection{Exact likelihood and triangular structure across depth}
\label{subsec:deep_loglik}

As in the single-layer analysis, one can compute the exact embedding density induced by the latent noises
via change-of-variables \citep{PapamakariosNaReMoLa21}. Here it is convenient to work with the full collection of latent variables
\[
\varepsilon \;\triangleq\; \{\varepsilon^{(\ell)}_{1:T}\}_{\ell=1}^{L_{\mathrm{layers}}}
\qquad\text{and}\qquad
x \;\triangleq\; \{x^{(\ell)}_{1:T}\}_{\ell=1}^{L_{\mathrm{layers}}}.
\]
Define the global residual map $E$ by stacking \eqref{eq:deep_residual_map} over all $(\ell,t)$:
\[
\varepsilon = E(x; x^{(0)}).
\]
With causal masking, the Jacobian of $E$ with respect to $x$ is block lower-triangular under a natural
ordering of variables (e.g., increasing $(\ell,t)$). Therefore,
\begin{equation}
\log p(x\mid x^{(0)})
\;=\;
\log p(\varepsilon)
\;+\;
\log\left|\det J_{x\mapsto\varepsilon}(x)\right|
\;=\;
\sum_{\ell,t}\log p(\varepsilon^{(\ell)}_t)
\;+\;
\sum_{\ell,t}\log\left|\det\!\left(\frac{\partial e^{(\ell)}_t}{\partial x^{(\ell)}_t}\right)\right|.
\label{eq:deep_logdet_decomp}
\end{equation}
The proofs of triangularity and the determinant decomposition are standard and are deferred to
Appendix~\ref{app:proofs_deep}.

\subsection{Localization of the stability correction under standard conditioning}
\label{subsec:localization}

The single-layer results showed that the stability/log-determinant term becomes nontrivial when the
context summary depends on the \emph{current} embedding being generated (token-dependent mixing in the
same variable).
In the multi-layer setting \eqref{eq:deep_layer_gen}, the standard transformer convention is that the
attention weights in layer $\ell$ are computed from $x^{(\ell-1)}$, not from $x^{(\ell)}$.
Under this convention, the layer-wise residual map \eqref{eq:deep_residual_map} is affine in $x^{(\ell)}_t$,
which implies the diagonal derivative is the identity.

\begin{proposition}[No layer-wise stability correction under previous-layer conditioning]
\label{prop:no_logdet_deep}
Assume that for each layer $\ell\ge 2$, the layer map $g^{(\ell)}_t$ depends only on
$x^{(\ell-1)}_{1:t}$ and does not depend on $x^{(\ell)}$.
Then for every $t$,
\begin{equation}
\frac{\partial e^{(\ell)}_t}{\partial x^{(\ell)}_t}
\;=\;
I_d,
\qquad\text{hence}\qquad
\log\left|\det\!\left(\frac{\partial e^{(\ell)}_t}{\partial x^{(\ell)}_t}\right)\right|
\;=\;
0.
\label{eq:deep_identity_diag}
\end{equation}
\end{proposition}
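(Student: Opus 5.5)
The argument is a direct differentiation of the layer-wise residual map \eqref{eq:deep_residual_map}, viewing $x^{(\ell-1)}$ and $x^{(\ell)}$ as distinct coordinate blocks of the global variable $x$ on which the map $E$ of Section~\ref{subsec:deep_loglik} acts. Fix a layer $\ell\ge 2$ and a position $t$. By definition,
\[
e^{(\ell)}_t\big(x^{(\ell)},x^{(\ell-1)}\big) \;=\; x^{(\ell)}_t - g^{(\ell)}_t\big(x^{(\ell-1)}_{1:t}\big).
\]
I will differentiate each of the two terms separately with respect to the block $x^{(\ell)}_t\in\mathbb{R}^d$.

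The first term is the coordinate projection onto $x^{(\ell)}_t$, so its derivative with respect to $x^{(\ell)}_t$ is $I_d$. For the second term, the hypothesis says $g^{(\ell)}_t$ is a function of $x^{(\ell-1)}_{1:t}$ only. In the global ordering by increasing $(\ell,t)$, the coordinates $x^{(\ell-1)}$ are independent variables, disjoint from $x^{(\ell)}$. Hence $\partial g^{(\ell)}_t/\partial x^{(\ell)}_t = 0$, and therefore $\partial e^{(\ell)}_t/\partial x^{(\ell)}_t = I_d$. Since $\det I_d = 1$, the logarithm of its absolute value is $0$, which is \eqref{eq:deep_identity_diag}.

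The one point needing care, and the only real obstacle, is that this is a partial derivative inside the global change of variables, not a total derivative. Through earlier layers, $x^{(\ell-1)}$ is itself generated from noise, so one might expect $x^{(\ell)}_t$ to feed back into $g^{(\ell)}_t$. It does not. In the map $E$ all layer embeddings are treated as independent coordinates, and the dependence of $g^{(\ell)}_t$ on $x^{(\ell-1)}$ contributes only to off-diagonal blocks $\partial e^{(\ell)}_t/\partial x^{(\ell-1)}_s$. Because these blocks lie strictly below the diagonal under the $(\ell,t)$ ordering, they do not enter the determinant in \eqref{eq:deep_logdet_decomp}. I will state this explicitly so that the conclusion lines up with the block-triangular decomposition used there.

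Finally, I will note why the restriction $\ell\ge 2$ matters. For $\ell=1$ the hypothesis is not assumed, and if the first map is a token-dependent attention in the same variable (as in Section~\ref{sec:single_layer_margin}), the diagonal block is instead $I_d-\Sigma_tA$ by Theorem~\ref{thm:jacobian_dispersion}. Thus the entire nontrivial correction in \eqref{eq:deep_logdet_decomp} is confined to the first layer.
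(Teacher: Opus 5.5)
Your proposal is correct and follows the same route as the paper's proof. Both arguments note that, under the hypothesis, $e^{(\ell)}_t = x^{(\ell)}_t - g^{(\ell)}_t(x^{(\ell-1)}_{1:t})$ is affine in $x^{(\ell)}_t$ with coefficient $I_d$, so the diagonal block is $I_d$ and its log-determinant is zero; your extra remarks on partial derivatives and the strictly-lower off-diagonal blocks agree with the paper's block-triangular lemma but are not needed for the proposition itself.
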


Proposition~\ref{prop:no_logdet_deep} says: once the attention pattern used to generate a layer is
computed from the previous layer, that layer contributes \emph{only} the Gaussian energy term
$\|x^{(\ell)}_t-g^{(\ell)}_t(x^{(\ell-1)}_{1:t})\|_2^2$ and no additional stability factor.

Where, then, does the nontrivial stability term live in a deep model?
It lives precisely in whichever stage uses token-dependent mixing \emph{with respect to the variable
being generated}.
In this paper, that stage is the embedding-level attention prior analyzed earlier: a causal attention map
that mixes past \emph{values} using weights that depend on the current \emph{query} formed from the
same embedding variable, producing the diagonal block $I_d-\Sigma_t A$ and the margin/barrier term.

\begin{corollary}[Localization to a single attention-prior stage]
\label{cor:localize_first}
Consider a deep model in which:
(i) the first stage defines an embedding-level attention prior of the form analyzed in
Section~\ref{sec:single_layer_margin}, and
(ii) all subsequent layers satisfy the previous-layer conditioning assumption of
Proposition~\ref{prop:no_logdet_deep}.
Then the total stability/log-determinant contribution in \eqref{eq:deep_logdet_decomp} is exactly the
single-layer stability term from the attention-prior stage; deeper layers contribute zero.
\end{corollary}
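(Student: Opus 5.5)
The plan is to work directly from the triangular decomposition \eqref{eq:deep_logdet_decomp} and sort its stability terms by stage. The starting point is the global residual map $E$, stacked over all pairs $(\ell,t)$ in increasing order. We take as given the block lower-triangularity established for \eqref{eq:deep_logdet_decomp}. Then $\log|\det J_{x\mapsto\varepsilon}|$ equals the sum of the log-determinants of the diagonal blocks $\partial e^{(\ell)}_t/\partial x^{(\ell)}_t$. The task therefore reduces to identifying each of these diagonal blocks, one stage at a time.

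\textbf{The attention-prior stage.} By hypothesis (i), the first stage has exactly the form analyzed in Section~\ref{sec:single_layer_margin}: its residual is $e_t(x)=x_t-\mu_t(x)$, and the query is formed from the same variable $x_t$. Theorem~\ref{thm:jacobian_dispersion} then gives the diagonal block $I_d-\Sigma_tA$. Summing over $t$, the contribution of this stage is $\sum_t\log|\det(I_d-\Sigma_tA)|$, which is exactly the single-layer term in Corollary~\ref{cor:logdet_penalty}.

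\textbf{The subsequent layers.} By hypothesis (ii), each later layer's map $g^{(\ell)}_t$ depends only on $x^{(\ell-1)}_{1:t}$. Proposition~\ref{prop:no_logdet_deep} therefore gives $\partial e^{(\ell)}_t/\partial x^{(\ell)}_t=I_d$, so each of these layers contributes $\log\det I_d=0$. The off-diagonal blocks $\partial e^{(\ell)}_t/\partial x^{(\ell-1)}_s$ may be arbitrary. This does not matter, because they lie strictly below the diagonal in the ordering $(\ell,t)$ and so do not enter the determinant.

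\textbf{Main obstacle.} The hardest step is checking that the block-triangular ordering is still valid once the attention-prior stage is included. Within that stage, the dependence of $e_t$ on $x_t$ is not affine, and $e_t$ also depends on $x_s$ for $s<t$. I would handle this by ordering variables first by stage and then by position. The first stage is strictly causal, so its internal Jacobian is block lower-triangular with diagonal blocks $I_d-\Sigma_tA$. Later stages depend only on earlier-stage variables together with their own $x^{(\ell)}_t$, and this dependence on $x^{(\ell)}_t$ has identity coefficient. Hence every block above the diagonal is zero. With that verified, adding up the diagonal contributions gives the stated identity: the total stability term is the single-layer term, and deeper layers contribute zero.
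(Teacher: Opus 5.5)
Your proposal is correct and follows essentially the same route as the paper: you split the triangular log-determinant into per-$(\ell,t)$ diagonal terms, kill every $\ell\ge 2$ term with Proposition~\ref{prop:no_logdet_deep}, and identify the remaining first-stage terms as $\sum_t \log|\det(I_d-\Sigma_tA)|$. Your extra check that block-triangularity survives the strictly causal dependence of the attention-prior residuals on earlier positions $x_s$ ($s<t$) within the same stage is a worthwhile refinement, since the paper's Lemma~\ref{lem:deep_block_triangular} is stated only for residuals whose same-layer dependence runs solely through $x^{(\ell)}_t$.
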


Corollary~\ref{cor:localize_first} formalizes the practical takeaway used later in our experiments:
the margin/barrier term implied by token-dependent causal mixing can be captured by a single lightweight
module acting on embeddings, without altering the rest of a deep transformer stack.

\subsection{Multi-head attention, residual pathways, and normalization}
\label{subsec:deep_mha_residual_ln}

The statements in this section were written for a generic causal layer map
$g^{(\ell)}_t(\cdot)$ to keep notation light. We now briefly connect this abstraction to a standard
transformer block with multi-head attention, MLPs, residual connections, and normalization.

A typical pre-norm transformer block (one layer) can be written schematically as
\begin{align}
u^{(\ell)}_{1:T} &= x^{(\ell-1)}_{1:T} + \mathrm{MHA}^{(\ell)}\!\left(\mathrm{LN}\!\left(x^{(\ell-1)}_{1:T}\right)\right), \label{eq:block_mha_residual}\\
\tilde{x}^{(\ell)}_{1:T} &= u^{(\ell)}_{1:T} + \mathrm{MLP}^{(\ell)}\!\left(\mathrm{LN}\!\left(u^{(\ell)}_{1:T}\right)\right), \label{eq:block_mlp_residual}
\end{align}
where $\mathrm{MHA}^{(\ell)}$ denotes masked (causal) multi-head self-attention and $\mathrm{LN}$ is
layer normalization. In our latent-noise view, we can introduce stochasticity at the layer output via
\begin{equation}
x^{(\ell)}_t \;=\; \tilde{x}^{(\ell)}_t \;+\; \varepsilon^{(\ell)}_t,
\qquad
\varepsilon^{(\ell)}_t\sim\mathcal{N}(0,\sigma_\ell^2 I_d).
\label{eq:block_add_noise}
\end{equation}
Equivalently, \eqref{eq:block_mha_residual} to \eqref{eq:block_add_noise} defines the layer map
$g^{(\ell)}_t(x^{(\ell-1)}_{1:t})\triangleq \tilde{x}^{(\ell)}_t$ appearing in
\eqref{eq:deep_layer_gen}.

It is worth noting that Proposition~\ref{prop:no_logdet_deep} (no additional stability/log-determinant correction from deeper layers)
continues to hold for these standard transformer blocks provided the following \emph{conditioning convention} is respected: \emph{For each layer $\ell$, the attention weights used inside $\mathrm{MHA}^{(\ell)}$ are computed from
the previous-layer embeddings (e.g., from $\mathrm{LN}(x^{(\ell-1)})$), and the layer noise
$\varepsilon^{(\ell)}$ enters additively at the output as in \eqref{eq:block_add_noise}.}

Under this convention, the residual map at layer $\ell$ has the form
$e^{(\ell)}_t = x^{(\ell)}_t - g^{(\ell)}_t(x^{(\ell-1)}_{1:t})$ and is \emph{affine} in $x^{(\ell)}_t$,
so $\partial e^{(\ell)}_t/\partial x^{(\ell)}_t = I_d$ and the layer contributes no extra stability factor
in \eqref{eq:deep_logdet_decomp}. The geometric ``margin/log-barrier'' effect therefore appears only in
stages where the attention pattern depends on the \emph{current} embedding variable being generated (as in
our embedding-level attention prior), not in layers whose attention patterns are computed from earlier
representations.

The ``no correction'' result is not a claim that depth is irrelevant; rather, it is a statement about where the exact change-of-variables stability factor appears. If a layer is modified so that its attention weights depend on the current-layer variable $x^{(\ell)}$ being generated (e.g., via an implicit/self-consistent layer definition, or by placing $x^{(\ell)}$ inside the queries used in the softmax for the same layer), then the diagonal derivative $\partial e^{(\ell)}_t/\partial x^{(\ell)}_t$ is no longer the identity. Such a modified layer would contribute its own stability/log-determinant term with an analogous margin-to-degeneracy interpretation. We do not pursue these variants here, but the analysis in Section~\ref{sec:single_layer_margin} can be applied layer-wise.

\section{Experiments}
\label{sec:experiments}
In this section, we empirically validate our theoretical results using controlled datasets. While large-scale model training in industrial settings is beyond the scope of this paper, we hope these initial results encourage further investigation and lead to practical insights for robust foundation models.

\subsection{Shared experimental setup}
\label{sec:exp_setup}

We train on the \texttt{wikitext-103} and \texttt{simplebooks-92} corpora \citep{merity2018analysis, nguyen2019simplebooks} and report results on their validation splits.
\texttt{wikitext-103} contains approximately $541\mathrm{M}$ bytes in its training split and $1.15\mathrm{M}$ bytes in its validation split; \texttt{simplebooks-92} contains approximately $421\mathrm{M}$ bytes in its training split and $0.91\mathrm{M}$ bytes in its validation split. 
For each dataset, we train a byte-pair encoding (BPE) tokenizer with a vocabulary size of $|\mathcal{V}|{=}8192$, including padding and unknown tokens. 
This yields approximately $136\mathrm{M}$/$285\mathrm{K}$ train/validation tokens for \texttt{wikitext-103} and $105\mathrm{M}$/$240\mathrm{K}$ train/validation
tokens for \texttt{simplebooks-92}.

We train a causal GPT with rotary positional embeddings \citep{su2021roformer}: $d{=}384$, 6 attention heads, 12 transformer blocks, and context length $T{=}512$.
This architecture has ${\sim}27.5\mathrm{M}$ transformer parameters. 
It matches the hierarchical view in Section~\ref{sec:deep_hierarchy}: we pair a deep observation model (the GPT) with a single embedding-level attention-prior stage, and use the standard-conditioning localization results (Proposition~\ref{prop:no_logdet_deep}, Corollary~\ref{cor:localize_first}) to justify focusing the nontrivial stability correction at the embedding level.

The \texttt{EmbeddingPrior} is a single strictly causal attention layer
parameterized by a learnable $d{\times}d$ matrix $A$ that computes the
attention-weighted covariance $\Sigma_t$ and the log-Jacobian margin term
$-\sum_t \log|\det(I_d-\Sigma_t A)|$
(Theorem~\ref{thm:jacobian_dispersion}). We use the \emph{margin-only} variant:
\[
\mathcal{L} = \mathcal{L}_{\mathrm{CE}} + \lambda_m \cdot
\mathcal{L}_{\mathrm{margin}},
\]
after dropping the quadratic residual term from the training objective, as $L_2$ regularization is unnecessary given the explicit regularization provided by AdamW's weight decay and early stopping.

For proper comparison, the cross-entropy-only and margin-regularized runs use identical architectures initialized with the same random seed.
Optimizer: AdamW ($\mathrm{lr}{=}3\times10^{-4}$, weight decay $0.1$); cosine learning-rate schedule; 20 epochs; batch size 32; gradient clipping at norm 1.0. 
All experiments are performed on an H200 GPU.

Because both datasets are subword-tokenized, we report cross-entropy (in nats)
and token-level perplexity as the primary predictive-quality metrics. For
noise-robustness experiments, we also report the ratio of noisy to clean
perplexity.

\subsection{Training: cross-entropy-only vs.\ margin-regularized (Figure~\ref{fig:training_curves})}
\label{sec:exp_training}

We first demonstrate that the stability/margin term implied by the change-of-variables correction (Section~\ref{sec:single_layer_margin}) can be used as a drop-in auxiliary loss on real language data with only a modest change in predictive quality. 
This is a strictly controlled comparison: architecture, optimizer, and seed are identical; the only difference is whether the stability (log-barrier) term is included in the objective (see Section~\ref{sec:training_objective} for the corresponding optimization view). 
In this subsection, we focus on the cross-entropy-only baseline ($\lambda_m=0$) and a representative, well-performing nonzero margin setting ($\lambda_m=0.05$); a broader comparison of margin-penalty settings is reported later in Section~\ref{sec:exp_lambda}.

\begin{figure}[H]
\centering
\includegraphics[width=\linewidth]{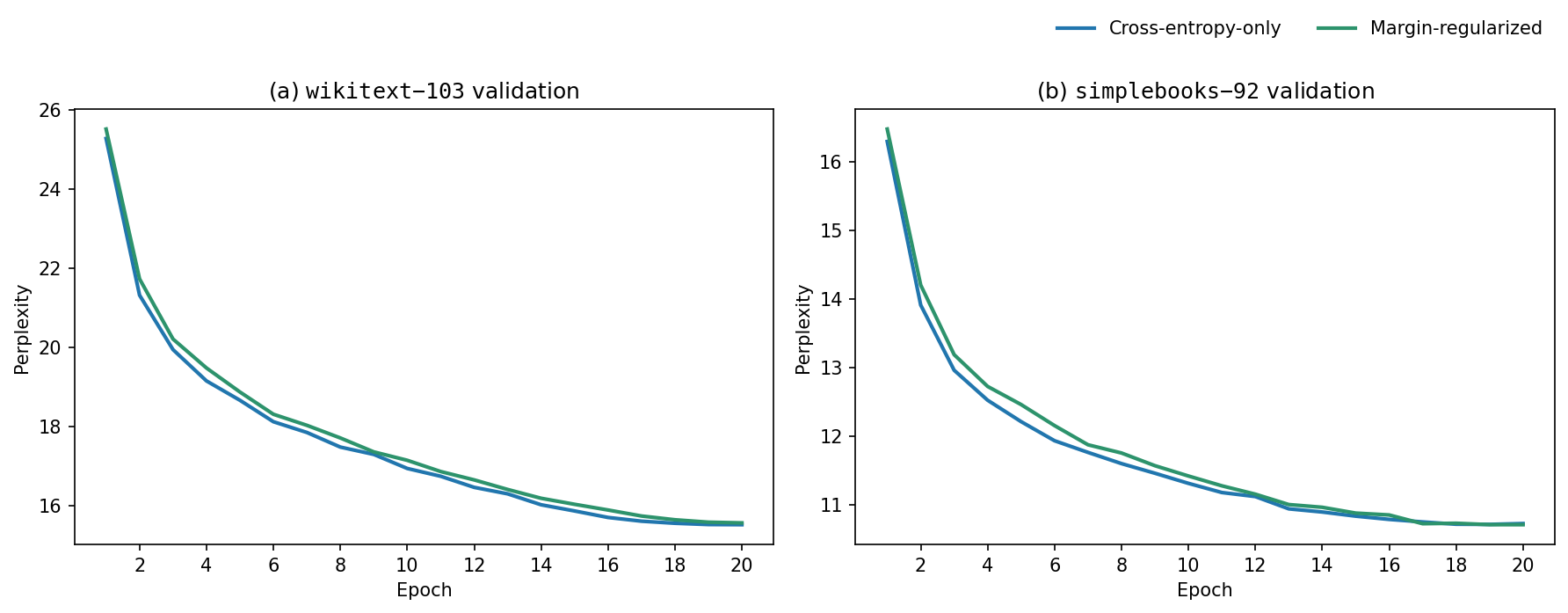}
\caption{\textbf{Training curves for cross-entropy-only vs.\ margin-regularized models.}
Perplexity over 20 epochs for the cross-entropy-only baseline (blue) and the margin-regularized model (green, $\lambda_m{=}0.05$).
From left to right: \texttt{wikitext-103} validation then \texttt{simplebooks-92} validation.
Across both datasets, the validation trajectories remain very close, indicating that the theory-derived margin penalty preserves predictive quality while adding a stability-oriented training signal.
}
\label{fig:training_curves}
\end{figure}

We train two GPT models on each of \texttt{wikitext-103} and \texttt{simplebooks-92} as described in Section~\ref{sec:exp_setup}, and report validation perplexity at every epoch. Both models converge smoothly with nearly overlapping validation curves on both datasets (Figure~\ref{fig:training_curves}).
The final training/validation perplexities are:

\begin{center}
\begin{tabular}{llcc}
\toprule
\textbf{Dataset} & \textbf{Objective} & \textbf{Training} & \textbf{Validation} \\
\midrule
\texttt{wikitext-103} & Cross-entropy-only & 12.31 & 15.53 \\
\texttt{wikitext-103} & Margin-regularized & 12.55 & 15.57 \\
\texttt{simplebooks-92} & Cross-entropy-only & 12.84 & 10.73 \\
\texttt{simplebooks-92} & Margin-regularized & 13.19 & 10.71 \\
\bottomrule
\end{tabular}
\end{center}

Across both datasets, the $\lambda_m=0.05$ margin regularizer leaves clean predictive performance essentially unchanged. 
On \texttt{wikitext-103}, the margin-regularized model finishes with slightly higher validation perplexity (15.57 vs.\ 15.53), while on \texttt{simplebooks-92} it is slightly better (10.71 vs.\ 10.73).
These small gaps support the interpretation of the margin term as a mild regularizer rather than a competing objective: it shapes the geometry of the learned representations without materially degrading the observation-model fit.

\subsection{Robustness to input perturbations (Figure~\ref{fig:robustness})}
\label{sec:exp_robustness}

Next, we test whether the stability/margin term improves robustness to perturbations in the input representation.
Our theory establishes that the exact embedding log-prior includes a log-volume stability factor that heavily penalizes near-degenerate configurations (Section~\ref{sec:single_layer_margin}). From an optimization perspective, this acts as a margin/log-barrier that discourages locally ill-conditioned geometry (Section~\ref{sec:training_objective}).
We therefore expect models trained with the stability term to degrade more gracefully under embedding-space corruption.

To test that prediction, we evaluate robustness under two simple
embedding-space perturbation families: unstructured i.i.d.\ noise and
structured drift \citep{sahoo2026can}. In our Gaussian setting, this
translates to:
\begin{enumerate}
\item \textbf{Unstructured Gaussian noise}: each token embedding is independently perturbed as
      \[
      x_t' = x_t + \varepsilon_t,
      \qquad
      \varepsilon_t \sim \mathcal{N}(0,\sigma^2 I),
      \]
      with $\sigma \in \{0, 0.05, 0.10, 0.15, 0.20, 0.25\}$. This probes local embedding corruption.
\item \textbf{Structured Gaussian drift}: each token embedding is perturbed along a
      shared random direction with token-specific Gaussian magnitudes,
      \[
      x_t' = x_t + \sigma a_t v,
      \qquad
      v \sim \mathcal{N}(0, I_d), \ \|v\|_2 = 1,
      \qquad
      a_t \sim \mathcal{N}(0,1),
      \]
      with $\sigma \in \{0, 1.5, 3.0, 4.5, 6.0\}$. This probes robustness to a
      coherent sequence-level shift along a shared random embedding direction.
\end{enumerate}
Under each perturbation, we evaluate the cross-entropy-only baseline and the
margin-regularized model with $\lambda_m{=}0.05$ on repeated
validation-window subsamples, using 100 subsamples of 16 validation windows
per condition. In the main figure, we report relative degradation
(perturbed perplexity / clean perplexity) over \texttt{wikitext-103} as boxplots over the subsamples.
Here $\sigma$ is a family-specific scale parameter, so its numerical values are
not directly comparable across the two perturbation families.
The corresponding \texttt{simplebooks-92} figure and the full numeric results
for both datasets are reported in Appendix~\ref{app:appendix_robustness}.

\begin{figure}[H]
\centering
\includegraphics[width=\linewidth]{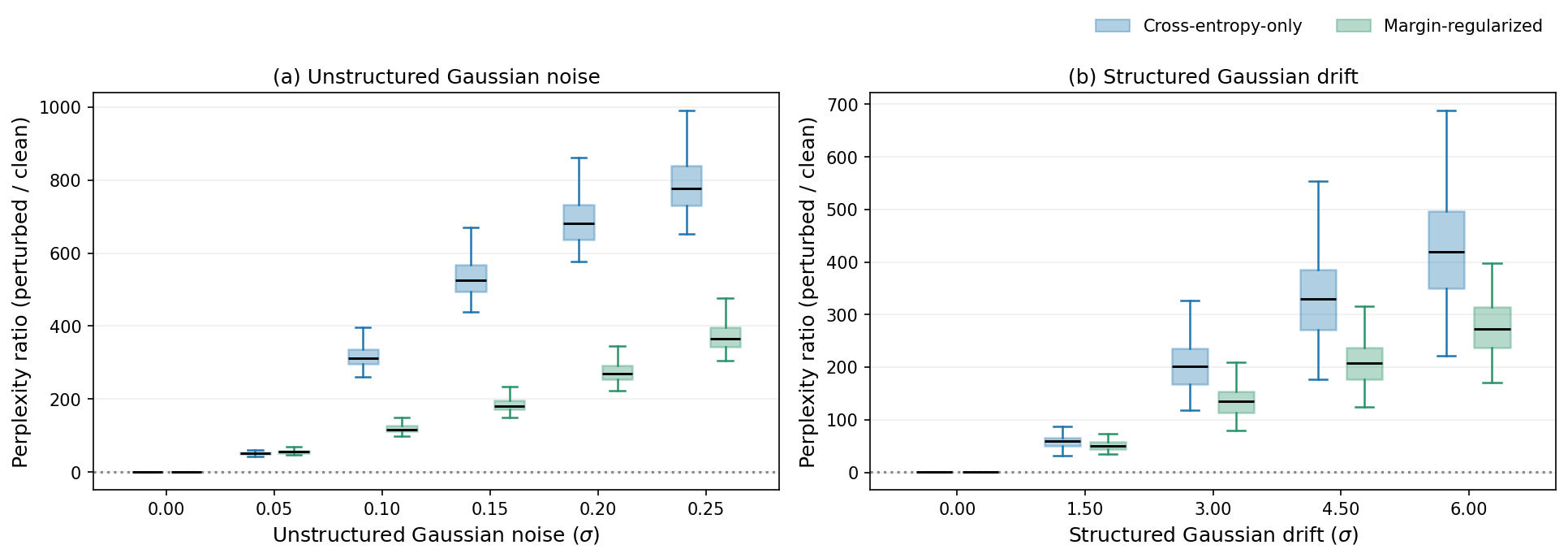}
\caption{\textbf{Robustness on \texttt{wikitext-103}: relative degradation under unstructured Gaussian noise and structured Gaussian drift.}
(a)~Unstructured Gaussian noise: grouped boxplots of perturbed perplexity / clean perplexity across 100 validation subsamples for $\sigma \in \{0, 0.05, 0.10, 0.15, 0.20, 0.25\}$. (b)~Structured Gaussian drift: grouped boxplots for $\sigma \in \{0, 1.5, 3.0, 4.5, 6.0\}$. In both panels, each box summarizes the empirical distribution across validation subsamples.}
\label{fig:robustness}
\end{figure}

Figure~\ref{fig:robustness} shows the same qualitative pattern under both
perturbation families on \texttt{wikitext-103}. Under unstructured
noise, the margin-regularized model becomes increasingly more robust as the
perturbation strength grows: the gap in relative degradation widens steadily
across the tested noise levels. Under structured drift, the same
monotone separation appears again, with the margin-regularized model remaining
below the baseline at every nonzero drift level. This is exactly the pattern
predicted by the stability-margin view: the regularizer keeps the learned
embedding geometry farther from locally ill-conditioned configurations, and
that safety buffer translates into graceful degradation under
unstructured noise and coherent embedding drift.

\begin{center}
\begin{tabular}{llcc}
\toprule
\textbf{Dataset} & \textbf{Objective} & \textbf{Ratio (Noise, $\sigma=0.25$)} & \textbf{Ratio (Drift, $\sigma=6.0$)} \\
\midrule
\texttt{wikitext-103} & Cross-entropy-only & 775.81 & 419.11 \\
\texttt{wikitext-103} & Margin-regularized & 364.92 & 273.24 \\
\texttt{simplebooks-92} & Cross-entropy-only & 413.45 & 250.71 \\
\texttt{simplebooks-92} & Margin-regularized & 130.91 & 107.96 \\
\bottomrule
\end{tabular}
\end{center}

The endpoint perplexity ratios in the table show that the same pattern holds across both
datasets. At $\sigma{=}0.25$, the unstructured noise degradation ratio drops from
$775.81{\times}$ to $364.92{\times}$ on \texttt{wikitext-103} and from
$413.45{\times}$ to $130.91{\times}$ on \texttt{simplebooks-92}. At the
largest structured-drift level $\sigma{=}6.0$, the degradation ratio drops from
$419.11{\times}$ to $273.24{\times}$ on \texttt{wikitext-103} and from
$250.71{\times}$ to $107.96{\times}$ on \texttt{simplebooks-92}. Together,
these experiments demonstrate strong empirical support for robustness against
both unstructured noise and structured drift in embedding
space, directly supporting the stability-margin interpretation of the learned
prior.

\subsection{Support tokens and margin concentration (Figure~\ref{fig:support_tokens})}
\label{sec:exp_support_tokens}

We next examine our support-token interpretation more directly.
Section~\ref{sec:single_layer_margin} defines support tokens as the positions closest to the degeneracy boundary, i.e., the positions that most strongly constrain the sequence-level margin.
In the learned \texttt{EmbeddingPrior} used here, our earlier result (Theorem~\ref{thm:jacobian_dispersion}) shows that the residual Jacobian block at position $t$ is $I_d-\Sigma_t A$,
so the corresponding token-wise contribution to the log-barrier term from \eqref{eq:matrix_prior_objective} is
\[
b_t \;=\; -\log \left|\det(I_d-\Sigma_t A)\right|.
\]
Large values of $b_t$ identify positions whose attention geometry lies closest to the degeneracy boundary and therefore most strongly constrains the sequence-level margin.
To compare how that exact barrier pressure is distributed across positions, we convert the barrier scores into a normalized distribution over positions:
\[
\pi_t \propto \exp(b_t).
\]
The cumulative curve then reports how much of the total barrier
pressure is captured by the top-$k$ ranked tokens. We compare the learned
margin-regularized prior at a representative, well-performing nonzero setting
($\lambda_m{=}0.05$ on both datasets) against a randomly initialized baseline
evaluated on the same embeddings. The random-init baseline uses the same
embeddings and the same barrier-based construction, but removes any learned geometry
from the barrier term.

\begin{figure}[H]
\centering
\includegraphics[width=\linewidth]{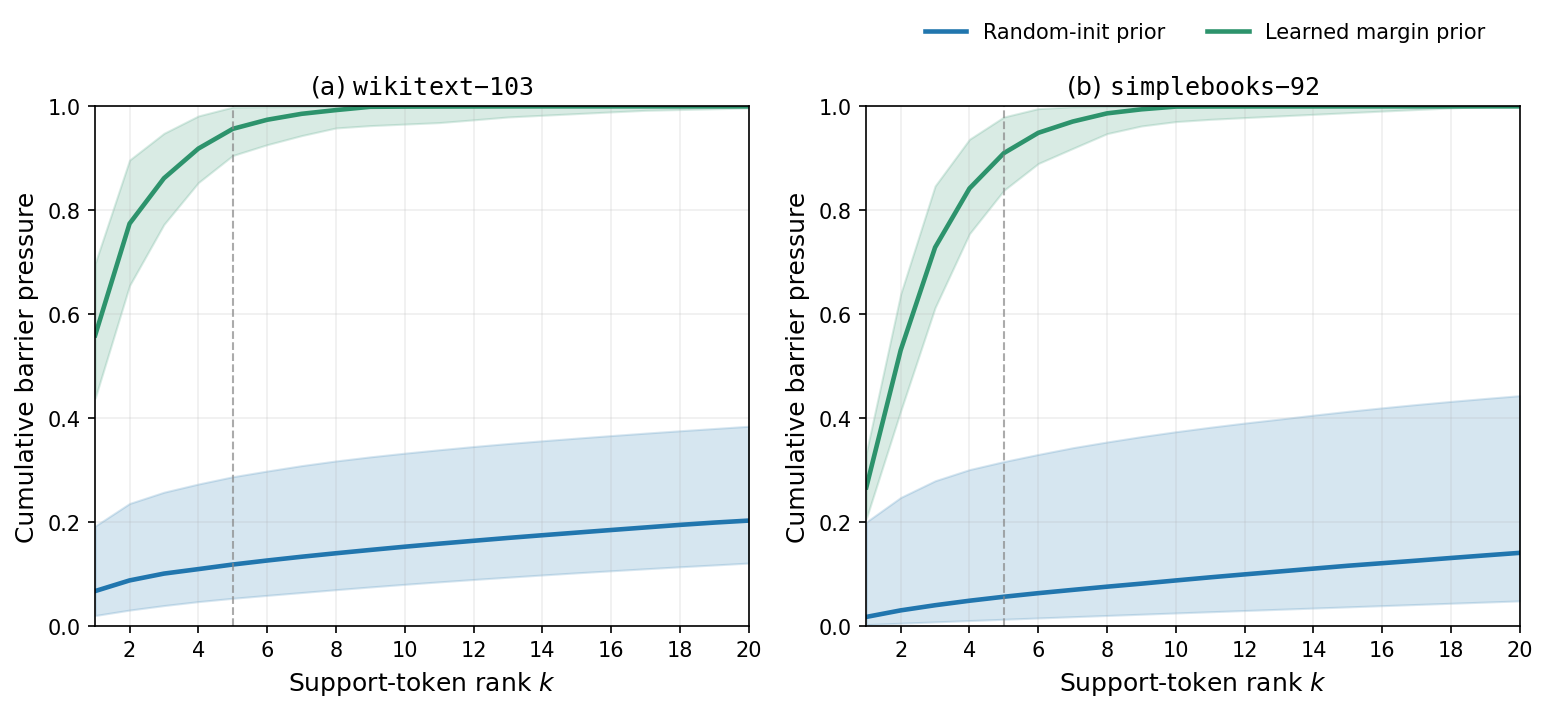}
\caption{\textbf{Support-token concentration under the learned margin prior and a random-initialized baseline.}
For each dataset, we compute the exact per-position barrier score
$b_t$, convert it into a normalized distribution over positions, sort token
positions by barrier weight, and plot the cumulative barrier pressure carried
by the top-$k$ support tokens. Solid lines show the median across validation subsamples and
shaded bands show the empirical 95\% interval. Left:
\texttt{wikitext-103}. Right: \texttt{simplebooks-92}. The dashed line marks
$k{=}5$.}
\label{fig:support_tokens}
\end{figure}

Figure~\ref{fig:support_tokens} shows a clear separation between the learned
and random baselines.
Under the learned margin prior, the cumulative curves rise sharply,
indicating that the exact Jacobian-barrier pressure is concentrated in a very
small set of ranked support tokens. By contrast, the random-init baseline
spreads barrier pressure broadly across many positions. This is exactly the
pattern we would expect if the support-token structure were a learned
geometric property rather than an artifact of the normalization itself.

\begin{center}
\begin{tabular}{llcc}
\toprule
\textbf{Dataset} & \textbf{Prior} & \textbf{Top-5 share} & \textbf{Effective support size} \\
\midrule
\texttt{wikitext-103} & Learned margin & 0.957 & 3.10 \\
\texttt{wikitext-103} & Random-init & 0.118 & 382.08 \\
\addlinespace[2pt]
\texttt{simplebooks-92} & Learned margin & 0.910 & 4.88 \\
\texttt{simplebooks-92} & Random-init & 0.056 & 443.89 \\
\bottomrule
\end{tabular}
\end{center}

The table above summarizes the same margin concentration pattern numerically.
Writing the sorted normalized barrier weights as
$\pi_{(1)} \ge \pi_{(2)} \ge \cdots$, we report two summaries:
\begin{enumerate}
\item \textbf{Top-5 share}:
      \[
      \sum_{i=1}^5 \pi_{(i)}.
      \]
      This measures how much of the total barrier pressure is carried by the
      five token positions with largest barrier weight.
\item \textbf{Effective support size}:
      \[
      \exp(H(\pi)),
      \qquad
      H(\pi) = -\sum_t \pi_t \log \pi_t .
      \]
      This measures how many token positions carry that barrier weight in
      aggregate; it is the exponential of the entropy of the normalized
      barrier-weight distribution, so it behaves like a count of how many
      token positions meaningfully contribute.
\end{enumerate}

Intuitively, a large top-5 share means that the sequence-level margin is
controlled by a small set of support tokens. Likewise, a small effective support size
means that a few token positions carry most of the barrier pressure. On
\texttt{wikitext-103}, the learned margin prior is extremely concentrated: the
effective support size is only $3.1$ tokens (95\% interval $[2.13,~4.02]$), and the top-$5$ tokens account for over $95\%$ of the total
barrier pressure (95\% interval $[90.5\%,~99.9\%]$). The same story holds on \texttt{simplebooks-92}: the effective
support size remains small at $4.9$ tokens (95\% interval $[3.83,~6.15]$), while
the top-$5$ tokens capture over $91\%$ of the total barrier pressure (95\% interval $[83.8\%,~98.0\%]$). By
contrast, the random-init baseline remains highly diffuse on both datasets,
with effective support sizes in the hundreds of tokens and only a small
fraction of the barrier pressure captured by the top five positions.

If the sharp concentration of barrier pressure across a few token positions
were merely an artifact of the normalization procedure, then applying the same
barrier-based construction to an untrained prior would also produce sharply
concentrated curves. Instead, under the random-init baseline, the barrier
pressure spreads broadly across token positions. The learned
\texttt{EmbeddingPrior} therefore organizes the exact Jacobian-barrier
pressure around a small set of geometrically critical support tokens,
precisely as the support-token interpretation predicts.

\subsection{Choosing the margin penalty (Figure~\ref{fig:lambda_sweep})}
\label{sec:exp_lambda}

We close with a direct model-selection exercise over a range of possible margin
settings
\[
\lambda_m \in \{0.00, 0.02, 0.05, 0.10, 0.25\}.
\]
The theory (Section~\ref{sec:training_objective}) predicts a trade-off:
increasing $\lambda_m$ should improve stability against perturbations, but an
overly large penalty can distort the learned embedding geometry and eventually
erode predictive quality. To measure that trade-off, we train one model for each
candidate margin-penalty setting, evaluate each model on repeated validation
subsamples, and summarize four quantities:
\begin{enumerate}
\item \textbf{Clean perplexity}, as the predictive-quality metric.
\item \textbf{Degradation ratio} at fixed unstructured Gaussian noise
      $\sigma{=}0.25$, defined as perturbed perplexity divided by clean
      perplexity.
\item \textbf{Contextual embedding error}, a model-side metric motivated
      by the residual map $e_t(x)=x_t-\mu_t(x)$ from
      Section~\ref{sec:graphical_model} and by the squared-error fit term in
      \eqref{eq:matrix_prior_objective}. Concretely, it is the normalized
      squared distance between the true token embedding and the embedding
      predicted from the model's next-token distribution given the preceding
      context. Smaller values mean the token embedding is more tightly
      predicted by context; larger values mean more embedding-space variation remains unexplained.
\item \textbf{Effective support size}, i.e., the entropy-based effective number of support tokens under the learned prior. This metric requires a trained prior, so it is only reported for nonzero margin-penalty settings. For the no-prior baseline, see the random-initialized comparison in Section~\ref{sec:exp_support_tokens}.
\end{enumerate}

\begin{figure}[H]
\centering
\includegraphics[width=\linewidth]{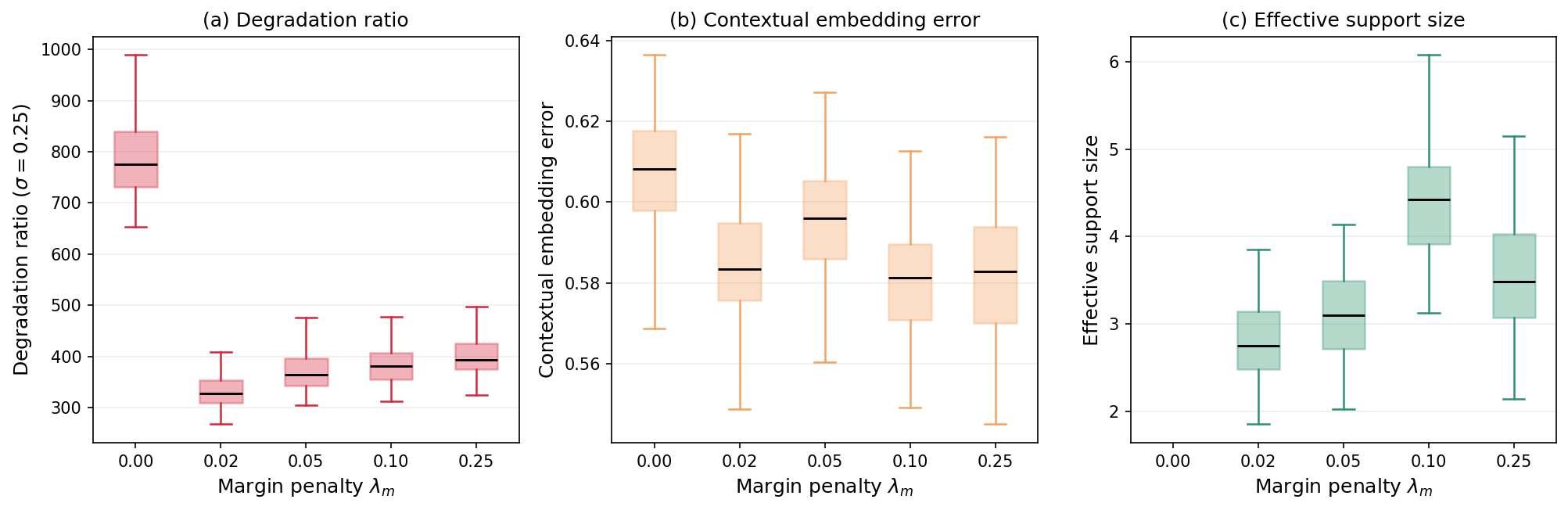}
\caption{\textbf{Selecting $\lambda_m$ on \texttt{wikitext-103}.}
Grouped boxplots over 100 validation subsamples for
$\lambda_m \in \{0.00, 0.02, 0.05, 0.10, 0.25\}$.
(a)~Degradation ratio at fixed unstructured Gaussian noise $\sigma{=}0.25$.
(b)~Contextual embedding error.
(c)~Effective support size.}
\label{fig:lambda_sweep}
\end{figure}

\begin{table}[H]
\centering
\caption{Regularization metrics at the three most relevant margin settings.
Full two-dataset results for all five trained values are reported in
Appendix~\ref{app:appendix_lambda}.}
\label{tab:lambda_sweep}
\small
\begin{tabular}{llcccc}
\toprule
\shortstack[c]{\textbf{Dataset}\\\mbox{}} & \shortstack[c]{\textbf{$\lambda_m$}\\\mbox{}} & \shortstack[c]{\strut\textbf{Clean}\\\textbf{perplexity}\strut} & \shortstack[c]{\strut\textbf{Degradation}\\\textbf{ratio}\strut} & \shortstack[c]{\strut\textbf{Contextual}\\\textbf{embedding error}\strut} & \shortstack[c]{\strut\textbf{Effective}\\\textbf{support size}\strut} \\
\midrule
\texttt{wikitext-103} & 0.00 & 15.68 & 775.81 & 0.608 & --- \\
\texttt{wikitext-103} & 0.02 & 15.74 & 328.51 & 0.583 & 2.75 \\
\texttt{wikitext-103} & 0.05 & 15.71 & 364.92 & 0.596 & 3.10 \\
\addlinespace[2pt]
\texttt{simplebooks-92} & 0.00 & 10.67 & 413.45 & 0.626 & --- \\
\texttt{simplebooks-92} & 0.02 & 10.75 & 127.30 & 0.554 & 4.77 \\
\texttt{simplebooks-92} & 0.05 & 10.67 & 130.91 & 0.557 & 4.88 \\
\bottomrule
\end{tabular}
\end{table}

\paragraph{(a) Predictive quality remains nearly flat.}
Table~\ref{tab:lambda_sweep} shows that clean validation perplexity changes
very little across the low nonzero range. On \texttt{wikitext-103}, the three
settings $\lambda_m \in \{0.00, 0.02, 0.05\}$ differ by less than
$0.07$ perplexity. On \texttt{simplebooks-92}, $\lambda_m{=}0.05$ is slightly
better on clean perplexity, but the gap relative to $\lambda_m{=}0.02$ is still small.
Thus, the margin penalty behaves as a mild regularizer rather than a competing objective.

\paragraph{(b) Robustness improves sharply at the first nonzero setting.}
The strongest signal in both datasets is the degradation ratio. On
\texttt{wikitext-103}, the median ratio drops from $775.81{\times}$ at
$\lambda_m{=}0.00$ to $328.51{\times}$ at $\lambda_m{=}0.02$, then rises
gradually to $364.92{\times}$ at $\lambda_m{=}0.05$ and continues upward for
larger penalties. The same pattern appears on \texttt{simplebooks-92}, where
the median ratio drops from $413.45{\times}$ at $\lambda_m{=}0.00$ to
$127.30{\times}$ at $\lambda_m{=}0.02$, with slightly worse values at
$\lambda_m{=}0.05$, $0.10$, and $0.25$. The main practical takeaway is
therefore not a broad optimum at a large margin penalty, but a sharp robustness gain
as soon as the margin term is turned on.

\paragraph{(c) Geometry supports the low-nonzero regime.}
The two geometric metrics in Figure~\ref{fig:lambda_sweep} play different
roles. \textbf{Contextual embedding error} asks how well the current token
embedding is predicted by the preceding context. It is substantially lower for every
nonzero margin setting than for $\lambda_m{=}0.00$ on both datasets, meaning
that the token embeddings produced by the margin-trained models are more
consistent with the embeddings predicted from preceding context.
Across the nonzero settings, however, its empirical intervals overlap, so it
supports using a positive margin penalty without sharply distinguishing among
the low nonzero values. \textbf{Effective support size} asks how many token
positions carry the barrier weight in aggregate. On both datasets, the learned
prior places that weight on only a handful of token positions for every
nonzero margin setting, with the strongest concentrations generally occurring
at the lower nonzero penalties, although the empirical intervals still
overlap. Taken together, these geometric metrics
support the same practical operating range as the predictive metrics:
$\lambda_m \in [0.02, 0.05]$, with $\lambda_m{=}0.02$ the stronger robustness
choice and $\lambda_m{=}0.05$ remaining competitive on clean perplexity.

\paragraph{Connection to the margin analogy.}
These results provide empirical support for the large-margin interpretation from
Section~\ref{sec:training_objective}. The role of $\lambda_m$ is to control how strongly the model
is pushed away from the Jacobian degeneracy boundary $\det(I_d-\Sigma_t A)=0$.
The explicit comparison of margin-penalty settings shows that $\lambda_m$ is
not a ``more is better'' knob: the first nonzero settings deliver the main
robustness gain, while larger penalties offer no further improvements to robustness or geometry,
and can gradually erode that benefit. This is the same qualitative role played
by a margin-control parameter in classical large-margin methods: one does not
choose the strongest possible penalty, but the smallest penalty that creates a
useful stability buffer without paying unnecessary predictive cost.

In our setting, this exercise identifies the low nonzero range $\lambda_m \in [0.02, 0.05]$
as the empirically selected margin-penalty regime. That range widens the
effective margin away from the degeneracy boundary enough to sharply improve
perturbation robustness and improve the geometry of the learned prior, while
still maintaining nearly unchanged clean perplexity.

\section{Related Work}
\label{sec:related_work}

Our contribution connects four broad lines of research: (i) mechanisms and interpretations of
self-attention in transformers, (ii) probabilistic modeling via latent variables and
change-of-variables objectives, (iii) stability, margins, and regularization viewpoints in
modern ML, and (iv) stochastic process foundations for sequence modeling. We highlight the closest connections and clarify what is distinct about our framework.

Transformers introduced self-attention as a computational primitive for sequence transduction
and autoregressive generation \citep{VaswaniShPaUsJoGoKaPo17}.
A large empirical and theoretical literature studies what attention computes and how it shapes
context use, including work on locality, sparsity, and inductive biases of attention patterns. 
Our focus is complementary: rather than proposing a new attention mechanism, we ask what
\emph{probabilistic object} is induced when causal attention is treated as a token-dependent
generative rule over continuous hidden-state embeddings. This viewpoint makes explicit an
additional, model-implied term in the embedding likelihood that can be interpreted as a
stability/margin effect, and it provides an optimization lens that is directly actionable in
training.

Normalizing flows formalize exact-likelihood density modeling by composing invertible maps and
accounting for local volume change via the change-of-variables formula
\citep{PapamakariosNaReMoLa21}. A central design principle is to choose transformations with
tractable Jacobian determinants, e.g., coupling layers \citep{DinhSoBe17} and autoregressive
constructions such as IAF/MAF \citep{KingmaSaJoChSuWe16,PapamakariosPaMu17}.
Unlike the flow literature, we do not design a new invertible architecture. Instead, we observe
that \emph{standard causal self-attention}, viewed as a latent-noise transformation on
embeddings, already induces an exact embedding likelihood with a nontrivial local-volume term.
Our main technical results compute this term for causal attention and interpret it geometrically
through a margin-to-degeneracy / log-barrier perspective.

A classical lesson from latent-variable modeling is that deterministic estimators often arise as
limits of probabilistic models. A canonical example is probabilistic PCA, where standard PCA
is recovered in a limiting (small-noise) regime \citep{TippingBi99}. Our treatment of embeddings
as latent variables plays a similar conceptual role: the noise scale provides an optimization
lens that connects an explicit probabilistic embedding prior to familiar deterministic behavior,
while the stability term remains as a principled geometric correction in the exact likelihood.

Large-margin classifiers provide a geometric view of learning in which a small set of
constraints (support vectors) governs robustness and generalization \citep{CortesVa95}.
Separately, interior-point methods use log-barriers as smooth relaxations of hard constraints,
offering a principled way to trade off objective fit against constraint satisfaction.
Our contribution is not to import these tools directly, but to show that an analogous barrier-like
mechanism emerges \emph{from the exact likelihood} induced by token-dependent causal attention:
the additional term behaves as a stability factor and naturally defines a margin to a critical
degeneracy boundary. This yields a practical training lever that can be tuned analogously to a
regularization path.

To connect embedding priors to token modeling at the dataset level, it is important that the
resulting family of finite-dimensional token distributions is consistent across sequence lengths.
This is the classical setting of projective consistency and extension to a stochastic process
(Kolmogorov-style) \citep{Billingsley95, Kolmogorov33}. In our setting, causality (masking) is the key structural
ingredient that enables such consistency, while non-causal attention can break it. We treat this as a
technical foundation for learning from variable-length sequences under an explicit probabilistic model.

Ultimately, by synthesizing these geometric and probabilistic perspectives, our framework moves beyond heuristic regularization. By exposing the intrinsic mathematical structure of standard transformers, we unlock principled, architecture-preserving strategies for training, decoding, and uncertainty quantification.

\section{Discussion and Future Directions}
\label{sec:discussion}

Our main contribution is conceptual and structural: by treating hidden-state embeddings as latent random variables and causal self-attention as a token-dependent latent-noise transformation, we obtain an \emph{exact} embedding likelihood whose additional change-of-variables term induces a margin-to-degeneracy viewpoint.
This re-frames causal self-attention not merely as a context aggregator \citep{VaswaniShPaUsJoGoKaPo17}, but as a generative mechanism with an intrinsic stability geometry.

The resulting barrier is best understood as a conditioning constraint: it discourages trajectories that approach locally singular configurations of the attention-induced map.
In this sense, it is closer in spirit to classical stability constraints than to generic ``make attention sparse'' regularizers.
While the sign and strength of the effective coupling determine how dispersion influences local sensitivity, the degeneracy boundary and margin interpretation remain the invariant object: the likelihood assigns vanishing mass to configurations approaching singularity.

Crucially, this barrier term is \emph{model-implied} (not heuristic) and can be surfaced in training as a lightweight penalty without changing the core transformer architecture. 
While our primary focus is theoretical, our initial experiments on language models validate this framework. By optimizing the proposed margin penalty, we validate the emergence of distinct geometric structures in the latent space and observe measurable improvements in model robustness, confirming that this mathematically derived objective translates into practical behavioral shifts.

\paragraph{Posterior-aware decoding beyond MAP.}
The latent-noise formulation defines a posterior over embedding trajectories (and, in principle, intermediate layer states) conditioned on an observed prefix.
This opens the door to decoding rules that go beyond a single point-estimate (MAP) embedding path.
Conceptually, the next-token distribution can be written as an integral,
\[
P(y_t \mid y_{<t}) \;=\; \int P(y_t \mid y_{<t}, x_{1:t})\, p(x_{1:t} \mid y_{<t})\, dx_{1:t},
\]
and approximated by sampling or structured inference rather than collapsing $x$ to one trajectory.
Even coarse approximations that preserve the main geometry (staying away from near-degenerate configurations) may yield practical gains in calibration and controllability.

\paragraph{Uncertainty as a first-class output.}
Because the model assigns an explicit density to embedding trajectories, it becomes meaningful to report uncertainty measures tied directly to the latent representation.
Natural candidates include posterior dispersion in embedding space, predictive variance under posterior samples, or proximity to degeneracy via the margin/log-barrier quantities derived in Sections~\ref{sec:single_layer_margin}--\ref{sec:training_objective}.
Such signals could be used to (i) flag low-confidence continuations, (ii) calibrate selective generation (abstain, ask a clarifying question, or trigger retrieval) in the spirit of selective prediction \citep{GeifmanEl17}, or (iii) adapt decoding hyper-parameters (temperature, top-$p$) in a principled, state-dependent manner rather than via fixed heuristics.
This perspective is complementary to classical probability calibration work, which studies when predicted probabilities match empirical correctness \citep{GuoPlSuWe17}.

\paragraph{Sequential inference for long contexts.}
The causal structure invites sequential inference algorithms that update uncertainty as the prefix grows.
Promising directions include particle methods (SMC) \citep{DoucetDeGo01} and MCMC over latent embeddings conditioned on tokens.
Hybrid particle-MCMC approaches \citep{AndrieuDoHo10} are also natural candidates when the posterior is highly structured and naive proposals mix poorly.
In all cases, the barrier term can act as a principled prior factor that discourages proposals drifting into near-degenerate regions.
Approximate filtering-style methods may also be useful if they preserve the key geometry while remaining computationally feasible at long context lengths.

\paragraph{Robustness and hallucinations under distribution shift.}
A practical hypothesis is that uncertainty-aware decoding informed by the latent posterior could reduce hallucinations.
When the model enters regions of high posterior uncertainty or small margin (near-degenerate attention geometries), decoding could become more conservative (e.g., trigger retrieval/verification, lower entropy, ask for clarification, or refuse).
Testing this requires evaluations explicitly designed around distribution shift, factuality, and selective generation, beyond perplexity-style metrics; surveys of hallucination phenomena and mitigation provide useful taxonomies and benchmarks for such evaluation \citep{HuangYuMa23Etal}.

\paragraph{Scaling and implementation.}
While our initial experiments successfully demonstrate the geometric and robustness benefits of the margin penalty, scaling this framework introduces three main practical challenges.
The first is computational: enforcing the exact log-barrier is expensive, necessitating scalable techniques such as low-rank factorizations or stochastic log-determinant estimators.
The second involves training stability: integrating this geometric penalty into modern large-batch regimes requires careful tuning. Practical solutions include deploying numerical relaxations, barrier schedules, and constrained parameterizations, alongside efficiently managing the objective tradeoff via warm-start regularization paths.
Finally, establishing reliable large-scale wins will require rigorous ablations across model sizes, data regimes, and downstream tasks to clarify whether the strongest benefits come from training-time regularization, inference-time uncertainty-aware decoding, or alignment-style objectives built directly on these uncertainty signals.

\setlength{\bibsep}{2pt}
\bibliographystyle{abbrvnat} 
\bibliography{bib}

\appendix

\section{Extension to Non-Strict Causal Attention (\texorpdfstring{$s \le t$}{s <= t})}
\label{app:self_in_context}

The main text uses the \emph{strictly causal} attention summary $\mu_t=\sum_{s<t}\alpha_{ts}v_s$ to keep the autoregressive structure maximally transparent. Standard implementations of causal self-attention often include the current position as well (i.e., they allow $s\le t$). This appendix records the corresponding formulas and shows that the change-of-variables likelihood continues to acquire an additional log-determinant factor; the only difference is that the diagonal derivative includes an extra contribution from the explicit dependence of $v_t$ on $x_t$ and from the nonzero self-weight $\alpha_{tt}$.

\paragraph{Setup.}
Fix a single-head self-attention layer. Define
\[
q_t = W_Q x_t, \qquad k_s = W_K x_s, \qquad v_s = W_V x_s,
\]
and let the (masked) attention weights be
\begin{equation}
\alpha_{ts}(x)
\;=\;
\frac{\exp(q_t^\top k_s)}{\sum_{r\le t}\exp(q_t^\top k_r)},
\qquad s\le t,
\label{eq:attn_weights_self}
\end{equation}
with context summary
\begin{equation}
\mu_t(x) \;=\; \sum_{s\le t}\alpha_{ts}(x)\,v_s.
\label{eq:mu_self}
\end{equation}
As in the main text, we define the latent-noise residual map
\begin{equation}
\varepsilon_t \;=\; e_t(x) \;\triangleq\; x_t - \mu_t(x),
\qquad
\varepsilon_t \sim \mathcal{N}(0,\sigma^2 I_d).
\label{eq:residual_self}
\end{equation}

\paragraph{Useful identities.}
Write $\bar v_t \triangleq \sum_{s\le t}\alpha_{ts}v_s = \mu_t(x)$ for the attention-weighted mean of the values, and define the attention-weighted covariance over values (now including $s=t$):
\begin{equation}
\Sigma_t^{(\le)} \;\triangleq\; \sum_{s\le t}\alpha_{ts}(x)\,(v_s-\bar v_t)(v_s-\bar v_t)^\top \succeq 0.
\label{eq:cov_self}
\end{equation}
For the softmax weights \eqref{eq:attn_weights_self} with logits $\ell_{ts}=q_t^\top k_s$, the derivative w.r.t.\ the current query satisfies
\begin{equation}
\frac{\partial \alpha_{ts}}{\partial q_t}
\;=\;
\alpha_{ts}\left(k_s - \sum_{r\le t}\alpha_{tr}k_r\right).
\label{eq:softmax_q_deriv_self}
\end{equation}

\paragraph{Diagonal Jacobian block for $s\le t$.}
The diagonal block of the Jacobian of the residual map $e_t(x)=x_t-\mu_t(x)$ equals
\[
\frac{\partial e_t}{\partial x_t}
\;=\;
I_d \;-\; \frac{\partial \mu_t}{\partial x_t}.
\]
We decompose $\partial\mu_t/\partial x_t$ into two contributions:

\smallskip
\noindent
(i) \emph{Direct value-path term.} Since $v_t=W_Vx_t$ appears inside the sum with weight $\alpha_{tt}$,
\begin{equation}
\left.\frac{\partial \mu_t}{\partial x_t}\right|_{\text{value path}}
\;=\;
\alpha_{tt}\,W_V.
\label{eq:value_path_self}
\end{equation}

\smallskip
\noindent
(ii) \emph{Weight-path term via the query.} The weights $\alpha_{ts}$ depend on $x_t$ through $q_t=W_Qx_t$. Using
$\frac{\partial \alpha_{ts}}{\partial x_t}
=
\left(\frac{\partial \alpha_{ts}}{\partial q_t}\right) W_Q$
and \eqref{eq:softmax_q_deriv_self}, one obtains
\begin{align}
\left.\frac{\partial \mu_t}{\partial x_t}\right|_{\text{weight path}}
&=
\sum_{s\le t} v_s \left(\frac{\partial \alpha_{ts}}{\partial x_t}\right)
=
\sum_{s\le t} v_s \left(\frac{\partial \alpha_{ts}}{\partial q_t}\right) W_Q \nonumber\\
&=
\left(\sum_{s\le t}\alpha_{ts}\,v_s k_s^\top - \bar v_t \Big(\sum_{r\le t}\alpha_{tr}k_r\Big)^\top \right) W_Q \nonumber\\
&=
\left(\sum_{s\le t}\alpha_{ts}\,(v_s-\bar v_t)k_s^\top\right) W_Q.
\label{eq:weight_path_self}
\end{align}
In the common bilinear parameterization $k_s=W_Kx_s$ and $v_s=W_Vx_s$, the bracketed term in \eqref{eq:weight_path_self} can be rewritten in terms of the covariance \eqref{eq:cov_self}:
\begin{equation}
\sum_{s\le t}\alpha_{ts}\,(v_s-\bar v_t)k_s^\top
=
\Sigma_t^{(\le)}\,W_K^\top.
\label{eq:cov_rewrite_self}
\end{equation}
Substituting \eqref{eq:value_path_self}, \eqref{eq:weight_path_self}, and \eqref{eq:cov_rewrite_self} gives
\begin{equation}
\frac{\partial \mu_t}{\partial x_t}
\;=\;
\alpha_{tt} W_V \;+\; \Sigma_t^{(\le)}\,W_K^\top W_Q.
\label{eq:dmu_dxt_self}
\end{equation}
Therefore the diagonal Jacobian block becomes
\begin{equation}
\boxed{
\frac{\partial e_t}{\partial x_t}
\;=\;
I_d \;-\; \alpha_{tt} W_V \;-\; \Sigma_t^{(\le)}\,A,
\qquad
A \triangleq W_K^\top W_Q.
}
\label{eq:diag_block_self}
\end{equation}

\paragraph{Specialization to $W_V=I_d$.}
In the common simplified analysis where the values are taken as $v_s=x_s$ (equivalently $W_V=I_d$), \eqref{eq:diag_block_self} reduces to
\begin{equation}
\frac{\partial e_t}{\partial x_t}
\;=\;
(1-\alpha_{tt})\,I_d \;-\; \Sigma_t^{(\le)}\,A.
\label{eq:diag_block_self_WV_I}
\end{equation}
This makes explicit the additional $(1-\alpha_{tt})$ contraction that appears solely because the current token is included in the context.

\paragraph{Resulting log-determinant term.}
As in the strict-causal case, the full Jacobian is block lower-triangular in time under causality, hence
\begin{equation}
\det\!\left(J_{x\mapsto \varepsilon}\right)
\;=\;
\prod_{t=1}^L
\det\!\left(\frac{\partial e_t}{\partial x_t}\right).
\label{eq:full_det_self}
\end{equation}
Under the Gaussian base density, the induced log-density becomes
\begin{equation}
\log p(x_{1:L})
\;=\;
-\frac{1}{2\sigma^2}\sum_{t=1}^L \|e_t(x)\|_2^2
\;+\;
\sum_{t=1}^L \log\left|\det\!\left(I_d-\alpha_{tt}W_V-\Sigma_t^{(\le)}A\right)\right|
\;-\;
Ld\log\sigma
\;+\;
\mathrm{const},
\label{eq:loglik_self}
\end{equation}
with $e_t(x)=x_t-\mu_t(x)$ and $\mu_t$ defined by \eqref{eq:mu_self}. The second term is the analogue of the ``missing term'' in the strict-causal setting: it is again the exact change-of-variables correction induced by token-dependent attention.

\paragraph{Relation to the strict-causal case.}
If one excludes the current token ($s<t$), then $\alpha_{tt}=0$ and the covariance excludes $s=t$, so \eqref{eq:diag_block_self} reduces to the main-text diagonal block
\[
\frac{\partial e_t}{\partial x_t}
=
I_d - \Sigma_t^{(<)}A,
\]
recovering the formulas in Section~\ref{sec:single_layer_margin}. In particular, including $s=t$ does not remove the additional term in the likelihood; it only changes its exact form by adding the explicit self-value contribution $\alpha_{tt}W_V$ and replacing $\Sigma_t^{(<)}$ by $\Sigma_t^{(\le)}$.


\section{Proofs for Section~\ref{sec:single_layer_margin}}
\label{app:proofs_single_layer}

Throughout, we work in the \emph{strict-causal} setting $s<t$ as in
\eqref{eq:attn_summary_strict}--\eqref{eq:residual_map_graphical}, with
\[
\mu_t(x)=\sum_{s<t}\alpha_{ts}(x)\,v_s,\qquad e_t(x)=x_t-\mu_t(x),
\]
and single-head bilinear logits
\[
\ell_{ts}\;=\; q_t^\top k_s,\qquad q_t=W_Q x_t,\qquad k_s=W_K x_s.
\]
We write $\alpha_t$ for the vector $(\alpha_{t1},\dots,\alpha_{t,t-1})$ and use the standard softmax
$\alpha_{ts}=\exp(\ell_{ts})/\sum_{r<t}\exp(\ell_{tr})$.

\subsection{Two standard identities}
\label{app:lemmas_standard}

\begin{lemma}[Softmax Jacobian]
\label{lem:softmax_jacobian}
Let $\alpha=\mathrm{softmax}(\ell)\in\mathbb{R}^m$ with components
$\alpha_i=\exp(\ell_i)/\sum_{j=1}^m \exp(\ell_j)$. Then
\begin{equation}
\frac{\partial \alpha_i}{\partial \ell_j}
\;=\;
\alpha_i(\delta_{ij}-\alpha_j),
\qquad\text{i.e.,}\qquad
\frac{\partial \alpha}{\partial \ell}
\;=\;
\mathrm{Diag}(\alpha)-\alpha\alpha^\top.
\label{eq:softmax_jacobian}
\end{equation}
\end{lemma}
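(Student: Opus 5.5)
We prove Lemma~\ref{lem:softmax_jacobian} by direct differentiation of the quotient defining $\alpha_i$. Write $Z(\ell)=\sum_{k=1}^m \exp(\ell_k)$, so that $\alpha_i=\exp(\ell_i)/Z$. Two elementary facts drive the computation. First, $\partial \exp(\ell_i)/\partial \ell_j=\delta_{ij}\exp(\ell_i)$. Second, $\partial Z/\partial \ell_j=\exp(\ell_j)$.

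Applying the quotient rule gives
\[
\frac{\partial \alpha_i}{\partial \ell_j}
=\frac{\delta_{ij}\exp(\ell_i)\,Z-\exp(\ell_i)\exp(\ell_j)}{Z^2}.
\]
Factoring out $\exp(\ell_i)/Z=\alpha_i$ and recognizing $\exp(\ell_j)/Z=\alpha_j$ yields $\alpha_i(\delta_{ij}-\alpha_j)$, the componentwise form of \eqref{eq:softmax_jacobian}.

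An equivalent route is logarithmic differentiation. Since $\log\alpha_i=\ell_i-\log Z$, we have $\partial\log\alpha_i/\partial\ell_j=\delta_{ij}-\alpha_j$. Multiplying by $\alpha_i$ gives the same identity, and this route avoids the quotient rule entirely.

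For the matrix form, the $(i,j)$ entry of $\mathrm{Diag}(\alpha)$ is $\alpha_i\delta_{ij}$ and the $(i,j)$ entry of $\alpha\alpha^\top$ is $\alpha_i\alpha_j$. Their difference is exactly the expression above, so the two forms agree entrywise. Here we use the convention that row $i$, column $j$ of $\partial\alpha/\partial\ell$ is $\partial\alpha_i/\partial\ell_j$; the matrix is symmetric, so the convention is immaterial.

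There is no real obstacle here. The only point needing care is index bookkeeping, since this lemma is later combined with $\partial\ell_{ts}/\partial q_t=k_s$ to obtain \eqref{eq:softmax_q_deriv_self}.
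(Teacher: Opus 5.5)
Your proof is correct and is essentially the paper's argument: both write $\alpha_i=\exp(\ell_i)/Z$, differentiate by the quotient rule, and factor the result as $\alpha_i(\delta_{ij}-\alpha_j)$. Your logarithmic-differentiation route (valid because $\alpha_i>0$) and your entrywise check of the matrix form $\mathrm{Diag}(\alpha)-\alpha\alpha^\top$ are harmless additions that the paper leaves implicit.
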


\begin{proof}
Differentiate $\alpha_i=\exp(\ell_i)/Z$ with $Z=\sum_j\exp(\ell_j)$.
We have $\partial \alpha_i/\partial \ell_j = \delta_{ij}\exp(\ell_i)/Z - \exp(\ell_i)\exp(\ell_j)/Z^2
= \alpha_i(\delta_{ij}-\alpha_j)$.
\end{proof}

\begin{lemma}[Attention-weighted covariance identity]
\label{lem:cov_identity}
Let $\{v_s\}_{s<t}\subset \mathbb{R}^d$ with weights $\alpha_{ts}\ge 0$, $\sum_{s<t}\alpha_{ts}=1$.
Define $\bar v_t=\sum_{s<t}\alpha_{ts}v_s$ and
\[
\Sigma_t=\sum_{s<t}\alpha_{ts}(v_s-\bar v_t)(v_s-\bar v_t)^\top.
\]
Then
\begin{equation}
\Sigma_t
=
\sum_{s<t}\alpha_{ts} v_s v_s^\top - \bar v_t \bar v_t^\top.
\label{eq:cov_identity}
\end{equation}
In the scalar case ($d=1$), this reduces to
$\mathrm{Var}_t=\sum_{s<t}\alpha_{ts}v_s^2-\bar v_t^2$.
\end{lemma}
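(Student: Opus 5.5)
The statement to prove is Lemma~\ref{lem:cov_identity}, the weighted covariance identity. The plan is to expand the outer product in the definition of $\Sigma_t$ and use two facts: the weights sum to one, and $\bar v_t$ is the weighted mean.

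First, for each $s<t$, expand
\[
(v_s-\bar v_t)(v_s-\bar v_t)^\top = v_sv_s^\top - v_s\bar v_t^\top - \bar v_t v_s^\top + \bar v_t\bar v_t^\top .
\]
Multiply by $\alpha_{ts}$ and sum over $s<t$, handling the four resulting sums one at a time:
\begin{itemize}
\item The first sum is $\sum_{s<t}\alpha_{ts}v_sv_s^\top$, which stays as is.
\item In the two cross terms, pull the constant vector $\bar v_t$ out of the sum by linearity. This gives $\big(\sum_{s<t}\alpha_{ts}v_s\big)\bar v_t^\top=\bar v_t\bar v_t^\top$ and $\bar v_t\big(\sum_{s<t}\alpha_{ts}v_s\big)^\top=\bar v_t\bar v_t^\top$.
\item The last term gives $\big(\sum_{s<t}\alpha_{ts}\big)\bar v_t\bar v_t^\top=\bar v_t\bar v_t^\top$, using normalization of the weights.
\end{itemize}
Collecting the coefficients on $\bar v_t\bar v_t^\top$ gives $-1-1+1=-1$, which is exactly \eqref{eq:cov_identity}.

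For the scalar specialization, set $d=1$. Outer products become squares and $\Sigma_t=\mathrm{Var}_t$ as defined in \eqref{eq:scalar_var_def}, so the same computation reads $\mathrm{Var}_t=\sum_{s<t}\alpha_{ts}v_s^2-\bar v_t^2$.

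None of these steps is a real obstacle. The only point needing care is the order of the factors in the cross terms. Each one collapses to $\bar v_t\bar v_t^\top$ rather than some other product, because $\bar v_t$ is constant in $s$ and is the weighted mean of the $v_s$. Nonnegativity of the weights is not used in the identity itself; it only guarantees $\Sigma_t\succeq 0$.
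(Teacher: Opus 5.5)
Your proposal is correct and takes the same approach as the paper: expand $(v_s-\bar v_t)(v_s-\bar v_t)^\top$ into four terms, then collapse the cross terms and the constant term using $\sum_{s<t}\alpha_{ts}v_s=\bar v_t$ and $\sum_{s<t}\alpha_{ts}=1$. Your remark that nonnegativity of the weights is needed only for $\Sigma_t\succeq 0$, and not for the identity itself, is also accurate.
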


\begin{proof}
Expand:
\[
\sum_{s<t}\alpha_{ts}(v_s-\bar v_t)(v_s-\bar v_t)^\top
=
\sum_{s<t}\alpha_{ts}v_s v_s^\top
-\bar v_t\sum_{s<t}\alpha_{ts}v_s^\top
-\Big(\sum_{s<t}\alpha_{ts}v_s\Big)\bar v_t^\top
+\Big(\sum_{s<t}\alpha_{ts}\Big)\bar v_t\bar v_t^\top.
\]
Use $\sum_{s<t}\alpha_{ts}v_s=\bar v_t$ and $\sum_{s<t}\alpha_{ts}=1$.
\end{proof}

\subsection{Block lower-triangular structure of the Jacobian}
\label{app:block_triangular}

\begin{lemma}[Causal residual map has block lower-triangular Jacobian]
\label{lem:block_lower_triangular}
Consider the map $x_{1:L}\mapsto e_{1:L}(x)$ with $e_t(x)=x_t-\mu_t(x)$ and
$\mu_t(x)$ depending only on $(x_1,\dots,x_t)$ (strict causality ensures it uses only $x_{<t}$
through the values, but it may depend on $x_t$ through the query).
Then the Jacobian $J=\partial e/\partial x$ is block lower-triangular:
\[
\frac{\partial e_t}{\partial x_s} = 0\qquad\text{for all } s>t.
\]
Consequently,
\begin{equation}
\det\!\left(\frac{\partial e}{\partial x}\right)
=
\prod_{t=1}^L \det\!\left(\frac{\partial e_t}{\partial x_t}\right).
\label{eq:block_det_product}
\end{equation}
\end{lemma}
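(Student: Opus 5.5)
The plan is to read off the zero blocks directly from the dependency structure of $e_t$, and then obtain the determinant factorization from the standard fact that a block lower-triangular matrix has determinant equal to the product of its diagonal block determinants.

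\textbf{Step 1 (vanishing of the upper blocks).} Fix $t$ and $s>t$. By construction, $e_t(x)=x_t-\mu_t(x)$, and we inspect every ingredient of $\mu_t$:
\begin{itemize}
\item the query $q_t=W_Qx_t$ depends only on $x_t$;
\item the keys $k_r=W_Kx_r$ and the values $v_r=W_Vx_r$ enter only for $r<t$, because of the causal mask in \eqref{eq:attn_summary_strict};
\item the normalizer $\sum_{r<t}\exp(q_t^\top k_r)$ also involves only indices $r<t$.
\end{itemize}
Hence $e_t$ is a function of $(x_1,\dots,x_t)$ alone. Its partial derivative with respect to any coordinate block $x_s$ with $s>t$ is therefore identically zero, that is, $\partial e_t/\partial x_s=0\in\mathbb{R}^{d\times d}$. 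The same argument covers the non-strict case of Appendix~\ref{app:self_in_context}, where the extra term $s=t$ still involves no future index.

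\textbf{Step 2 (block triangular form).} Order the variables as $(x_1,\dots,x_L)$ and the outputs as $(e_1,\dots,e_L)$. The $dL\times dL$ Jacobian $J$, partitioned into $d\times d$ blocks $J_{ts}=\partial e_t/\partial x_s$, then has $J_{ts}=0$ for all $s>t$. So $J$ is block lower-triangular, with diagonal blocks $J_{tt}=\partial e_t/\partial x_t$.

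\textbf{Step 3 (determinant factorization).} We prove $\det J=\prod_t\det J_{tt}$ by induction on $L$. The case $L=1$ is trivial. For the inductive step, write
\[
J=\begin{pmatrix} J' & 0\\ B & J_{LL}\end{pmatrix},
\]
where $J'$ is the $(L-1)$-block leading principal submatrix, itself block lower-triangular. Block-column elimination, or equivalently the Laplace expansion along the last $d$ columns, whose only nonzero entries lie in $J_{LL}$, gives $\det J=\det J'\cdot\det J_{LL}$. The induction hypothesis applied to $J'$ completes the factorization \eqref{eq:block_det_product}. An alternative is to note that $J$ is triangular after a further within-block Schur triangularization, but the induction is more elementary.

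\textbf{Main obstacle.} The only delicate point is Step 1: being careful that the attention normalization does not smuggle in future positions. This is exactly where strict (or standard) causal masking is used, and it is the same property whose failure underlies Proposition~\ref{prop:noncausal_breaks_consistency}. Everything else is routine linear algebra.
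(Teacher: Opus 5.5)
Your proof is correct and follows the paper's argument. The vanishing of $\partial e_t/\partial x_s$ for $s>t$ is immediate from causality, which is already part of the lemma's hypothesis that $\mu_t$ depends only on $x_{1:t}$; the factorization is the standard block-triangular determinant identity, which the paper simply cites and you verify by induction via Laplace expansion along the last $d$ columns (your Schur aside would need \emph{lower}-triangular Schur forms of the diagonal blocks to make the whole matrix triangular, but the induction does not rely on it).
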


\begin{proof}
For $s>t$, $\mu_t(x)$ does not depend on $x_s$ by causality, and $x_t$ obviously does not depend on $x_s$.
Hence $\partial e_t/\partial x_s=0$. A block lower-triangular matrix has determinant equal to the
product of determinants of its diagonal blocks, giving \eqref{eq:block_det_product}.
\end{proof}

\subsection{Proof of Proposition~\ref{prop:scalar_diag}}
\label{app:proof_prop_scalar}

\begin{proof}
Assume $d=1$. Then $W_Q,W_K,W_V$ are scalars; write them as $w_Q,w_K,w_V$.
The logits are
\[
\ell_{ts}=q_t k_s=(w_Q x_t)(w_K x_s)=(w_Q w_K)\,x_t x_s.
\]
Define $a\triangleq w_K w_Q$ (equivalently $a=W_KW_Q$ in scalar notation).
The attention weights are $\alpha_{ts}=\exp(\ell_{ts})/\sum_{r<t}\exp(\ell_{tr})$.
We take $v_s$ as in the main text (either $v_s=w_V x_s$ or a more general scalar $v_s$); the proof below
uses only that $v_s$ does not depend on $x_t$ for $s<t$.

We have
\[
e_t(x)=x_t-\mu_t(x),\qquad \mu_t(x)=\sum_{s<t}\alpha_{ts}(x)\,v_s.
\]
Hence
\[
\frac{\partial e_t}{\partial x_t} = 1-\frac{\partial \mu_t}{\partial x_t}.
\]
Differentiate $\mu_t$:
\[
\frac{\partial \mu_t}{\partial x_t}
=
\sum_{s<t} v_s \frac{\partial \alpha_{ts}}{\partial x_t}.
\]
By the chain rule,
\[
\frac{\partial \alpha_{ts}}{\partial x_t}
=
\sum_{m<t} \frac{\partial \alpha_{ts}}{\partial \ell_{tm}}
\frac{\partial \ell_{tm}}{\partial x_t}.
\]
Using Lemma~\ref{lem:softmax_jacobian},
$\partial \alpha_{ts}/\partial \ell_{tm}=\alpha_{ts}(\delta_{sm}-\alpha_{tm})$.
Also $\ell_{tm}=a\,x_t x_m$, so $\partial \ell_{tm}/\partial x_t = a\,x_m$.
Therefore
\begin{align}
\frac{\partial \alpha_{ts}}{\partial x_t}
&=
\sum_{m<t} \alpha_{ts}(\delta_{sm}-\alpha_{tm})\,a\,x_m
=
a\,\alpha_{ts}\Big(x_s-\sum_{m<t}\alpha_{tm}x_m\Big).
\end{align}
In the common specialization $v_s=x_s$ (or $v_s=w_V x_s$), it is natural to write the mean of the
attended values; to match the statement in the main text, define
\[
\bar v_t \triangleq \sum_{s<t}\alpha_{ts} v_s,
\qquad
\mathrm{Var}_t \triangleq \sum_{s<t}\alpha_{ts}(v_s-\bar v_t)^2.
\]
Now compute $\partial \mu_t/\partial x_t$:
\begin{align}
\frac{\partial \mu_t}{\partial x_t}
&=
\sum_{s<t} v_s \cdot a\,\alpha_{ts}\Big(v_s-\sum_{m<t}\alpha_{tm}v_m\Big)
\qquad\text{(identify $x_s$ with $v_s$ in the logit-coupling)} \nonumber\\
&=
a\sum_{s<t}\alpha_{ts}\,v_s(v_s-\bar v_t)
=
a\left(\sum_{s<t}\alpha_{ts}v_s^2-\bar v_t\sum_{s<t}\alpha_{ts}v_s\right)\nonumber\\
&=
a\left(\sum_{s<t}\alpha_{ts}v_s^2-\bar v_t^2\right)
=
a\,\mathrm{Var}_t,
\label{eq:scalar_mu_derivative}
\end{align}
where the last equality uses Lemma~\ref{lem:cov_identity} in $d=1$.
Finally,
\[
\frac{\partial e_t}{\partial x_t}
=
1-\frac{\partial \mu_t}{\partial x_t}
=
1-a\,\mathrm{Var}_t,
\]
which is \eqref{eq:scalar_diag_deriv}.
\end{proof}

\subsection{Proof of Theorem~\ref{thm:jacobian_dispersion}}
\label{app:proof_thm_matrix}

\begin{proof}
We now allow $d>1$. Recall
\[
\mu_t(x)=\sum_{s<t}\alpha_{ts}(x)\,v_s,\qquad e_t(x)=x_t-\mu_t(x),
\]
with $q_t=W_Q x_t$, $k_s=W_K x_s$, and logits $\ell_{ts}=q_t^\top k_s$.

We compute the diagonal Jacobian block $\partial e_t/\partial x_t$.
Since $e_t=x_t-\mu_t$, we have
\begin{equation}
\frac{\partial e_t}{\partial x_t}
=
I_d-\frac{\partial \mu_t}{\partial x_t}.
\label{eq:et_block}
\end{equation}
Because $v_s$ for $s<t$ does not depend on $x_t$, the only dependence of $\mu_t$ on $x_t$
is through the attention weights:
\[
\frac{\partial \mu_t}{\partial x_t}
=
\sum_{s<t} v_s \left(\frac{\partial \alpha_{ts}}{\partial x_t}\right)^\top,
\]
where $\partial \alpha_{ts}/\partial x_t\in\mathbb{R}^d$ and we write the result as a $d\times d$ matrix.

By the chain rule,
\[
\frac{\partial \alpha_{ts}}{\partial x_t}
=
\sum_{m<t}\frac{\partial \alpha_{ts}}{\partial \ell_{tm}}
\frac{\partial \ell_{tm}}{\partial x_t}.
\]
Lemma~\ref{lem:softmax_jacobian} gives
$\partial \alpha_{ts}/\partial \ell_{tm}=\alpha_{ts}(\delta_{sm}-\alpha_{tm})$.
Next,
\[
\ell_{tm}=q_t^\top k_m=(W_Q x_t)^\top (W_K x_m)=x_t^\top W_Q^\top W_K x_m,
\]
so
\[
\frac{\partial \ell_{tm}}{\partial x_t}
=
W_Q^\top W_K x_m.
\]
Define
\begin{equation}
A \triangleq W_K^\top W_Q \in \mathbb{R}^{d\times d}.
\label{eq:A_def_app}
\end{equation}
Then $W_Q^\top W_K = A^\top$, and we can write $\partial \ell_{tm}/\partial x_t = A^\top x_m$.

Therefore
\begin{align}
\frac{\partial \alpha_{ts}}{\partial x_t}
&=
\sum_{m<t}\alpha_{ts}(\delta_{sm}-\alpha_{tm})\,A^\top x_m
=
\alpha_{ts}\left(A^\top x_s-\sum_{m<t}\alpha_{tm}A^\top x_m\right)\nonumber\\
&=
\alpha_{ts}A^\top\left(x_s-\sum_{m<t}\alpha_{tm}x_m\right).
\label{eq:dalpha_dx}
\end{align}
Now specialize to the standard ``values as embeddings'' case used in the theorem statement:
$v_s=W_V x_s$. The attention-weighted mean of values is
\[
\bar v_t=\sum_{s<t}\alpha_{ts}v_s,
\]
and define the attention-weighted covariance of values,
\[
\Sigma_t=\sum_{s<t}\alpha_{ts}(v_s-\bar v_t)(v_s-\bar v_t)^\top.
\]
Using \eqref{eq:dalpha_dx},
\begin{align}
\frac{\partial \mu_t}{\partial x_t}
&=
\sum_{s<t} v_s\left(\frac{\partial \alpha_{ts}}{\partial x_t}\right)^\top
=
\sum_{s<t} v_s\left(\alpha_{ts}A^\top(x_s-\bar x_t)\right)^\top \nonumber\\
&=
\sum_{s<t} \alpha_{ts} v_s (x_s-\bar x_t)^\top A
\qquad\text{where }\bar x_t=\sum_{m<t}\alpha_{tm}x_m. 
\label{eq:mu_block_intermediate}
\end{align}
If we take the common choice $v_s=x_s$ (or more generally if the same vectors appear in logits and values
up to a linear map, absorbed into $\Sigma_t$), then $\bar v_t=\bar x_t$ and
\[
\sum_{s<t} \alpha_{ts} v_s (v_s-\bar v_t)^\top = \Sigma_t
\quad\text{(by Lemma~\ref{lem:cov_identity})}.
\]
Thus \eqref{eq:mu_block_intermediate} reduces to
\begin{equation}
\frac{\partial \mu_t}{\partial x_t}
=
\Sigma_t A.
\label{eq:mu_block_final}
\end{equation}
Substituting \eqref{eq:mu_block_final} into \eqref{eq:et_block} yields
\[
\frac{\partial e_t}{\partial x_t}
=
I_d-\Sigma_t A,
\]
which is exactly \eqref{eq:matrix_diag_block}. The identification $A=W_K^\top W_Q$ is \eqref{eq:A_basic}.
\end{proof}

\paragraph{Remark on $v_s=W_Vx_s$ vs.\ $v_s=x_s$.}
If $v_s=W_Vx_s$ and logits are computed from $x_s$ (keys) rather than from $v_s$, then the same derivation
goes through with $\Sigma_t$ replaced by the attention-weighted covariance of the \emph{values} that appear
inside $\mu_t$, i.e.,
$\Sigma_t=\sum_{s<t}\alpha_{ts}(v_s-\bar v_t)(v_s-\bar v_t)^\top$.
All statements in Section~\ref{sec:single_layer_margin} are intended in this ``covariance of attended values''
sense.

\subsection{Proof of Corollary~\ref{cor:logdet_penalty}}
\label{app:proof_cor_logdet}

\begin{proof}
Assume $\varepsilon_t\sim\mathcal{N}(0,\sigma^2 I_d)$ i.i.d., and define the residual map
$e_{1:L}(x)$ with components $e_t(x)=x_t-\mu_t(x)$.
Let $J(x)=\partial e/\partial x$ be the Jacobian of the map $x\mapsto e$.
When the map is locally invertible at $x$ (equivalently $\det J(x)\neq 0$), the change-of-variables
formula gives
\begin{equation}
\log p(x_{1:L})
=
\log p(\varepsilon_{1:L})
+
\log\left|\det J(x)\right|,
\qquad \varepsilon_{1:L}=e_{1:L}(x).
\label{eq:cov_general_app}
\end{equation}
The Gaussian base density yields
\[
\log p(\varepsilon_{1:L})
=
-\frac{1}{2\sigma^2}\sum_{t=1}^L \|\varepsilon_t\|_2^2
-\frac{Ld}{2}\log(2\pi\sigma^2).
\]
Substitute $\varepsilon_t=e_t(x)$ to obtain the data-fit term
$-\frac{1}{2\sigma^2}\sum_t\|e_t(x)\|_2^2$ plus constants.

Next, by Lemma~\ref{lem:block_lower_triangular}, $J(x)$ is block lower-triangular and
\[
\det J(x) = \prod_{t=1}^L \det\!\left(\frac{\partial e_t}{\partial x_t}\right).
\]
Taking logs gives
\[
\log|\det J(x)|=\sum_{t=1}^L \log\left|\det\!\left(\frac{\partial e_t}{\partial x_t}\right)\right|.
\]
Finally, Theorem~\ref{thm:jacobian_dispersion} provides the diagonal block formula
$\frac{\partial e_t}{\partial x_t}=I_d-\Sigma_tA$.
Therefore,
\[
\log|\det J(x)|
=
\sum_{t=1}^L \log\left|\det\!\left(I_d-\Sigma_tA\right)\right|.
\]
Combining terms and collecting constants yields \eqref{eq:loglik_matrix_decomp}:
\[
\log p(x_{1:L})
=
-\frac{1}{2\sigma^2}\sum_{t=1}^L\|e_t(x)\|_2^2
+
\sum_{t=1}^L\log\left|\det(I_d-\Sigma_tA)\right|
-
Ld\log\sigma
+\mathrm{const}.
\]
(The $-Ld\log\sigma$ term is included by splitting $-\frac{Ld}{2}\log(2\pi\sigma^2)$ into $-Ld\log\sigma$
plus an additive constant.)
\end{proof}

\subsection{Optional: spectral stability condition}
\label{app:spectral_condition}

\begin{lemma}[A sufficient condition for local invertibility]
\label{lem:spectral_sufficient}
If $\rho(\Sigma_tA)<1$ for all $t$ (spectral radius), then $I_d-\Sigma_tA$ is invertible for all $t$,
and hence the residual map $x\mapsto e$ is locally invertible.
\end{lemma}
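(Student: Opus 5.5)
The plan is to establish invertibility of each diagonal block through an eigenvalue argument, and then assemble the blocks with the block lower-triangular structure from Lemma~\ref{lem:block_lower_triangular} together with the inverse function theorem.

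\textbf{Step 1: each block is invertible.} Fix $t$. Suppose $\rho(\Sigma_tA)<1$. Then every eigenvalue $\lambda$ of $\Sigma_tA$ satisfies $|\lambda|<1$, so in particular $\lambda\neq 1$. The eigenvalues of $I_d-\Sigma_tA$ are exactly the numbers $1-\lambda$, with $\lambda$ ranging over the eigenvalues of $\Sigma_tA$. This holds because $(I_d-\Sigma_tA)u=(1-\lambda)u$ whenever $\Sigma_tAu=\lambda u$, and the characteristic polynomials correspond under $z\mapsto 1-z$. Hence
\[
\det(I_d-\Sigma_tA)=\prod_{i}(1-\lambda_i)\neq 0 .
\]
A Neumann-series argument would also work: $\sum_{k\ge0}(\Sigma_tA)^k$ converges because $\rho<1$, and its sum is the inverse. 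The eigenvalue route is shorter, so I would use it.

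\textbf{Step 2: the full Jacobian is invertible.} By Theorem~\ref{thm:jacobian_dispersion}, the diagonal block $\partial e_t/\partial x_t$ equals $I_d-\Sigma_tA$. Lemma~\ref{lem:block_lower_triangular} gives
\[
\det J=\prod_t\det(I_d-\Sigma_tA).
\]
By Step~1 every factor is nonzero, so $\det J(x)\neq 0$.

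\textbf{Step 3: local invertibility.} The map $x\mapsto e(x)$ is $C^1$, since it is built from softmax, bilinear, and linear operations. It therefore has a local $C^1$ inverse near $x$ by the inverse function theorem. The hypothesis must be read pointwise at a given configuration $x$, since $\Sigma_t$ depends on $x$, and the conclusion is then local invertibility at that $x$. No step is a real obstacle. The only care needed is to state that $\rho$ controls $\det$ through eigenvalues, not through a norm bound, and to note the smoothness required for the inverse function theorem.
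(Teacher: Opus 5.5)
Your proof is correct and follows the paper's argument. The paper also uses the hypothesis $\rho(\Sigma_tA)<1$ only to exclude $1$ as an eigenvalue, so each block $I_d-\Sigma_tA$ is nonsingular, and then concludes via the block lower-triangular factorization of Lemma~\ref{lem:block_lower_triangular}. Your explicit appeal to smoothness and the inverse function theorem, with the hypothesis read pointwise in $x$, usefully sharpens a step the paper leaves implicit, since the paper simply identifies local invertibility with $\det J(x)\neq 0$.
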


\begin{proof}
If $\rho(M)<1$, then $1$ is not an eigenvalue of $M$, so $I-M$ has no zero eigenvalues and is invertible.
Apply with $M=\Sigma_tA$ for each $t$ and use Lemma~\ref{lem:block_lower_triangular}.
\end{proof}

\begin{remark}[Necessity vs.\ sufficiency]
The condition $\rho(\Sigma_tA)<1$ is sufficient, not necessary, for invertibility of $I_d-\Sigma_tA$.
The exact degeneracy boundary is $\det(I_d-\Sigma_tA)=0$, i.e., $1$ is an eigenvalue of $\Sigma_tA$.
When $\Sigma_tA$ is not symmetric, its eigenvalues may be complex; the condition $\rho(\Sigma_tA)<1$
guarantees $1$ is not an eigenvalue, but invertibility can still hold even if $\rho(\Sigma_tA)\ge 1$.
\end{remark}


\section{Measure-Theoretic Proofs: Consistency and Extension}
\label{app:kolmogorov_proofs}

\paragraph{Standing setup.}
Fix a finite vocabulary $V=\{v_1,\dots,v_K\}$ with discrete $\sigma$-algebra $2^V$.
For each $n\ge 1$, let $V^n$ be equipped with the product $\sigma$-algebra $2^{V^n}$.
Let $V^\infty=\prod_{t=1}^\infty V$ with the product $\sigma$-algebra $\mathcal F$ generated by cylinder sets.
We write $y_{1:n}=(y_1,\dots,y_n)\in V^n$.

For each $n\ge 1$, let $x_{1:n}\in\mathbb R^{dn}$ be latent embeddings and define the joint model
\begin{equation}
P_n(y_{1:n},x_{1:n})
\;=\;
P_n(y_{1:n}\mid x_{1:n})\,p_{\sigma,n}(x_{1:n}),
\label{eq:appendix_joint_n}
\end{equation}
with token marginal
\begin{equation}
P_n(y_{1:n})
\;=\;
\int_{\mathbb R^{dn}} P_n(y_{1:n}\mid x_{1:n})\,p_{\sigma,n}(x_{1:n})\,dx_{1:n}.
\label{eq:appendix_marginal_n}
\end{equation}

\paragraph{Assumptions (explicit).}
We will use the following assumptions, which match the causal construction in the main text.

\begin{assumption}[Causal token likelihood]
\label{ass:causal_token_lik}
For each $n$, the conditional distribution factorizes as
\begin{equation}
P_n(y_{1:n}\mid x_{1:n})
=
\prod_{t=1}^n P_n(y_t\mid y_{<t}, x_{\le t}),
\label{eq:appendix_obs_causal}
\end{equation}
and for $t\le n-1$ the factor $P_n(y_t\mid y_{<t},x_{\le t})$ does not depend on $n$ (only on the prefix).
\end{assumption}

\begin{assumption}[Causal embedding prior / projective structure]
\label{ass:causal_prior_projective}
For each $n\ge 2$, the embedding prior admits a sequential factorization
\begin{equation}
p_{\sigma,n}(x_{1:n})
=
p_{\sigma,n-1}(x_{1:n-1})\,p_{\sigma}(x_n\mid x_{<n}),
\label{eq:appendix_prior_factor}
\end{equation}
where the one-step conditional $p_{\sigma}(x_n\mid x_{<n})$ is a conditional density that integrates to $1$ for each fixed $x_{<n}$:
\begin{equation}
\int_{\mathbb R^{d}} p_{\sigma}(x_n\mid x_{<n})\,dx_n = 1.
\label{eq:appendix_prior_cond_normalized}
\end{equation}
Moreover, for every $t\le n-1$, the causal attention computation that defines the likelihood terms
and the prior conditionals at those times uses only $x_{\le t}$ (strict causality).
\end{assumption}

\begin{remark}
Assumption~\ref{ass:causal_token_lik} is the usual autoregressive decoding factorization.
Assumption~\ref{ass:causal_prior_projective} is the projective (prefix-stable) property implied by strict causality:
extending the length from $n-1$ to $n$ adds only the new conditional at time $n$ and does not alter
the earlier conditionals. In particular, strict-causal attention ensures $\mu_t$ for $t\le n-1$ depends
only on $x_{<t}$ and is unchanged when we append $(x_n,y_n)$.
\end{remark}


\begin{lemma}[Summing out the last token]
\label{lem:sum_out_last_token}
Under Assumption~\ref{ass:causal_token_lik}, for any fixed $n\ge 2$ and any fixed embeddings $x_{1:n}$,
\begin{equation}
\sum_{y_n\in V} P_n(y_{1:n}\mid x_{1:n})
=
P_{n-1}(y_{1:n-1}\mid x_{1:n-1}).
\label{eq:appendix_sum_out_last_token}
\end{equation}
\end{lemma}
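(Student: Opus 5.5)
The plan is to expand the product in Assumption~\ref{ass:causal_token_lik}, isolate the last factor, and use that it is a probability mass function over $V$. Concretely, for fixed $x_{1:n}$ and fixed $y_{1:n-1}$, I will write
\[
P_n(y_{1:n}\mid x_{1:n})
=
\Big(\prod_{t=1}^{n-1} P_n(y_t\mid y_{<t},x_{\le t})\Big)\,P_n(y_n\mid y_{<n},x_{\le n}).
\]
The first bracket does not involve $y_n$, so it pulls out of the sum over $y_n\in V$. The remaining sum $\sum_{y_n\in V}P_n(y_n\mid y_{<n},x_{\le n})$ equals $1$, because each factor is a categorical distribution over $V$ (a softmax of the final hidden state at position $n$). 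The sum is finite since $V$ is finite, so no convergence issues arise.

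Next I will identify the surviving bracket with the $(n-1)$-length likelihood. By the second clause of Assumption~\ref{ass:causal_token_lik}, for $t\le n-1$ the factor $P_n(y_t\mid y_{<t},x_{\le t})$ does not depend on $n$. Hence it coincides with $P_{n-1}(y_t\mid y_{<t},x_{\le t})$. Applying the factorization \eqref{eq:appendix_obs_causal} at length $n-1$ then shows that the product over $t\le n-1$ is exactly $P_{n-1}(y_{1:n-1}\mid x_{1:n-1})$. Since each of these factors depends only on $x_{\le t}\subseteq x_{1:n-1}$, the result is indeed a function of $x_{1:n-1}$ alone, even though we started with $x_{1:n}$ held fixed. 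This gives \eqref{eq:appendix_sum_out_last_token}.

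The only delicate point is this identification step: the equality relies on prefix-stability, meaning that the conditionals at earlier positions are unaffected by appending position $n$. That is precisely what the causal (masked) factorization and the $n$-independence clause encode. I will note explicitly that this is where causality enters. Without it, for example if $x_n$ or $y_n$ could influence earlier factors, the sum would not collapse. The rest of the argument is routine bookkeeping.
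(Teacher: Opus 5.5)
Your proposal is correct and matches the paper's proof step for step. You factor off the last conditional, sum it to one as a categorical distribution over the finite vocabulary $V$, and then use the prefix-stability clause of Assumption~\ref{ass:causal_token_lik} to identify the remaining product with $P_{n-1}(y_{1:n-1}\mid x_{1:n-1})$.
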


\begin{proof}
Using the factorization \eqref{eq:appendix_obs_causal},
\[
P_n(y_{1:n}\mid x_{1:n})
=
\left(\prod_{t=1}^{n-1} P_n(y_t\mid y_{<t},x_{\le t})\right)\,P_n(y_n\mid y_{<n},x_{\le n}).
\]
Summing over $y_n$ and using that $P_n(\cdot\mid y_{<n},x_{\le n})$ is a categorical distribution,
\[
\sum_{y_n\in V} P_n(y_n\mid y_{<n},x_{\le n}) = 1.
\]
Thus,
\[
\sum_{y_n\in V} P_n(y_{1:n}\mid x_{1:n})
=
\prod_{t=1}^{n-1} P_n(y_t\mid y_{<t},x_{\le t}).
\]
By the prefix-stability clause in Assumption~\ref{ass:causal_token_lik}, for $t\le n-1$ these factors coincide with
$P_{n-1}(y_t\mid y_{<t},x_{\le t})$, hence the product equals $P_{n-1}(y_{1:n-1}\mid x_{1:n-1})$.
\end{proof}


\begin{lemma}[Integrating out the last embedding]
\label{lem:integrate_out_last_latent}
Under Assumption~\ref{ass:causal_prior_projective}, for any $n\ge 2$ and any measurable function
$f:\mathbb R^{d(n-1)}\to\mathbb R$ for which the integrals exist,
\begin{equation}
\int_{\mathbb R^{dn}} f(x_{1:n-1})\,p_{\sigma,n}(x_{1:n})\,dx_{1:n}
=
\int_{\mathbb R^{d(n-1)}} f(x_{1:n-1})\,p_{\sigma,n-1}(x_{1:n-1})\,dx_{1:n-1}.
\label{eq:appendix_integrate_out_last_latent}
\end{equation}
\end{lemma}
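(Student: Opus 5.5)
The plan is to substitute the sequential factorization \eqref{eq:appendix_prior_factor} from Assumption~\ref{ass:causal_prior_projective} into the left-hand side of \eqref{eq:appendix_integrate_out_last_latent}. This gives the integrand
\[
f(x_{1:n-1})\,p_{\sigma,n-1}(x_{1:n-1})\,p_\sigma(x_n\mid x_{<n}).
\]
Writing $\mathbb R^{dn}=\mathbb R^{d(n-1)}\times\mathbb R^d$, I would apply Fubini--Tonelli to perform the inner integration over $x_n$ first, with $x_{1:n-1}$ held fixed. Since neither $f(x_{1:n-1})$ nor $p_{\sigma,n-1}(x_{1:n-1})$ depends on $x_n$, both factor out of the inner integral. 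What remains inside is $\int_{\mathbb R^d}p_\sigma(x_n\mid x_{<n})\,dx_n$, and this equals $1$ by the normalization \eqref{eq:appendix_prior_cond_normalized}. The outer integral is then exactly the right-hand side.

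The one step needing care is justifying the interchange of integration order. I would handle it in two cases. First, take $f\ge 0$ measurable: Tonelli applies with no integrability hypothesis, because the integrand is a product of nonnegative measurable functions. Here I use that a density is nonnegative and that the conditional density is jointly measurable in $(x_{<n},x_n)$, which I take to be part of what "conditional density" means in Assumption~\ref{ass:causal_prior_projective}. Second, for general $f$, "the integrals exist" should be read as $\int|f|\,p_{\sigma,n-1}<\infty$. Applying the nonnegative case to $|f|$ shows that $f\,p_{\sigma,n}$ is integrable on $\mathbb R^{dn}$. Fubini then applies, or equivalently one can split $f=f^+-f^-$ and use the nonnegative case on each part.

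Measurability of $p_\sigma(x_n\mid x_{<n})$ is the only potential obstacle, since the conditional is defined through the change of variables. I would argue it directly from the construction: the conditional is $\mathcal N(e_n(x);0,\sigma^2I_d)\,|\det(I_d-\Sigma_nA)|$, which is continuous wherever it is defined. The degeneracy set where it is undefined is closed, so the conditional is Borel measurable after being set to $0$ there. Normalization is simply assumed in the hypothesis, so nothing further is needed.
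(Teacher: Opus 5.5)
Your proposal is correct and is the same argument as the paper's: substitute the factorization \eqref{eq:appendix_prior_factor}, integrate over $x_n$ first with $x_{1:n-1}$ fixed, and collapse the inner integral to $1$ via \eqref{eq:appendix_prior_cond_normalized}. Your Tonelli-then-Fubini justification (nonnegative $f$ first, then integrability of $|f|\,p_{\sigma,n}$) adds rigor that the paper's proof leaves implicit. Your measurability remark is harmless but slightly off: the Gaussian-times-$|\det(I_d-\Sigma_nA)|$ expression is continuous everywhere and simply equals zero on the degeneracy set, so it is not undefined there.
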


\begin{proof}
By the factorization \eqref{eq:appendix_prior_factor},
\[
p_{\sigma,n}(x_{1:n})
=
p_{\sigma,n-1}(x_{1:n-1})\,p_\sigma(x_n\mid x_{<n}).
\]
Therefore,
\[
\int_{\mathbb R^{dn}} f(x_{1:n-1})\,p_{\sigma,n}(x_{1:n})\,dx_{1:n}
=
\int_{\mathbb R^{d(n-1)}}\!\!\left[
f(x_{1:n-1})\,p_{\sigma,n-1}(x_{1:n-1})
\int_{\mathbb R^d} p_\sigma(x_n\mid x_{<n})\,dx_n
\right]dx_{1:n-1}.
\]
Using \eqref{eq:appendix_prior_cond_normalized}, the inner integral equals $1$, giving the claim.
\end{proof}


\subsection{Proof of Theorem~\ref{thm:kolmogorov_consistency}}
\label{app:proof_kolmogorov_consistency}

\begin{theorem}
Assume Assumptions~\ref{ass:causal_token_lik} and \ref{ass:causal_prior_projective}.
Then for all $n\ge 2$ and all $y_{1:n-1}\in V^{n-1}$,
\[
\sum_{y_n\in V} P_n(y_{1:n})
=
P_{n-1}(y_{1:n-1}).
\]
\end{theorem}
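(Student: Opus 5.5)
The plan is to combine the two lemmas just established: sum out the last token first, then integrate out the last embedding. Fix $n\ge 2$ and $y_{1:n-1}\in V^{n-1}$. Starting from the definition \eqref{eq:appendix_marginal_n}, I would write
\[
\sum_{y_n\in V} P_n(y_{1:n})
=\sum_{y_n\in V}\int_{\mathbb R^{dn}} P_n(y_{1:n}\mid x_{1:n})\,p_{\sigma,n}(x_{1:n})\,dx_{1:n}.
\]
Since $V$ is finite, the sum is finite. Each integrand is also nonnegative and bounded by $p_{\sigma,n}$, which is integrable. So the sum and the integral can be exchanged by linearity, with no appeal to Fubini--Tonelli beyond finite additivity.

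After the exchange, Lemma~\ref{lem:sum_out_last_token} converts the inner sum $\sum_{y_n}P_n(y_{1:n}\mid x_{1:n})$ into $P_{n-1}(y_{1:n-1}\mid x_{1:n-1})$. This uses the prefix-stability clause of Assumption~\ref{ass:causal_token_lik}. The resulting integrand is a function $f(x_{1:n-1}) \triangleq P_{n-1}(y_{1:n-1}\mid x_{1:n-1})$ of the first $n-1$ embeddings alone. It is measurable, being a finite product of softmax outputs of continuous maps, and it takes values in $[0,1]$, so its integral against any probability density exists.

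Next, Lemma~\ref{lem:integrate_out_last_latent} applies with this $f$. It replaces $\int f\,p_{\sigma,n}\,dx_{1:n}$ by $\int f\,p_{\sigma,n-1}\,dx_{1:n-1}$, relying on the factorization \eqref{eq:appendix_prior_factor} and the normalization \eqref{eq:appendix_prior_cond_normalized}. The result is exactly the definition of $P_{n-1}(y_{1:n-1})$ in \eqref{eq:appendix_marginal_n}, which closes the argument.

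The only step that needs real care is checking that the hypotheses of Lemma~\ref{lem:integrate_out_last_latent} are satisfied. The requirement that $p_\sigma(x_n\mid x_{<n})$ integrate to one is already packaged into Assumption~\ref{ass:causal_prior_projective}, so the proof itself is short.

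I would nonetheless add a remark on why that assumption holds in the main-text construction. The triangular change of variables fixes $x_{<n}$, so $\mu_n$'s dependence on the past is frozen. The map $x_n\mapsto e_n$ then has Jacobian $I_d-\Sigma_nA$ from Theorem~\ref{thm:jacobian_dispersion}. Normalization is exact provided this map is a global bijection onto $\mathbb R^d$, not merely a local one. Under the Jacobian non-degeneracy assumption this is implicitly assumed, and I would state it as the content of Assumption~\ref{ass:causal_prior_projective} rather than prove it.
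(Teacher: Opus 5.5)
Your proposal is correct and matches the paper's proof step for step. You exchange the finite sum with the integral, collapse the sum over $y_n$ via Lemma~\ref{lem:sum_out_last_token}, and then apply Lemma~\ref{lem:integrate_out_last_latent} with $f(x_{1:n-1})=P_{n-1}(y_{1:n-1}\mid x_{1:n-1})$ to recover $P_{n-1}(y_{1:n-1})$. Your extra checks go beyond what the paper spells out, and both are sound: that $f\in[0,1]$, so the integrals exist, and that normalization of $p_\sigma(x_n\mid x_{<n})$ requires $x_n\mapsto e_n$ to be a global bijection rather than only locally invertible, a requirement the paper folds into Assumption~\ref{ass:causal_prior_projective}.
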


\begin{proof}
Start from the definition \eqref{eq:appendix_marginal_n} and sum over $y_n$:
\begin{align*}
\sum_{y_n\in V} P_n(y_{1:n})
&=
\sum_{y_n\in V} \int_{\mathbb R^{dn}} P_n(y_{1:n}\mid x_{1:n})\,p_{\sigma,n}(x_{1:n})\,dx_{1:n} \\
&=
\int_{\mathbb R^{dn}} \left(\sum_{y_n\in V} P_n(y_{1:n}\mid x_{1:n})\right)\,p_{\sigma,n}(x_{1:n})\,dx_{1:n},
\end{align*}
where we exchanged sum and integral (finite sum, so always valid).
By Lemma~\ref{lem:sum_out_last_token},
\[
\sum_{y_n\in V} P_n(y_{1:n}\mid x_{1:n})
=
P_{n-1}(y_{1:n-1}\mid x_{1:n-1}).
\]
Thus
\[
\sum_{y_n\in V} P_n(y_{1:n})
=
\int_{\mathbb R^{dn}} P_{n-1}(y_{1:n-1}\mid x_{1:n-1})\,p_{\sigma,n}(x_{1:n})\,dx_{1:n}.
\]
Now apply Lemma~\ref{lem:integrate_out_last_latent} with
$f(x_{1:n-1}) = P_{n-1}(y_{1:n-1}\mid x_{1:n-1})$:
\[
\int_{\mathbb R^{dn}} P_{n-1}(y_{1:n-1}\mid x_{1:n-1})\,p_{\sigma,n}(x_{1:n})\,dx_{1:n}
=
\int_{\mathbb R^{d(n-1)}} P_{n-1}(y_{1:n-1}\mid x_{1:n-1})\,p_{\sigma,n-1}(x_{1:n-1})\,dx_{1:n-1}.
\]
The right-hand side is exactly $P_{n-1}(y_{1:n-1})$ by \eqref{eq:appendix_marginal_n} with $n-1$.
\end{proof}


\subsection{Proof of Proposition~\ref{prop:noncausal_breaks_consistency}}
\label{app:proof_noncausal_breaks}

\begin{proposition}
If attention is non-causal so that earlier computations may depend on future positions (e.g.\ the attention
normalization at time $t$ depends on keys/values at positions $>t$), then in general the induced family
$\{P_n(y_{1:n})\}_{n\ge 1}$ need not satisfy Kolmogorov consistency:
$\sum_{y_n} P_n(y_{1:n})$ may differ from $P_{n-1}(y_{1:n-1})$.
\end{proposition}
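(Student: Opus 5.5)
Since the claim is existential ("need not hold"), I will prove it with a single explicit counterexample: a small non-causal model for which $\sum_{y_n}P_n(y_{1:n})\neq P_{n-1}(y_{1:n-1})$.

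\textbf{Locating the failure.} First I will pin down which step of the proof of Theorem~\ref{thm:kolmogorov_consistency} breaks without causality. Lemma~\ref{lem:sum_out_last_token} and Lemma~\ref{lem:integrate_out_last_latent} rest on two prefix-stability clauses. The first is that the factors $P_n(y_t\mid y_{<t},x_{\le t})$ for $t\le n-1$ do not depend on $n$. The second is that the prior factorizes as $p_{\sigma,n}=p_{\sigma,n-1}\,p_\sigma(x_n\mid x_{<n})$. Under non-causal attention, appending position $n$ changes the softmax normalizations at earlier positions. So either the earlier likelihood factors change with $x_n$, or the $n$-dimensional prior no longer marginalizes to the $(n-1)$-dimensional one. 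It suffices to violate one clause in a way that actually changes the token marginal.

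\textbf{The counterexample.} I will take $n=2$ and $d=1$, with vocabulary $V=\{0,1\}$, and keep the prior trivial: $x_1,x_2$ i.i.d.\ $\mathcal N(0,\sigma^2)$, with no attention in the prior. All non-causality goes into the decoder. The decoder head at position $1$ uses bidirectional attention over the available positions, $h_1=\sum_{s\le n}\alpha_{1s}x_s$ with $\alpha_{1s}\propto\exp(c\,x_1x_s)$, and $P(y_1=1\mid h_1)=\mathrm{sigmoid}(\beta h_1)$.

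\textbf{Comparing the two models.} For $n=1$ we have $h_1=x_1$, so $P_1(y_1=1)=\mathbb E[\mathrm{sigmoid}(\beta x_1)]=1/2$ by symmetry. For $n=2$, summing out $y_2$ leaves $\mathbb E[\mathrm{sigmoid}(\beta h_1(x_1,x_2))]$, again by Lemma~\ref{lem:sum_out_last_token}'s normalization argument. Symmetry would also give $1/2$ here, so I will break it by adding a bias in the decoder, $\mathrm{sigmoid}(\beta h_1+b)$. Now $h_1$ is a weighted average of $x_1$ and $x_2$ that is strictly less dispersed than $x_1$. Since $\mathrm{sigmoid}(\beta\,\cdot+b)$ is strictly convex or concave on parts of the line, its expectation generically differs between the two distributions. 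To make this rigorous cheaply, I will take the limit $\sigma\to 0$ (or $\beta$ small) and expand to second order in $\beta$: the discrepancy is $\tfrac12\mathrm{sig}''(b)\beta^2(\mathrm{Var}\,x_1-\mathrm{Var}\,h_1)+o(\beta^2)$, which is nonzero for $b\neq0$ whenever $\mathrm{Var}\,h_1<\mathrm{Var}\,x_1$.

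\textbf{Main obstacle.} The hardest step is showing cleanly that $\mathrm{Var}\,h_1<\mathrm{Var}\,x_1$. I will take $c=0$, so $h_1=(x_1+x_2)/2$ and the variance is halved. This still counts as non-causal, because position $1$ attends to position $2$. With that choice the inequality is immediate, and it exhibits the failure of consistency.
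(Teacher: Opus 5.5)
Your argument is correct, but it takes a different route from the paper. Both proofs break the prefix-stability clause of Assumption~\ref{ass:causal_token_lik}, not the prior factorization. The constructions differ, though.

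\textbf{The paper's construction.} It discards latent embeddings and attention entirely. It posits a length-2 joint on $V=\{0,1\}$ in which $y_1$'s conditional depends on $y_2$: $P_2(y_1=1\mid y_2=1)=0.9$ and $P_2(y_1=1\mid y_2=0)=0.1$. It then sets $P_2(y_2=1)=0.9$ and reads off a $y_1$-marginal of $0.82\neq 0.5=P_1(y_1=1)$. The link to attention is only asserted, as a ``minimal abstraction.''

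\textbf{Your construction.} You stay inside the family defined by \eqref{eq:marginal_tokens}. You use a trivially causal Gaussian prior, a position-1 head that averages over the future embedding, and a biased sigmoid readout. The violation is therefore produced by an actual (uniform, $c=0$) non-causal attention layer.

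\textbf{What each buys.} The paper's version gets exact one-line arithmetic. Yours gets a genuine instance of the mechanism the proposition is about, and it isolates why consistency fails: attending to $x_2$ contracts the law of $h_1$, and a nonlinear head with $b\neq 0$ turns that contraction into a shifted marginal. The cost is a small asymptotic step.

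\textbf{Tightening the Taylor step.} State explicitly that the first-order terms cancel because $\mathbb E x_1=\mathbb E h_1=0$; your $\mathrm{Var}$ terms are really second moments. Also note that the remainder is $O(\beta^3)$, since $\mathrm{sig}'''$ is bounded and Gaussians have finite third absolute moments. The discrepancy is then $\tfrac14\,\mathrm{sig}''(b)\,\beta^2\sigma^2+O(\beta^3)$, which is nonzero for $b\neq 0$ and small $\beta$.

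\textbf{Optional extension.} If you want genuinely token-dependent weights, note that $\mathrm{sig}$ is bounded, so by dominated convergence the discrepancy is continuous in $c$. It therefore persists for small $c\neq 0$.
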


\begin{proof}
We give an explicit counterexample with $n=2$ (failure already at the smallest nontrivial length).
Let $V=\{0,1\}$ and ignore latent embeddings (or take them deterministic) so the only issue is how the
token likelihood depends on sequence length through attention. Define a length-dependent (non-causal)
conditional at time $t=1$ by allowing it to depend on $y_2$:
\begin{equation}
P_2(y_1=1 \mid y_2=1)=0.9,\qquad P_2(y_1=1 \mid y_2=0)=0.1,
\label{eq:appendix_noncausal_dep}
\end{equation}
and let $P_2(y_2=1)=P_2(y_2=0)=1/2$. Define $P_1(y_1=1)=1/2$ for the length-$1$ system.
This is a minimal abstraction of non-causal attention: the distribution of the first position changes when
a future position is present.

Compute the induced length-$2$ marginal for $y_1$ by summing out $y_2$:
\[
\sum_{y_2\in\{0,1\}} P_2(y_1=1,y_2)
=
\sum_{y_2} P_2(y_1=1\mid y_2)\,P_2(y_2)
=
\frac{1}{2}\cdot 0.9 + \frac{1}{2}\cdot 0.1
=
0.5.
\]
In this particular choice it happens to match $P_1(y_1=1)=0.5$; now change the parameters slightly:
take $P_2(y_2=1)=0.9$ and $P_2(y_2=0)=0.1$ while keeping \eqref{eq:appendix_noncausal_dep}.
Then
\[
\sum_{y_2} P_2(y_1=1,y_2)
=
0.9\cdot 0.9 + 0.1\cdot 0.1
=
0.82,
\]
which differs from $P_1(y_1=1)=0.5$.
Thus the family fails the consistency requirement.

This phenomenon is exactly what non-causal attention can induce: earlier-position conditionals depend on
the presence/content of later positions (e.g.\ through unmasked softmax normalization), so the $(n-1)$-prefix
marginal extracted from the length-$n$ system need not match the from-scratch length-$(n-1)$ system.
\end{proof}

\begin{remark}
The proof above uses a length-dependent conditional as a stand-in for any mechanism by which adding a future token
changes the distribution of earlier tokens. In a non-causal self-attention block, the representation at position $t$
can depend on keys/values at positions $>t$, and therefore the categorical distribution at $t$ can change when the
sequence is extended. This violates the prefix-stability clause required in Lemma~\ref{lem:sum_out_last_token}.
\end{remark}


\subsection{Proof of Theorem~\ref{thm:kolmogorov_extension}}
\label{app:proof_kolmogorov_extension}

\begin{theorem}
Under the assumptions of Theorem~\ref{thm:kolmogorov_consistency}, there exists a unique probability
measure $P$ on $(V^\infty,\mathcal F)$ whose finite-dimensional marginals are $\{P_n\}_{n\ge 1}$:
for each $n$ and each $y_{1:n}\in V^n$,
\[
P(\{Y_{1:n}=y_{1:n}\}) = P_n(y_{1:n}).
\]
\end{theorem}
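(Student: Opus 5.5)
The plan is to apply the Kolmogorov extension theorem for a countable product of finite (hence Polish, compact) spaces, or equivalently the Ionescu-Tulcea theorem. Three things must be checked: that each $P_n$ is a probability measure on $(V^n,2^{V^n})$, that the family is projectively consistent, and that the extension is unique.

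\emph{Each $P_n$ is a probability measure.} Nonnegativity is immediate, because $P_n(y_{1:n})$ is the integral of a nonnegative function against a density. For total mass one, I will use Tonelli to exchange the finite sum over $y_{1:n}$ with the integral in \eqref{eq:appendix_marginal_n}. The inner sum $\sum_{y_{1:n}}P_n(y_{1:n}\mid x_{1:n})$ equals $1$ by summing the categorical factors of \eqref{eq:appendix_obs_causal} from $t=n$ down to $t=1$. It then remains to show $\int p_{\sigma,n}=1$. This follows by induction on $n$ from \eqref{eq:appendix_prior_factor} and \eqref{eq:appendix_prior_cond_normalized}: apply Lemma~\ref{lem:integrate_out_last_latent} with $f\equiv 1$ repeatedly, and for the base case note that $p_{\sigma,1}$ is itself a normalized density. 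In the strict-causal model, $x_1=\varepsilon_1$ has no context, so $p_{\sigma,1}$ is the Gaussian density.

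\emph{Consistency.} Theorem~\ref{thm:kolmogorov_consistency} gives $\sum_{y_n}P_n(y_{1:n})=P_{n-1}(y_{1:n-1})$ at the level of atoms. Since $V^{n-1}$ is finite and every set in $2^{V^{n-1}}$ is a finite union of atoms, this yields $P_n\circ\pi_{n,n-1}^{-1}=P_{n-1}$ as measures, where $\pi_{n,n-1}$ is the coordinate projection. Iterating gives consistency for every pair $m\le n$.

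\emph{Existence and uniqueness.} First define a set function on cylinders by $P(C\times\prod_{t>n}V)\triangleq P_n(C)$ for $C\subseteq V^n$. Consistency makes this well defined, since the same cylinder written at a longer length has the same mass, and it is finitely additive on the algebra of cylinders.

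Countable additivity on this algebra is the one substantive step and the main obstacle. I will handle it by compactness. $V^\infty$ is compact in the product topology (Tychonoff), and every cylinder is clopen. So if cylinders $C_k$ decrease to $\emptyset$, the finite intersection property forces $C_k=\emptyset$ for some finite $k$, and $P(C_k)\to 0$ holds trivially. Alternatively, I would cite Kolmogorov's extension theorem for Polish (here finite discrete) spaces directly \citep{Billingsley95}.

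With countable additivity in hand, Carathéodory's extension theorem produces a probability measure on $\mathcal F=\sigma(\text{cylinders})$ satisfying $P(\{Y_{1:n}=y_{1:n}\})=P_n(y_{1:n})$. Uniqueness follows from the $\pi$--$\lambda$ theorem: cylinders form a $\pi$-system generating $\mathcal F$, and any two measures with the same finite-dimensional marginals agree on it.
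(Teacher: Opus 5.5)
Your proof is correct and follows the same skeleton as the paper's proof. Both define a set function on cylinders, use Theorem~\ref{thm:kolmogorov_consistency} to make it well defined, check finite additivity, extend by Carath\'eodory, and get uniqueness from a generating $\pi$-system.

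You differ in two places. First, you verify that each $P_n$ has total mass one, using Tonelli, the categorical factorization, and induction on the prior normalization with the Gaussian base case $p_{\sigma,1}$. The paper simply assumes ``each $P_n$ is a probability measure on $V^n$'' (its Assumption~\ref{ass:causal_prior_projective} does not even state that $p_{\sigma,1}$ is normalized), so your check closes a hypothesis the paper leaves implicit.

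Second, and more substantively, you get countable additivity on the cylinder algebra from compactness of $V^\infty$: cylinders are clopen, so a decreasing sequence of them with empty intersection is eventually empty. The paper's Step~4 instead asserts $\sigma$-additivity because the cylinder algebra is countable and one can ``refine to a common length.'' That justification does not suffice by itself. Countability of an algebra does not imply $\sigma$-additivity: the finite--cofinite algebra on $\mathbb{N}$ is countable and carries a $\{0,1\}$-valued set function that is finitely but not countably additive. Refining to a common length also only handles finitely many pieces, whereas a countable disjoint family of cylinders can involve unboundedly many lengths. What excludes this is exactly your compactness argument: a cylinder written as a countable disjoint union of cylinders has only finitely many nonempty terms.

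So the paper's version is shorter but relies tacitly on the fact you state explicitly, and your argument is the rigorous form of the same construction.
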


\begin{proof}
We verify the hypotheses of the Kolmogorov extension theorem for the projective family $\{P_n\}_{n\ge 1}$.

\textbf{Step 1.}
For each $n$ and each $y_{1:n}\in V^n$, define the cylinder set
\[
C(y_{1:n}) \;\triangleq\; \{ \omega\in V^\infty : \omega_1=y_1,\dots,\omega_n=y_n\}.
\]
Define $\nu$ on cylinders by $\nu(C(y_{1:n})) \triangleq P_n(y_{1:n})$.

\textbf{Step 2.}
If a cylinder is described at two different lengths, consistency ensures the same value.
Indeed, for any fixed $y_{1:n}$,
\[
\nu(C(y_{1:n}))
=
P_n(y_{1:n})
=
\sum_{y_{n+1}\in V} P_{n+1}(y_{1:n},y_{n+1})
=
\sum_{y_{n+1}} \nu(C(y_{1:n},y_{n+1})),
\]
so the mass assigned to a cylinder equals the sum of masses of its disjoint refinements.

\textbf{Step 3.}
The set of finite disjoint unions of cylinders forms an algebra that generates $\mathcal F$.
Using Step 2 (refinement) and the fact that each $P_n$ is a probability measure on the finite set $V^n$,
$\nu$ is finitely additive on this algebra.

\textbf{Step 4.}
Because $V$ is finite, the cylinder algebra is countable, and $\nu$ is automatically $\sigma$-additive
on it (one can check directly by refining to a common length and using $\sigma$-additivity on $V^n$).
Therefore $\nu$ is a pre-measure on a generating algebra of $\mathcal F$.
By Carath\'eodory's extension theorem, $\nu$ extends to a measure $P$ on $(V^\infty,\mathcal F)$.

\textbf{Step 5.}
Uniqueness follows because cylinder sets generate $\mathcal F$ and any two measures agreeing on a generating
$\pi$-system agree on the generated $\sigma$-algebra.

This proves existence and uniqueness of $P$ with the prescribed marginals.
\end{proof}

\begin{remark}
The only requirements are (i) that each $P_n$ is a probability measure on $V^n$ and (ii) the projective consistency relation
\eqref{eq:kolmogorov_consistency}. The role of the latent embedding construction is to \emph{justify} (ii)
for causal transformers via Theorem~\ref{thm:kolmogorov_consistency}.
\end{remark}


\section{Proofs for Section~\ref{sec:deep_hierarchy}}
\label{app:proofs_deep}

\paragraph{Notation.}
Fix a sequence length $T$ and number of layers $L_{\mathrm{layers}}$.
For each layer $\ell\in\{1,\dots,L_{\mathrm{layers}}\}$ and time $t\in\{1,\dots,T\}$, recall the
layerwise residual map
\begin{equation}
e^{(\ell)}_t\!\big(x^{(\ell)},x^{(\ell-1)}\big)
\;\triangleq\;
x^{(\ell)}_t - g^{(\ell)}_t\!\big(x^{(\ell-1)}_{1:t}\big),
\qquad
\varepsilon^{(\ell)}_t = e^{(\ell)}_t.
\label{eq:app_deep_residual}
\end{equation}
Let $x$ denote the concatenation of all layerwise embeddings
$\{x^{(\ell)}_{1:T}\}_{\ell=1}^{L_{\mathrm{layers}}}$, and similarly let
$\varepsilon$ denote the concatenation of all $\{\varepsilon^{(\ell)}_{1:T}\}_{\ell=1}^{L_{\mathrm{layers}}}$.
Define the global residual map
\[
E(x; x^{(0)}) \;=\; \varepsilon,
\]
obtained by stacking \eqref{eq:app_deep_residual} over all $(\ell,t)$.

\subsection{Triangular structure across depth and time}
\label{app:proof_deep_triangular}

\begin{lemma}[Block lower-triangular Jacobian for the deep residual map]
\label{lem:deep_block_triangular}
Order the variables $x^{(\ell)}_t$ lexicographically by $(\ell,t)$ (e.g., increasing $\ell$ and, within each layer, increasing $t$),
and stack $E$ in the same order.
Then the Jacobian $J_{x\mapsto\varepsilon}=\partial E/\partial x$ is block lower-triangular, with diagonal blocks
$\partial e^{(\ell)}_t/\partial x^{(\ell)}_t$.
Consequently,
\begin{equation}
\det J_{x\mapsto\varepsilon}(x)
\;=\;
\prod_{\ell=1}^{L_{\mathrm{layers}}}
\prod_{t=1}^{T}
\det\!\left(\frac{\partial e^{(\ell)}_t}{\partial x^{(\ell)}_t}\right).
\label{eq:app_deep_det_product}
\end{equation}
\end{lemma}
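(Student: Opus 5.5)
The plan is to check the zero pattern of the off-diagonal blocks directly from the functional dependencies in \eqref{eq:app_deep_residual}, and then quote the determinant identity for block triangular matrices, as in Lemma~\ref{lem:block_lower_triangular}. Index the block rows and columns by pairs $(\ell,t)$ in lexicographic order. Block $((\ell,t),(\ell',t'))$ of $J$ is $\partial e^{(\ell)}_t/\partial x^{(\ell')}_{t'}$. We must show it vanishes whenever $(\ell',t')$ comes strictly after $(\ell,t)$, that is, when $\ell'>\ell$, or when $\ell'=\ell$ and $t'>t$.

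\emph{Case $\ell'>\ell$.} The residual $e^{(\ell)}_t = x^{(\ell)}_t - g^{(\ell)}_t(x^{(\ell-1)}_{1:t})$ involves only $x^{(\ell)}_t$ and layer-$(\ell-1)$ variables. It therefore has no dependence on any $x^{(\ell')}$ with $\ell'>\ell$, and the block is zero. (For $\ell=1$, $x^{(0)}$ is a fixed input, not a variable.)

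\emph{Case $\ell'=\ell$, $t'>t$.} Within layer $\ell$, $e^{(\ell)}_t$ depends on current-layer variables only through the explicit $x^{(\ell)}_t$. Since $g^{(\ell)}_t$ reads only $x^{(\ell-1)}$, we get $\partial e^{(\ell)}_t/\partial x^{(\ell)}_{t'} = \delta_{tt'} I_d$, which is zero for $t'>t$. Causality (dependence only on the prefix $x^{(\ell-1)}_{1:t}$) is not even needed for this lexicographic ordering. It becomes relevant only if one wants the ordering by $(t,\ell)$, or a first-stage attention prior whose $g$ reads the current layer; in that case one invokes Lemma~\ref{lem:block_lower_triangular} within the stage.

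So all blocks above the diagonal vanish. The diagonal blocks are exactly $\partial e^{(\ell)}_t/\partial x^{(\ell)}_t$, each square of size $d\times d$. The determinant of a block lower-triangular matrix with square diagonal blocks is the product of the diagonal-block determinants. This follows by Laplace/Schur-complement induction on the number of blocks, or from $\det\begin{pmatrix}B&0\\C&D\end{pmatrix}=\det B\det D$. Applying it gives \eqref{eq:app_deep_det_product}.

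The only delicate point is bookkeeping: the lower blocks $\partial e^{(\ell)}_t/\partial x^{(\ell-1)}_{s}$ are generally nonzero, but they sit below the diagonal and so do not enter the determinant.
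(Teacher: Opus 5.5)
Your proposal is correct and follows essentially the same route as the paper: read off the vanishing of every above-diagonal block from the fact that $e^{(\ell)}_t$ depends only on $x^{(\ell)}_t$ and previous-layer variables, then apply the determinant identity for block lower-triangular matrices. Your side remark that causality is not needed for the lexicographic $(\ell,t)$ ordering, since the within-layer blocks are $\delta_{tt'}I_d$, also agrees with the paper's argument.
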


\begin{proof}
Fix $(\ell,t)$. By definition,
\[
e^{(\ell)}_t = x^{(\ell)}_t - g^{(\ell)}_t(x^{(\ell-1)}_{1:t}).
\]
Thus $e^{(\ell)}_t$ depends on $x^{(\ell)}_t$ directly, and on the previous-layer prefix $x^{(\ell-1)}_{1:t}$,
but it does \emph{not} depend on any $x^{(\ell')}_{t'}$ with $\ell'>\ell$, nor on any $x^{(\ell)}_{t'}$ with $t'\ne t$
(because the $x^{(\ell)}$-dependence is only through the explicit $x^{(\ell)}_t$ term).
Therefore, under the lexicographic ordering, all partial derivatives with respect to ``future'' variables
are zero and the Jacobian is block lower-triangular.
The determinant of a block lower-triangular matrix is the product of determinants of its diagonal blocks,
giving \eqref{eq:app_deep_det_product}.
\end{proof}

\begin{corollary}[Deep log-likelihood decomposition]
\label{cor:deep_loglik_decomp}
Assume the base noises are independent Gaussians
$\varepsilon^{(\ell)}_t\sim\mathcal{N}(0,\sigma_\ell^2 I_d)$.
Whenever $E$ is locally invertible at $x$, the induced log-density satisfies
\begin{align}
\log p(x\mid x^{(0)})
&=
\sum_{\ell=1}^{L_{\mathrm{layers}}}\sum_{t=1}^T
\log \mathcal{N}\!\left(\varepsilon^{(\ell)}_t; 0,\sigma_\ell^2 I_d\right)
+
\log\left|\det J_{x\mapsto\varepsilon}(x)\right| \nonumber\\
&=
-\sum_{\ell,t}\frac{1}{2\sigma_\ell^2}\left\|e^{(\ell)}_t(x)\right\|_2^2
-\sum_{\ell,t} d\log\sigma_\ell
+
\sum_{\ell,t}\log\left|\det\!\left(\frac{\partial e^{(\ell)}_t}{\partial x^{(\ell)}_t}\right)\right|
+\mathrm{const},
\label{eq:app_deep_loglik}
\end{align}
which is the expanded form of \eqref{eq:deep_logdet_decomp}.
\end{corollary}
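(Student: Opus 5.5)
The plan is to derive the expanded form directly from the change-of-variables formula applied to the global residual map $E(\cdot\,;x^{(0)})$, and then use Lemma~\ref{lem:deep_block_triangular} to factor the Jacobian term. First, I would make precise that $x\mapsto\varepsilon=E(x;x^{(0)})$ is a map $\mathbb{R}^{dTL_{\mathrm{layers}}}\to\mathbb{R}^{dTL_{\mathrm{layers}}}$. At any point where $\det J_{x\mapsto\varepsilon}(x)\neq0$, the inverse function theorem gives a local diffeomorphism. The standard change-of-variables formula \citep{PapamakariosNaReMoLa21} then yields
\[
\log p(x\mid x^{(0)})=\log p_\varepsilon\bigl(E(x;x^{(0)})\bigr)+\log\left|\det J_{x\mapsto\varepsilon}(x)\right|.
\]
In fact $E$ is globally a bijection here, since $x^{(\ell)}_t=g^{(\ell)}_t(x^{(\ell-1)}_{1:t})+\varepsilon^{(\ell)}_t$ can be solved forward in $(\ell,t)$ order. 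So the formula is exact rather than only local, and the hypothesis ``whenever $E$ is locally invertible'' is more than enough.

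Second, I would expand the base density. The noises $\varepsilon^{(\ell)}_t$ are independent with law $\mathcal N(0,\sigma_\ell^2 I_d)$, so $\log p_\varepsilon$ is the sum over $(\ell,t)$ of $\log\mathcal N(\varepsilon^{(\ell)}_t;0,\sigma_\ell^2I_d)$, which gives the first displayed line. Each summand equals
\[
-\frac{1}{2\sigma_\ell^2}\|\varepsilon^{(\ell)}_t\|_2^2-d\log\sigma_\ell-\tfrac d2\log(2\pi).
\]
Substituting $\varepsilon^{(\ell)}_t=e^{(\ell)}_t(x)$ produces the squared-error and $-d\log\sigma_\ell$ terms, and the $2\pi$ pieces are absorbed into the constant. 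This mirrors the bookkeeping in the proof of Corollary~\ref{cor:logdet_penalty}.

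Third, I would apply Lemma~\ref{lem:deep_block_triangular}. The determinant factors as $\prod_{\ell,t}\det(\partial e^{(\ell)}_t/\partial x^{(\ell)}_t)$, and taking $\log|\cdot|$ turns the product into the stated sum. Collecting all terms gives \eqref{eq:app_deep_loglik}, and reading it back against \eqref{eq:deep_logdet_decomp} confirms that it is the expanded form.

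The only delicate point is the justification of the change-of-variables step: nonvanishing of the determinant is needed so that every factor $\log|\det(\cdot)|$ is finite. I would handle this by noting that a zero determinant means some diagonal block is singular, which is excluded by hypothesis. I would also remark that in the generic layer map each diagonal block is $I_d$, so the determinant is nonzero automatically there.
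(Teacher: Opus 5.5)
Your proposal is correct and follows the paper's proof step for step. You use change of variables for the stacked residual map $E$, then expand the independent Gaussian base densities into the quadratic and $-d\log\sigma_\ell$ terms, then apply Lemma~\ref{lem:deep_block_triangular} to factor $\det J_{x\mapsto\varepsilon}$ into its diagonal blocks. Your extra observation that $E$ is globally invertible by forward substitution in $(\ell,t)$ order tightens the paper's appeal to local invertibility alone. Note, though, that it relies on each $g^{(\ell)}_t$ depending only on $x^{(\ell-1)}$, so it does not automatically carry over to a token-dependent first stage as in Corollary~\ref{cor:localize_first}.
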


\begin{proof}
Apply the change-of-variables formula:
$\log p(x)=\log p(\varepsilon)+\log|\det(\partial \varepsilon/\partial x)|$
with $\varepsilon=E(x)$.
The Gaussian base density gives the quadratic energy and $-\sum_{\ell,t}d\log\sigma_\ell$ term.
Lemma~\ref{lem:deep_block_triangular} gives the determinant factorization.
\end{proof}

\subsection{Proof of Proposition~\ref{prop:no_logdet_deep}}
\label{app:proof_no_logdet_deep}

\begin{proof}
Fix any layer $\ell\ge 2$ and time $t$.
By assumption, $g^{(\ell)}_t$ depends only on $x^{(\ell-1)}_{1:t}$ and does not depend on $x^{(\ell)}$.
Therefore the residual map at that layer/time is
\[
e^{(\ell)}_t(x^{(\ell)},x^{(\ell-1)}) = x^{(\ell)}_t - g^{(\ell)}_t(x^{(\ell-1)}_{1:t}),
\]
which is affine in $x^{(\ell)}_t$ with coefficient matrix $I_d$.
Hence
\[
\frac{\partial e^{(\ell)}_t}{\partial x^{(\ell)}_t} = I_d,
\]
and $\log\left|\det(\partial e^{(\ell)}_t/\partial x^{(\ell)}_t)\right|=\log|\det(I_d)|=0$.
\end{proof}

\subsection{Proof of Corollary~\ref{cor:localize_first}}
\label{app:proof_localize_first}

\begin{proof}
By Corollary~\ref{cor:deep_loglik_decomp}, the total stability/log-determinant contribution is
\[
\sum_{\ell=1}^{L_{\mathrm{layers}}}\sum_{t=1}^T
\log\left|\det\!\left(\frac{\partial e^{(\ell)}_t}{\partial x^{(\ell)}_t}\right)\right|.
\]
By Proposition~\ref{prop:no_logdet_deep}, every term with $\ell\ge 2$ is zero under the stated conditioning assumption.
Thus, the entire sum reduces to the $\ell=1$ contribution, which is exactly the single-layer stability term
derived in Section~\ref{sec:single_layer_margin} (scalar: $\log|1-a\,\mathrm{Var}_t|$; vector: $\log|\det(I_d-\Sigma_tA)|$).
\end{proof}

\subsection{Compatibility with multi-head attention, residuals, and normalization}
\label{app:proof_mha_residual_ln}

\begin{lemma}[Standard transformer blocks satisfy the conditioning assumption]
\label{lem:block_satisfies_conditioning}
Consider a standard pre-norm transformer block in which the attention weights in layer $\ell$ are computed
from $x^{(\ell-1)}$ (e.g., from $\mathrm{LN}(x^{(\ell-1)})$), and the layer noise enters additively at the
output:
\[
x^{(\ell)}_t = \tilde{x}^{(\ell)}_t + \varepsilon^{(\ell)}_t,
\qquad
\tilde{x}^{(\ell)}_t = g^{(\ell)}_t(x^{(\ell-1)}_{1:t}).
\]
Then $g^{(\ell)}_t$ does not depend on $x^{(\ell)}$, so Proposition~\ref{prop:no_logdet_deep} applies.
\end{lemma}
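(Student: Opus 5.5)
The plan is to unfold the pre-norm block definition \eqref{eq:block_mha_residual}--\eqref{eq:block_mlp_residual} and check, layer by layer, that every sub-computation producing $\tilde{x}^{(\ell)}_t$ takes only previous-layer embeddings as input. Concretely, I will argue by composition of maps. First, $\mathrm{LN}(x^{(\ell-1)}_{1:T})$ is a position-wise function of $x^{(\ell-1)}$ alone. Next, $\mathrm{MHA}^{(\ell)}$ is applied to that normalized input. For each head, the queries, keys, and values are linear images of $\mathrm{LN}(x^{(\ell-1)})$. The attention weights are a softmax of query--key logits, and by the conditioning convention stated in Section~\ref{subsec:deep_mha_residual_ln} they are computed from $\mathrm{LN}(x^{(\ell-1)})$. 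Causal masking restricts the output at position $t$ to the prefix $x^{(\ell-1)}_{1:t}$. Concatenating the heads and applying the output projection preserves this dependence. Hence $u^{(\ell)}_t$ is a function of $x^{(\ell-1)}_{1:t}$ only.

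The second step handles the MLP sublayer. $\mathrm{LN}(u^{(\ell)}_t)$ and $\mathrm{MLP}^{(\ell)}$ act position-wise on $u^{(\ell)}_t$, so $\tilde{x}^{(\ell)}_t = u^{(\ell)}_t + \mathrm{MLP}^{(\ell)}(\mathrm{LN}(u^{(\ell)}_t))$ is again a function of $x^{(\ell-1)}_{1:t}$ only. We therefore define $g^{(\ell)}_t(x^{(\ell-1)}_{1:t}) \triangleq \tilde{x}^{(\ell)}_t$. The key observation is that the noise $\varepsilon^{(\ell)}_t$, and hence $x^{(\ell)}$, enters only after $\tilde{x}^{(\ell)}_t$ has been formed, through the additive equation \eqref{eq:block_add_noise}. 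No quantity appearing inside $g^{(\ell)}$ is ever evaluated at $x^{(\ell)}$.

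With that established, $g^{(\ell)}_t$ does not depend on $x^{(\ell)}$. The residual $e^{(\ell)}_t = x^{(\ell)}_t - g^{(\ell)}_t(x^{(\ell-1)}_{1:t})$ then has exactly the form assumed in Proposition~\ref{prop:no_logdet_deep}, and we invoke that proposition directly to get $\partial e^{(\ell)}_t/\partial x^{(\ell)}_t = I_d$.

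The only step needing care, and the one I expect to be the main obstacle, is making explicit that the conditioning convention is an assumption of the lemma rather than something automatic. If the attention queries were built from $x^{(\ell)}$, as in an implicit or self-consistent layer, the argument would fail, as the paper notes after Proposition~\ref{prop:no_logdet_deep}. I would state this clearly in the proof. I would also remark that residual connections and normalization introduce no hidden dependence on $x^{(\ell)}$, since each is a deterministic function of quantities already shown to depend only on $x^{(\ell-1)}_{1:t}$.
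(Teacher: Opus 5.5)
Your proposal is correct and follows essentially the same argument as the paper: $\tilde{x}^{(\ell)}_t$ is computed entirely from previous-layer quantities through the LN/MHA/MLP/residual composition, and $x^{(\ell)}$ is formed only afterward by adding $\varepsilon^{(\ell)}_t$, so $g^{(\ell)}_t$ does not depend on $x^{(\ell)}$ and Proposition~\ref{prop:no_logdet_deep} applies. Your version just spells out the sublayer-by-sublayer and causal-prefix bookkeeping that the paper compresses into a single sentence.
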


\begin{proof}
By construction, $\tilde{x}^{(\ell)}_t$ is computed entirely from previous-layer quantities
($x^{(\ell-1)}$ through LN/MHA/MLP/residual composition). The additive-noise step then defines
$x^{(\ell)}_t$ by adding $\varepsilon^{(\ell)}_t$ after $\tilde{x}^{(\ell)}_t$ is fixed.
Thus $g^{(\ell)}_t$ is a deterministic function of $x^{(\ell-1)}_{1:t}$ and does not depend on $x^{(\ell)}$.
\end{proof}

\begin{remark} If, instead, a layer is defined implicitly so that its attention weights depend on $x^{(\ell)}$ (the same layer variable being generated), then the residual map is no longer affine in $x^{(\ell)}_t$.
In that case $\partial e^{(\ell)}_t/\partial x^{(\ell)}_t$ need not be $I_d$ and an additional log-determinant term may appear, with an analogous margin-to-degeneracy interpretation.
\end{remark}

\section{Experimental Results Continued}
\label{app:additional_experiments}

\subsection{Perturbation robustness}
\label{app:appendix_robustness}

This appendix records the full robustness results summarized in
Section~\ref{sec:exp_robustness}. The main text shows the
\texttt{wikitext-103} robustness figure and a compact two-dataset endpoint
table. Here we provide the corresponding \texttt{simplebooks-92} figure and
the full numeric results for both datasets. As in the main text, we report
results for two embedding-space perturbation families: unstructured Gaussian
noise and structured Gaussian drift.

\begin{figure}[H]
\centering
\includegraphics[width=\linewidth]{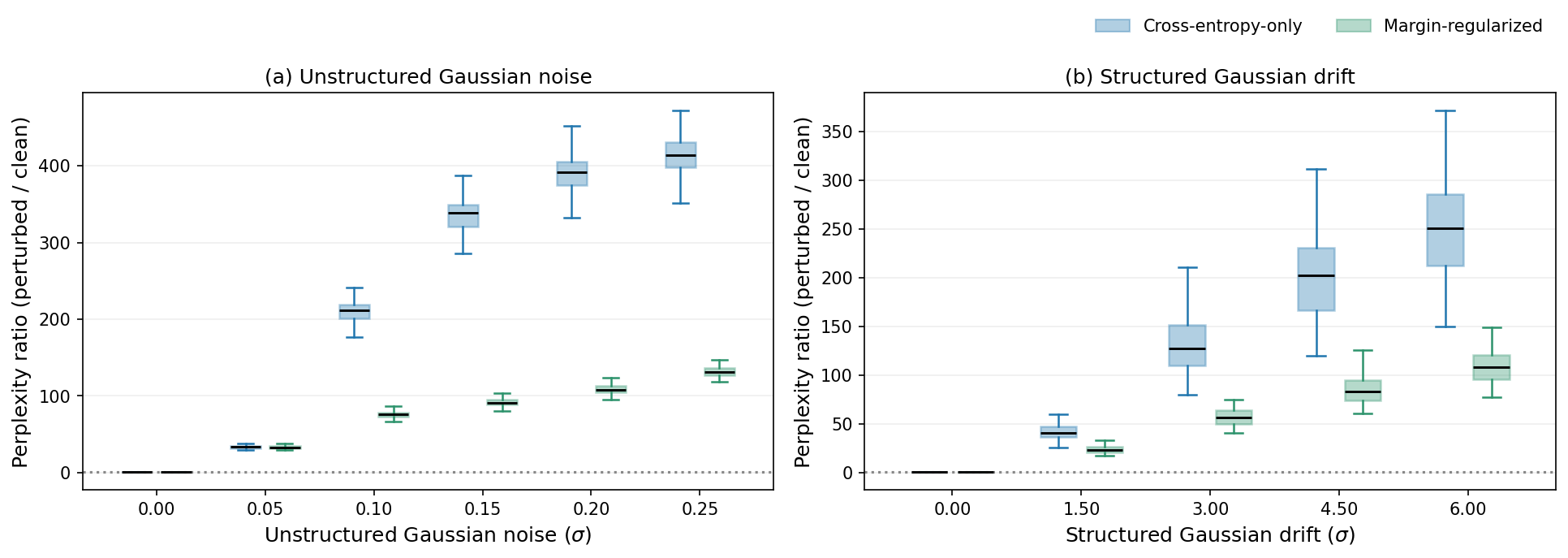}
\caption{\textbf{Robustness on \texttt{simplebooks-92}: relative degradation under unstructured Gaussian noise and structured Gaussian drift.}
(a)~Unstructured Gaussian noise: grouped boxplots of perturbed perplexity / clean perplexity across 100 validation subsamples for $\sigma \in \{0, 0.05, 0.10, 0.15, 0.20, 0.25\}$. (b)~Structured Gaussian drift: grouped boxplots for $\sigma \in \{0, 1.5, 3.0, 4.5, 6.0\}$. The same qualitative pattern as in the main text appears: a strong advantage for margin regularization under both perturbation families.}
\label{fig:appendix_robustness_simplebooks}
\end{figure}

\begin{table}[H]
\centering
\caption{Unstructured Gaussian noise robustness on \texttt{wikitext-103}.
Entries report the median perplexity ratio (perturbed perplexity / clean perplexity); each estimate is followed by its empirical 95\% interval on the line below.}
\label{tab:appendix_robustness_wikitext_gaussian}
\small
\begin{tabular}{lcc}
\toprule
\textbf{Level} & \textbf{Cross-entropy-only} & \textbf{Margin-regularized} \\
\midrule
Clean & 1.00 & 1.00 \\
$\sigma{=}0.05$ & 51.10 & 55.75 \\
\multicolumn{1}{r}{\footnotesize [95\%]} & \footnotesize [44.71, 62.41] & \footnotesize [48.01, 70.16] \\
$\sigma{=}0.10$ & 311.32 & 116.99 \\
\multicolumn{1}{r}{\footnotesize [95\%]} & \footnotesize [271.24, 401.72] & \footnotesize [99.58, 152.81] \\
$\sigma{=}0.15$ & 525.27 & 180.72 \\
\multicolumn{1}{r}{\footnotesize [95\%]} & \footnotesize [452.52, 665.51] & \footnotesize [152.88, 236.66] \\
$\sigma{=}0.20$ & 681.58 & 269.18 \\
\multicolumn{1}{r}{\footnotesize [95\%]} & \footnotesize [579.39, 854.15] & \footnotesize [228.48, 351.90] \\
$\sigma{=}0.25$ & 775.81 & 364.92 \\
\multicolumn{1}{r}{\footnotesize [95\%]} & \footnotesize [661.68, 974.21] & \footnotesize [308.30, 473.11] \\
\bottomrule
\end{tabular}
\end{table}

\begin{table}[H]
\centering
\caption{Structured Gaussian drift robustness on \texttt{wikitext-103}.
Entries report the median perplexity ratio (perturbed perplexity / clean perplexity); each estimate is followed by its empirical 95\% interval on the line below.}
\label{tab:appendix_robustness_wikitext_structured}
\small
\begin{tabular}{lcc}
\toprule
\textbf{Level} & \textbf{Cross-entropy-only} & \textbf{Margin-regularized} \\
\midrule
Clean & 1.00 & 1.00 \\
$\sigma{=}1.50$ & 59.48 & 50.88 \\
\multicolumn{1}{r}{\footnotesize [95\%]} & \footnotesize [38.07, 86.18] & \footnotesize [35.12, 83.29] \\
$\sigma{=}3.00$ & 201.56 & 135.75 \\
\multicolumn{1}{r}{\footnotesize [95\%]} & \footnotesize [131.63, 312.16] & \footnotesize [86.22, 219.54] \\
$\sigma{=}4.50$ & 329.51 & 207.99 \\
\multicolumn{1}{r}{\footnotesize [95\%]} & \footnotesize [226.48, 509.38] & \footnotesize [133.55, 310.42] \\
$\sigma{=}6.00$ & 419.11 & 273.24 \\
\multicolumn{1}{r}{\footnotesize [95\%]} & \footnotesize [304.50, 674.97] & \footnotesize [180.95, 390.50] \\
\bottomrule
\end{tabular}
\end{table}

\begin{table}[H]
\centering
\caption{Unstructured Gaussian noise robustness on \texttt{simplebooks-92}.
Entries report the median perplexity ratio (perturbed perplexity / clean perplexity); each estimate is followed by its empirical 95\% interval on the line below.}
\label{tab:appendix_robustness_simplebooks_gaussian}
\small
\begin{tabular}{lcc}
\toprule
\textbf{Level} & \textbf{Cross-entropy-only} & \textbf{Margin-regularized} \\
\midrule
Clean & 1.00 & 1.00 \\
$\sigma{=}0.05$ & 33.41 & 32.82 \\
\multicolumn{1}{r}{\footnotesize [95\%]} & \footnotesize [29.32, 36.88] & \footnotesize [29.71, 37.11] \\
$\sigma{=}0.10$ & 212.11 & 75.98 \\
\multicolumn{1}{r}{\footnotesize [95\%]} & \footnotesize [187.51, 233.99] & \footnotesize [67.99, 85.44] \\
$\sigma{=}0.15$ & 338.38 & 90.81 \\
\multicolumn{1}{r}{\footnotesize [95\%]} & \footnotesize [301.94, 379.18] & \footnotesize [82.11, 103.22] \\
$\sigma{=}0.20$ & 391.16 & 108.22 \\
\multicolumn{1}{r}{\footnotesize [95\%]} & \footnotesize [353.46, 440.16] & \footnotesize [98.57, 123.29] \\
$\sigma{=}0.25$ & 413.45 & 130.91 \\
\multicolumn{1}{r}{\footnotesize [95\%]} & \footnotesize [372.86, 462.31] & \footnotesize [118.78, 148.52] \\
\bottomrule
\end{tabular}
\end{table}

\begin{table}[H]
\centering
\caption{Structured Gaussian drift robustness on \texttt{simplebooks-92}.
Entries report the median perplexity ratio (perturbed perplexity / clean perplexity); each estimate is followed by its empirical 95\% interval on the line below.}
\label{tab:appendix_robustness_simplebooks_structured}
\small
\begin{tabular}{lcc}
\toprule
\textbf{Level} & \textbf{Cross-entropy-only} & \textbf{Margin-regularized} \\
\midrule
Clean & 1.00 & 1.00 \\
$\sigma{=}1.50$ & 41.05 & 23.25 \\
\multicolumn{1}{r}{\footnotesize [95\%]} & \footnotesize [31.08, 58.49] & \footnotesize [18.62, 30.14] \\
$\sigma{=}3.00$ & 127.55 & 56.74 \\
\multicolumn{1}{r}{\footnotesize [95\%]} & \footnotesize [84.59, 214.33] & \footnotesize [44.73, 74.50] \\
$\sigma{=}4.50$ & 202.22 & 83.00 \\
\multicolumn{1}{r}{\footnotesize [95\%]} & \footnotesize [124.10, 357.32] & \footnotesize [63.80, 119.43] \\
$\sigma{=}6.00$ & 250.71 & 107.96 \\
\multicolumn{1}{r}{\footnotesize [95\%]} & \footnotesize [158.38, 455.65] & \footnotesize [83.02, 155.03] \\
\bottomrule
\end{tabular}
\end{table}

\subsection{Choosing the margin penalty}
\label{app:appendix_lambda}

This appendix records the full $\lambda_m$ results summarized in
Section~\ref{sec:exp_lambda}. The main text shows the
\texttt{wikitext-103} figure and a compact two-dataset comparison over the
three most relevant settings. Here we add the corresponding
\texttt{simplebooks-92} figure and provide the full two-dataset results for all
five trained values,
\[
\lambda_m \in \{0.00, 0.02, 0.05, 0.10, 0.25\},
\]
including both predictive/robustness metrics and the two geometric metrics (contextual embedding error and effective support size).
As in the main text, effective support size is reported only when a trained prior exists.
Thus the $\lambda_m{=}0.00$ column omits that metric; the corresponding no-prior geometry is given by the random-initialized baseline in Section~\ref{sec:exp_support_tokens}.

\begin{figure}[H]
\centering
\includegraphics[width=\linewidth]{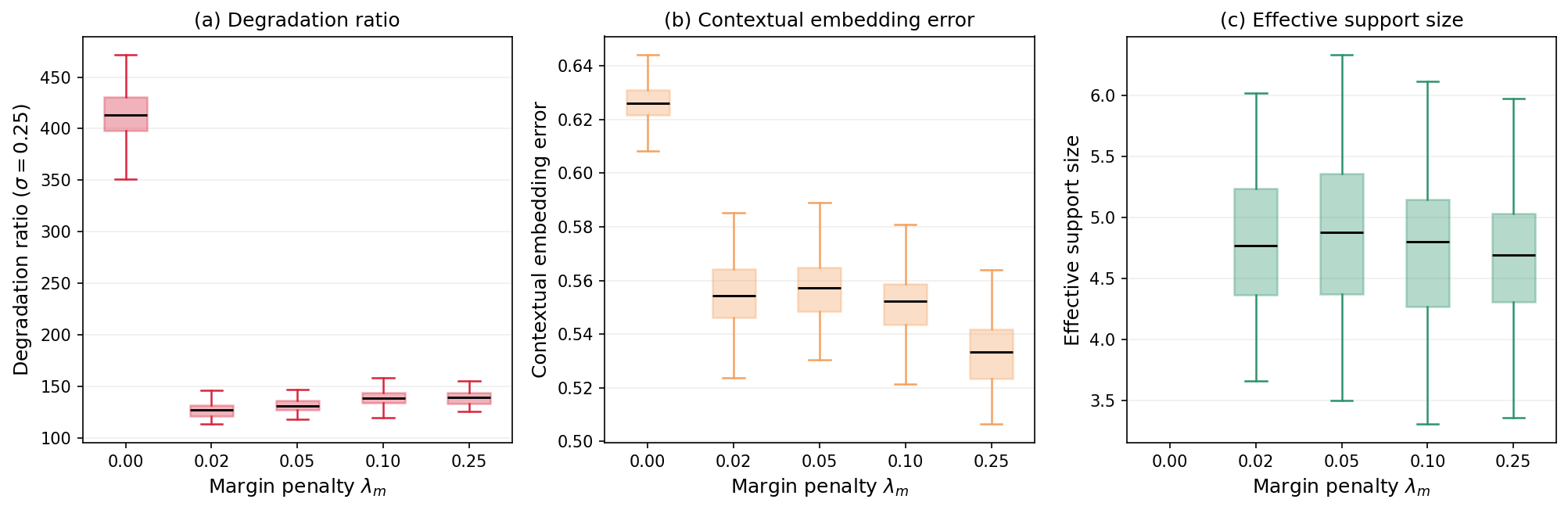}
\caption{\textbf{Selecting $\lambda_m$ on \texttt{simplebooks-92}.}
Grouped boxplots over 100 validation subsamples for
$\lambda_m \in \{0.00, 0.02, 0.05, 0.10, 0.25\}$.
(a)~Degradation ratio at fixed unstructured Gaussian noise $\sigma{=}0.25$.
(b)~Contextual embedding error.
(c)~Effective support size.}
\label{fig:appendix_lambda_simplebooks}
\end{figure}

\begin{table}[H]
\centering
\caption{Regularization metrics on \texttt{wikitext-103}.
Degradation ratio is the perturbed perplexity / clean perplexity at fixed unstructured Gaussian noise $\sigma{=}0.25$.
Contextual embedding error is the normalized squared distance between the true token embedding and the embedding predicted from the model's next-token distribution given the preceding context.
Effective support size is the entropy-based effective number of support tokens under the learned prior.
It is not reported at $\lambda_m{=}0.00$ because no prior is trained
there.
Each estimate is followed by its empirical 95\% interval on the line below.}
\label{tab:appendix_lambda_wikitext}
\small
\begin{tabular}{lcccc}
\toprule
\shortstack[c]{\textbf{$\lambda_m$}\\\mbox{}} & \shortstack[c]{\strut\textbf{Clean}\\\textbf{perplexity}\strut} & \shortstack[c]{\strut\textbf{Degradation}\\\textbf{ratio}\strut} & \shortstack[c]{\strut\textbf{Contextual}\\\textbf{embedding error}\strut} & \shortstack[c]{\strut\textbf{Effective}\\\textbf{support size}\strut} \\
\midrule
0.00 & 15.68 & 775.81 & 0.608 & --- \\
\multicolumn{1}{r}{\footnotesize [95\%]} & \footnotesize [12.40, 18.10] & \footnotesize [661.68, 974.21] & \footnotesize [0.567, 0.630] & \footnotesize --- \\
0.02 & 15.74 & 328.51 & 0.583 & 2.75 \\
\multicolumn{1}{r}{\footnotesize [95\%]} & \footnotesize [12.53, 18.15] & \footnotesize [273.81, 424.92] & \footnotesize [0.550, 0.614] & \footnotesize [2.06, 3.77] \\
0.05 & 15.71 & 364.92 & 0.596 & 3.10 \\
\multicolumn{1}{r}{\footnotesize [95\%]} & \footnotesize [12.50, 18.12] & \footnotesize [308.30, 473.11] & \footnotesize [0.563, 0.625] & \footnotesize [2.13, 4.02] \\
0.10 & 15.81 & 381.28 & 0.581 & 4.43 \\
\multicolumn{1}{r}{\footnotesize [95\%]} & \footnotesize [12.78, 18.27] & \footnotesize [320.68, 473.21] & \footnotesize [0.551, 0.609] & \footnotesize [3.37, 5.59] \\
0.25 & 15.83 & 393.06 & 0.583 & 3.48 \\
\multicolumn{1}{r}{\footnotesize [95\%]} & \footnotesize [12.82, 18.14] & \footnotesize [337.46, 491.80] & \footnotesize [0.552, 0.612] & \footnotesize [2.50, 4.80] \\
\bottomrule
\end{tabular}
\end{table}

\begin{table}[H]
\centering
\caption{Regularization metrics on \texttt{simplebooks-92}. Degradation
ratio, contextual embedding error, and effective support size are defined as
in Table~\ref{tab:appendix_lambda_wikitext}. Each estimate is followed by its
empirical 95\% interval on the line below.}
\label{tab:appendix_lambda_simplebooks}
\small
\begin{tabular}{lcccc}
\toprule
\shortstack[c]{\textbf{$\lambda_m$}\\\mbox{}} & \shortstack[c]{\strut\textbf{Clean}\\\textbf{perplexity}\strut} & \shortstack[c]{\strut\textbf{Degradation}\\\textbf{ratio}\strut} & \shortstack[c]{\strut\textbf{Contextual}\\\textbf{embedding error}\strut} & \shortstack[c]{\strut\textbf{Effective}\\\textbf{support size}\strut} \\
\midrule
0.00 & 10.67 & 413.45 & 0.626 & --- \\
\multicolumn{1}{r}{\footnotesize [95\%]} & \footnotesize [9.76, 11.76] & \footnotesize [372.86, 462.31] & \footnotesize [0.610, 0.641] & \footnotesize --- \\
0.02 & 10.75 & 127.30 & 0.554 & 4.77 \\
\multicolumn{1}{r}{\footnotesize [95\%]} & \footnotesize [9.82, 11.71] & \footnotesize [114.70, 142.45] & \footnotesize [0.532, 0.580] & \footnotesize [3.83, 5.85] \\
0.05 & 10.67 & 130.91 & 0.557 & 4.88 \\
\multicolumn{1}{r}{\footnotesize [95\%]} & \footnotesize [9.81, 11.67] & \footnotesize [118.78, 148.52] & \footnotesize [0.534, 0.586] & \footnotesize [3.83, 6.15] \\
0.10 & 10.74 & 138.09 & 0.552 & 4.80 \\
\multicolumn{1}{r}{\footnotesize [95\%]} & \footnotesize [9.79, 11.70] & \footnotesize [125.11, 153.33] & \footnotesize [0.530, 0.579] & \footnotesize [3.75, 5.77] \\
0.25 & 10.71 & 139.22 & 0.533 & 4.69 \\
\multicolumn{1}{r}{\footnotesize [95\%]} & \footnotesize [9.89, 11.73] & \footnotesize [127.26, 154.09] & \footnotesize [0.508, 0.562] & \footnotesize [3.81, 5.81] \\
\bottomrule
\end{tabular}
\end{table}

\end{document}